\documentclass{article}

\PassOptionsToPackage{numbers, compress}{natbib}
\usepackage[preprint]{neurips_2026}  % uncomment for arXiv

\usepackage[utf8]{inputenc} % allow utf-8 input
\usepackage[T1]{fontenc}    % use 8-bit T1 fonts
\usepackage[colorlinks=true,linkcolor={blue!70!black},citecolor={blue!70!black},urlcolor={blue!70!black},hypertexnames=false]{hyperref}
\usepackage{url}            % simple URL typesetting
\usepackage{booktabs}       % professional-quality tables
\usepackage{amsfonts}       % blackboard math symbols
\usepackage{nicefrac}       % compact symbols for 1/2, etc.
\usepackage{microtype}      % microtypography
\usepackage{xcolor}         % colors
\usepackage{graphicx}       % includegraphics
\usepackage{subcaption}     % subfigures with (a), (b) labels
\usepackage{amsmath,amssymb,amsthm} % math
\usepackage{algorithm}
\usepackage{algorithmic}

\newtheorem{theorem}{Theorem}

\newtheorem{proposition}[theorem]{Proposition}
\newtheorem{corollary}[theorem]{Corollary}
\theoremstyle{definition}

\theoremstyle{remark}

\title{Stochastic Attention via Langevin Dynamics on the Modern Hopfield Energy}

\author{%
  Abdulrahman Alswaidan and Jeffrey D. Varner \\
  R.F. Smith School of Chemical and Biomolecular Engineering\\
  Cornell University, Ithaca, NY 14850 \\
  \texttt{\{aa2725, jdv27\}@cornell.edu} \\
}

\begin{document}

\maketitle

\begin{abstract}
Attention heads retrieve: given a query, they return a weighted average of stored values. We showed that this computation is one step of gradient descent on the modern Hopfield energy, and that Langevin sampling from the corresponding Boltzmann distribution yielded \emph{stochastic attention}, a training-free sampler controlled by a single temperature parameter. Lowering the temperature gave exact retrieval; raising it gave open-ended generation. Because the energy gradient equals the attention map, no score network, training loop, or learned model was required, making the approach particularly suited to the low-data regime where learned generative models are starved of training signal. We derived an entropy inflection condition that identified the retrieval-to-generation transition temperature for any memory geometry and validated the sampler on five domains spanning two orders of magnitude in dimension. A single Boolean mask on the attention softmax, identical to the causal mask used in transformers but applied along the memory axis rather than the sequence axis, turned the sampler into a zero-shot class-conditional generator on Olivetti faces with no retraining and no learned classifier. On MNIST digit images, stochastic attention produced samples that were markedly more novel and more diverse than the best learned baseline while matching a Metropolis-corrected gold standard. On protein sequences from a small Pfam family, the generation regime preserved amino acid composition far more faithfully than a variational autoencoder at matched novelty, indicating that the training-free score function retained family-level fidelity that learned models lost. A denoising diffusion baseline failed across all memory sizes tested, producing samples indistinguishable from isotropic noise. The approach required no architectural changes to the underlying attention mechanism.
\end{abstract}

\section{Introduction}
% introduction.tex -- Introduction
% Three-paragraph structure: (1) problem, (2) landscape, (3) contribution.

Attention is the central computational primitive of modern deep learning~\cite{vaswaniAttentionAllYou2017}. Given a query, it computes a softmax-weighted average of stored values, a powerful operation but a fundamentally \emph{deterministic} one. The same query always produces the same output, so attention retrieves but does not generate. Yet generation from a structured memory is precisely what many downstream tasks require: producing novel but plausible continuations, interpolating between stored prototypes, or exploring the space of patterns consistent with partial evidence. This limitation is sharpest in the low-data regime (small protein families, rare event catalogs, specialized image collections), where the memory bank contains tens to hundreds of examples, far too few to train a score network or diffusion model, yet enough to define a structured energy landscape. A natural question arises: \emph{can the attention mechanism itself be made stochastic, in a principled way, so that it samples from the space of memories rather than merely returning their weighted average?}

The two pieces of an answer have been developed separately and never combined. On one side, modern Hopfield networks~\cite{ramsauerHopfieldNetworksAll2021}, building on classical associative memory~\cite{hopfieldNeuralNetworksPhysical1982} and dense associative memories~\cite{krotovDenseAssociativeMemory2016,krotovLargeAssociativeMemory2021}, have revealed that each attention head implicitly performs gradient descent on a smooth, confining energy whose minima are stored patterns. On the other side, Langevin dynamics~\cite{robertsTweedieExponentialConvergence1996,wellingBayesianLearningStochastic2011,durmusNonasymptoticAnalysisUnadjusted2017} converts any such energy into a sampler for the corresponding Boltzmann distribution by adding calibrated noise to the gradient update. Energy-based models~\cite{lecunTutorialEnergyBased2006,songHowTrainEnergyBased2021} and score-based diffusion methods~\cite{songGenerativeModelingEstimating2019,songScoreBasedGenerativeModeling2021} have demonstrated the power of Langevin-type sampling for generation, but rely on black-box neural networks whose scores must be learned. The Energy Transformer~\cite{hooverEnergyTransformer2024} brings Hopfield energies into a deep architecture, yet remains a discriminative model that descends the energy rather than sampling from it. The classical retrieval-generation duality, in which Hopfield networks retrieve while Boltzmann machines sample over the same energy~\cite{ackleyLearningAlgorithmBoltzmann1985}, has not been lifted to the modern continuous setting.

In this paper, we show that applying the unadjusted Langevin algorithm to the modern Hopfield energy yields a \emph{stochastic attention} (SA) update: a contraction toward the origin, a softmax attention pull toward stored memories, and an isotropic Gaussian perturbation governed by a temperature parameter. The inverse temperature $\beta$ interpolates between exact retrieval ($\beta\to\infty$) and open-ended generation ($\beta\to 0$), requiring no learned score network, no training loop, and no contrastive objective. Because the Hopfield energy gradient is exactly the identity minus the attention map, every step costs the same as a single attention head; in particular, $\mathbf{X}$ can be the key matrix of any pretrained attention layer, making stochastic attention a zero-shot stochastic decoding layer that requires no retraining. The energy's analytic structure (smoothness, Lipschitz gradients, quadratic confinement) delivers convergence guarantees unavailable to generic energy-based models. The contribution lies in the synthesis and the analysis it enables. The closed-form score function eliminates the score matching or contrastive divergence required by standard energy-based and score-based generative models. Beyond this, by analyzing the Hopfield energy \emph{as a sampling target} rather than for retrieval alone, we derive an entropy inflection condition (Proposition~\ref{prop:entropy-derivative}) that identifies the critical $\beta^*$ at which the sampler transitions from retrieval to generation, with a scaling law $\beta^*\sim\sqrt{d}$ yielding a dimension-dependent signal-to-noise ratio (SNR) criterion for temperature selection. Finally, we characterize the smoothness, dissipativity, and condition number of the energy as a function of $\beta$ and $K/d$ for sampling guarantees (Proposition~\ref{prop:regularity}). In practice, the same algorithm realized two operating regimes, \emph{structured retrieval} at high $\beta$ and \emph{genuine generation} at low $\beta$, producing, e.g., $6.9{\times}$ lower amino acid composition divergence than a VAE on protein sequences at matched novelty (Table~\ref{tab:protein-main}), despite using zero training.

\section{Related Work}
% relatedwork.tex -- Related Work
The classical Hopfield network~\cite{hopfieldNeuralNetworksPhysical1982} stores binary patterns as local minima of a quadratic energy; retrieval proceeds by gradient descent but the storage capacity scales only as $\mathcal{O}(d)$~\cite{amitStoringInfiniteNumbers1985}. Dense associative memories~\cite{krotovDenseAssociativeMemory2016,krotovLargeAssociativeMemory2021} replace the quadratic interaction with higher-order polynomials, breaking the linear barrier. At the top of this hierarchy, the log-sum-exp energy of modern Hopfield networks~\cite{ramsauerHopfieldNetworksAll2021} achieves exponential capacity and is equivalent to transformer attention: each head performs one step of gradient descent on the energy. All of these works focus on \emph{retrieval}; sampling from the Boltzmann distribution of the same energy is not explored. The duality between retrieval and generation dates to the Boltzmann machine~\cite{ackleyLearningAlgorithmBoltzmann1985}, which shares the Hopfield energy but samples via Gibbs dynamics over binary states. Our formulation lifts this duality to the continuous modern Hopfield setting where Langevin dynamics scales naturally to high dimensions.

Energy-based models (EBMs) define $p(\mathbf{x})\propto\exp(-E(\mathbf{x}))$ with a neural-network energy trained by contrastive divergence~\cite{hintonTrainingProductsExperts2002}, score matching~\cite{hyvarinen2005estimation}, or noise contrastive estimation~\cite{gutmannHyvaerinenNoiseContrastive2010,songHowTrainEnergyBased2021}. Because the energy is a black box, its gradient must be learned. By contrast, the Hopfield energy has a closed-form gradient (identity minus softmax attention), requires no training, and provides analytic guarantees (smoothness, Lipschitz gradients, quadratic confinement). The Energy Transformer~\cite{hooverEnergyTransformer2024} brings Hopfield energies into a deep architecture for discriminative tasks but does not sample from the Boltzmann distribution. Score-based generative models~\cite{songGenerativeModelingEstimating2019,songScoreBasedGenerativeModeling2021} and diffusion models~\cite{hoDenoisingDiffusionProbabilistic2020,sohlDicksteinDeepUnsupervisedLearning2015} achieve remarkable sample quality but rely on a learned score network; our score is exact and analytic, at the cost of being restricted to a fixed memory $\mathbf{X}$. Normalizing flows~\cite{rezendeMohamed2015} construct generative models via learned invertible transformations but, like diffusion models, require training data and a learned bijection without exposing a closed-form score over a fixed memory. Deng et al.~\cite{dengLatentAlignment2018} introduce stochasticity into attention by replacing softmax with a latent variable trained via variational inference to model alignment uncertainty; our stochasticity arises from Langevin dynamics on a fixed energy with no variational objective.

\section{Background}
% background.tex -- Background / Preliminaries

We build on three ideas whose intersection has not been explored: attention as a computational primitive, its reinterpretation as energy minimization, and Langevin dynamics as the tool that converts any energy landscape into a sampler. The transformer~\cite{vaswaniAttentionAllYou2017} computes attention via queries $\mathbf{Q}$, keys $\mathbf{K}$, and values $\mathbf{V}$:
\begin{equation}\label{eq:sdp-attention}
    \mathrm{Attention}(\mathbf{Q},\mathbf{K},\mathbf{V})
    = \operatorname{softmax}\!\left(\frac{\mathbf{Q}\mathbf{K}^{\top}}{\sqrt{d}}\right)\mathbf{V},
\end{equation}
where $d$ is the key dimension and the softmax is applied row-wise. Each output row is a convex combination of value vectors, with weights determined by query-key similarity. This operation is \emph{deterministic}: given the same query, it always returns the same weighted average. The deterministic input-output map is exactly what limits attention to retrieval; introducing stochasticity in a principled way requires identifying the energy landscape that attention is implicitly descending so that calibrated noise can be added to the gradient. We first identify that landscape, then convert it into a sampler.

The classical Hopfield network~\cite{hopfieldNeuralNetworksPhysical1982} stores $K$ binary patterns $\mathbf{m}_1,\dots,\mathbf{m}_K\in\{-1,+1\}^d$ via the quadratic energy
\begin{equation}\label{eq:classical-hopfield}
    E_{\mathrm{classical}}(\mathbf{s})
    = -\tfrac{1}{2}\,\mathbf{s}^{\top}\mathbf{W}\mathbf{s}
      - \mathbf{b}^{\top}\mathbf{s},
    \qquad
    \mathbf{W} = \frac{1}{d}\sum_{i=1}^{K}\mathbf{m}_i\mathbf{m}_i^{\top},
\end{equation}
where $\mathbf{b}\in\mathbb{R}^d$ is an external bias (typically zero). Retrieval proceeds by coordinate-wise sign updates that descend $E_{\mathrm{classical}}$, but the storage capacity scales only as $\mathcal{O}(d)$~\cite{amitStoringInfiniteNumbers1985}. \textit{Modern} Hopfield networks~\cite{ramsauerHopfieldNetworksAll2021} overcome this bottleneck by replacing the quadratic energy with a log-sum-exp form. Let $\mathbf{X}=[\mathbf{m}_1,\dots,\mathbf{m}_K]\in\mathbb{R}^{d\times K}$ be the memory matrix, $\beta>0$ an inverse temperature, and $M:=\max_i\|\mathbf{m}_i\|_2$. The energy is
\begin{equation}\label{eq:modern-hopfield-energy}
    E(\boldsymbol{\xi})
    = -\operatorname{lse}_\beta\!\bigl(\mathbf{X}^{\top}\boldsymbol{\xi}\bigr)
      + \tfrac{1}{2}\|\boldsymbol{\xi}\|_2^2
      + \tfrac{1}{\beta}\log K
      + \tfrac{1}{2}M^2,
\end{equation}
where $\operatorname{lse}_\beta(\mathbf{z}):=\beta^{-1}\log\bigl(\sum_{i=1}^{K}e^{\beta z_i}\bigr)$. The retrieval map is
\begin{equation}\label{eq:hopfield-update}
    \mathbf{T}(\boldsymbol{\xi})
    := \mathbf{X}\,\operatorname{softmax}\!\bigl(\beta\,\mathbf{X}^{\top}\boldsymbol{\xi}\bigr),
\end{equation}
and $\nabla E(\boldsymbol{\xi})=\boldsymbol{\xi}-\mathbf{T}(\boldsymbol{\xi})$~\cite{ramsauerHopfieldNetworksAll2021}, so the deterministic iteration $\boldsymbol{\xi}^{t+1}=\mathbf{T}(\boldsymbol{\xi}^t)$ is gradient descent on $E$ with unit step size. Ramsauer et al.~\cite{ramsauerHopfieldNetworksAll2021} observed that single-head attention~\eqref{eq:sdp-attention} with $\mathbf{Q}=\boldsymbol{\xi}^{\top}$, $\mathbf{K}=\mathbf{V}=\mathbf{X}^{\top}$, and $\beta=1/\sqrt{d}$ is \emph{exactly one step} of~\eqref{eq:hopfield-update}: every attention head implicitly minimizes~\eqref{eq:modern-hopfield-energy}.

Two properties of this energy are essential and together unlock Langevin sampling. First, $E$ is $C^{\infty}$ with Lipschitz gradient, so standard convergence theory for the unadjusted Langevin algorithm applies and the discretization bias is bounded by an $O(\alpha)$ term controllable through the step size. Second, $E$ is \emph{confining}:
\begin{equation}\label{eq:energy-lower-bound}
    E(\boldsymbol{\xi}) \;\ge\; \tfrac{1}{2}\bigl(\|\boldsymbol{\xi}\|_2 - M\bigr)^2 \;\ge\; 0,
\end{equation}
so $E$ grows at least quadratically as $\|\boldsymbol{\xi}\|_{2}\to\infty$ and iterates cannot escape; the corresponding Boltzmann measure $p_\beta\propto\exp(-\beta E)$ has finite normalization and concentrates near the convex hull of the stored patterns. These two properties together place the Hopfield energy squarely in the regime where well-developed sampling theory delivers convergence guarantees, in contrast to generic energy-based models whose energies are learned and may violate either condition.

Given a smooth potential $U:\mathbb{R}^d\to\mathbb{R}$ with target density $p(\mathbf{x})\propto\exp(-U(\mathbf{x}))$, the overdamped Langevin SDE is
\begin{equation}\label{eq:langevin-sde}
    d\mathbf{x}_t = -\nabla U(\mathbf{x}_t)\,dt + \sqrt{2}\,d\mathbf{B}_t,
\end{equation}
where $\mathbf{B}_t$ is standard $d$-dimensional Brownian motion. This process has $p$ as its unique stationary distribution provided \textbf{(R1)}~$\nabla U$ is Lipschitz and \textbf{(R2)}~$U$ is dissipative, i.e.\ $\langle \nabla U(\mathbf{x}),\mathbf{x}\rangle \ge a\|\mathbf{x}\|_2^2 - b$ for $a>0$, $b\ge 0$~\cite{robertsTweedieExponentialConvergence1996}. The modern Hopfield energy satisfies both conditions. Discretizing~\eqref{eq:langevin-sde} with step size $\alpha>0$ gives the \emph{unadjusted Langevin algorithm} (ULA):
\begin{equation}\label{eq:ula}
    \mathbf{x}_{t+1}
    = \mathbf{x}_t - \alpha\,\nabla U(\mathbf{x}_t) + \sqrt{2\alpha}\;\boldsymbol{\epsilon}_t,
    \quad \boldsymbol{\epsilon}_t\sim\mathcal{N}(\mathbf{0},\mathbf{I}).
\end{equation}
The noise scale $\sqrt{2\alpha}$ maintains the fluctuation-dissipation relation; for finite $\alpha$ the stationary distribution is biased, but the bias vanishes as $\alpha\to 0$~\cite{durmusNonasymptoticAnalysisUnadjusted2017}. Setting $U=\beta E$ yields a Boltzmann target $p_\beta\propto\exp(-\beta E)$ where the temperature $1/\beta$ controls the trade-off between retrieval and generation.

\section{Method}
% method.tex -- Stochastic Attention via Langevin Dynamics on the Hopfield Energy

The preceding section established that attention is gradient descent on the modern Hopfield energy $E$, that $E$ is smooth and confining, and that Langevin dynamics converts any such energy into a sampler. We now combine these facts to derive \emph{stochastic attention}.

\subsection{Stochastic Attention Update}
The memory matrix $\mathbf{X}=[\mathbf{m}_1,\dots,\mathbf{m}_K]\in\mathbb{R}^{d\times K}$ stores $K$ patterns. We sample from the Boltzmann distribution $p_\beta(\boldsymbol{\xi})\propto\exp(-\beta E(\boldsymbol{\xi}))$ by applying the ULA~\eqref{eq:ula} to $U=\beta E$. Using $\nabla E(\boldsymbol{\xi})=\boldsymbol{\xi}-\mathbf{T}(\boldsymbol{\xi})$ and substituting $\tilde\alpha=\alpha/\beta$ so that the effective gradient step size is $\alpha$, we obtain
\begin{equation}\label{eq:stochastic-attention}
    \boldsymbol{\xi}_{t+1}
    = \boldsymbol{\xi}_t
      - \alpha\bigl(\boldsymbol{\xi}_t - \mathbf{T}(\boldsymbol{\xi}_t)\bigr)
      + \sqrt{\tfrac{2\alpha}{\beta}}\;\boldsymbol{\epsilon}_t,
    \qquad \boldsymbol{\epsilon}_t\sim\mathcal{N}(\mathbf{0},\mathbf{I}),
\end{equation}
which, after collecting terms, takes the form
\begin{equation}\label{eq:stochastic-attention-expanded}
    \boxed{\;
    \boldsymbol{\xi}_{t+1}
    = (1-\alpha)\,\boldsymbol{\xi}_t
      + \alpha\,\mathbf{X}\,\operatorname{softmax}\!\bigl(\beta\,\mathbf{X}^{\top}\boldsymbol{\xi}_t\bigr)
      + \sqrt{\tfrac{2\alpha}{\beta}}\;\boldsymbol{\epsilon}_t
    \;}
\end{equation}
This is the \emph{stochastic attention} update: a contraction toward the origin, a softmax-weighted pull toward stored memories, and isotropic Gaussian noise governed by $1/\beta$. The complete procedure is given in Algorithm~\ref{alg:stochastic-attention}. Because $\mathbf{X}$ is fixed, each step requires only the same primitives as a single attention head (two matrix-vector products, one softmax, one Gaussian draw) at cost $\mathcal{O}(dK)$. No training loop, score network, or learned parameters are needed.

\begin{algorithm}[t]
\caption{Stochastic Attention Sampler}\label{alg:stochastic-attention}
\begin{algorithmic}[1]
\REQUIRE Memory matrix $\mathbf{X}\in\mathbb{R}^{d\times K}$, inverse temperature $\beta>0$, step size $\alpha\in(0,1)$, number of iterations $T$, initial state $\boldsymbol{\xi}_0\in\mathbb{R}^d$
\ENSURE Sample trajectory $\boldsymbol{\xi}_0,\boldsymbol{\xi}_1,\dots,\boldsymbol{\xi}_T$
\FOR{$t = 0,1,\dots,T-1$}
    \STATE $\mathbf{a}_t \leftarrow \operatorname{softmax}\!\bigl(\beta\,\mathbf{X}^{\top}\boldsymbol{\xi}_t\bigr)$ \hfill $\triangleright$ attention weights
    \STATE $\boldsymbol{\epsilon}_t \sim \mathcal{N}(\mathbf{0},\mathbf{I}_d)$
    \STATE $\boldsymbol{\xi}_{t+1} \leftarrow (1-\alpha)\,\boldsymbol{\xi}_t + \alpha\,\mathbf{X}\,\mathbf{a}_t + \sqrt{2\alpha/\beta}\;\boldsymbol{\epsilon}_t$
\ENDFOR
\end{algorithmic}
\end{algorithm}

\subsection{Properties and Limiting Behavior}
When $\beta\to\infty$ and noise vanishes, the softmax sharpens to $\arg\max$ and the update reduces to deterministic retrieval (\emph{hard attention}). At finite $\beta$ with zero noise, it yields the standard softmax-weighted average (\emph{soft attention}). With finite $\beta$ and noise, the sampler generates novel patterns shaped by the memory geometry (\emph{stochastic attention}). The step size $\alpha\in(0,1)$ controls discretization fidelity versus mixing speed.

The score function $\nabla\log p_\beta = \beta(\mathbf{T}-\boldsymbol{\xi})$ is known in closed form from a single attention operation; no auxiliary network is needed. The following proposition (proved in Appendix~\ref{app:proof-regularity}) makes the regularity conditions explicit.

\begin{proposition}\label{prop:regularity}
Let $\sigma_{\max}=\|\mathbf{X}\|_{\mathrm{op}}$ denote the largest singular value of the memory matrix. The modern Hopfield energy~\eqref{eq:modern-hopfield-energy} has Lipschitz-continuous gradient with constant $L = 1 + \beta\sigma_{\max}^{2}/2$ and satisfies the dissipativity condition $\langle \nabla E(\boldsymbol{\xi}),\boldsymbol{\xi}\rangle \ge \|\boldsymbol{\xi}\|_2^2/2 - M^2/2$ for all $\boldsymbol{\xi}\in\mathbb{R}^d$.
\end{proposition}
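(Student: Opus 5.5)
The plan is to prove the two claims separately by direct calculation, using only $\nabla E(\boldsymbol{\xi})=\boldsymbol{\xi}-\mathbf{T}(\boldsymbol{\xi})$ and the fact that $\mathbf{T}(\boldsymbol{\xi})$ is a convex combination of the stored patterns.

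\emph{Lipschitz gradient.} I would compute the Hessian of $E$ and bound its operator norm uniformly in $\boldsymbol{\xi}$; the mean value theorem then gives the Lipschitz constant. Write $\mathbf{p}=\operatorname{softmax}(\beta\mathbf{X}^{\top}\boldsymbol{\xi})$. The Jacobian of the softmax is $\beta(\operatorname{diag}(\mathbf{p})-\mathbf{p}\mathbf{p}^{\top})$, so
\[
\nabla^2E(\boldsymbol{\xi}) = \mathbf{I} - \beta\,\mathbf{X}\bigl(\operatorname{diag}(\mathbf{p})-\mathbf{p}\mathbf{p}^{\top}\bigr)\mathbf{X}^{\top}.
\]
Set $\boldsymbol{\Sigma}_{\mathbf{p}}:=\operatorname{diag}(\mathbf{p})-\mathbf{p}\mathbf{p}^{\top}$. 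For any $\mathbf{v}\in\mathbb{R}^K$, $\mathbf{v}^{\top}\boldsymbol{\Sigma}_{\mathbf{p}}\mathbf{v}$ is the variance of $v_I$ with $I\sim\mathbf{p}$. This variance is nonnegative, so $\boldsymbol{\Sigma}_{\mathbf{p}}$ is positive semidefinite. By Popoviciu's inequality it is at most $(\max_i v_i-\min_i v_i)^2/4$. When $\|\mathbf{v}\|_2=1$ we have $\max_i v_i-\min_i v_i\le\sqrt{2}$, which gives $\|\boldsymbol{\Sigma}_{\mathbf{p}}\|_{\mathrm{op}}\le 1/2$. Hence $0\preceq\beta\mathbf{X}\boldsymbol{\Sigma}_{\mathbf{p}}\mathbf{X}^{\top}\preceq(\beta\sigma_{\max}^2/2)\,\mathbf{I}$. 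The eigenvalues of $\nabla^2E$ therefore lie in $[1-\beta\sigma_{\max}^2/2,\,1]$, so
\[
\|\nabla^2E(\boldsymbol{\xi})\|_{\mathrm{op}}\le\max\bigl(1,\ \beta\sigma_{\max}^2/2-1\bigr)\le 1+\beta\sigma_{\max}^2/2 .
\]
Integrating the Hessian along the segment between two points yields the stated constant $L$.

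\emph{Dissipativity.} First, $\langle\nabla E(\boldsymbol{\xi}),\boldsymbol{\xi}\rangle=\|\boldsymbol{\xi}\|_2^2-\langle\mathbf{T}(\boldsymbol{\xi}),\boldsymbol{\xi}\rangle$. Since $\mathbf{T}(\boldsymbol{\xi})=\sum_ip_i\mathbf{m}_i$ with $\mathbf{p}$ in the simplex, $\|\mathbf{T}(\boldsymbol{\xi})\|_2\le M$. Cauchy--Schwarz and Young's inequality then give
\[
\langle\mathbf{T}(\boldsymbol{\xi}),\boldsymbol{\xi}\rangle\le M\|\boldsymbol{\xi}\|_2\le\tfrac12\|\boldsymbol{\xi}\|_2^2+\tfrac12M^2,
\]
and substituting this bound yields $\langle\nabla E(\boldsymbol{\xi}),\boldsymbol{\xi}\rangle\ge\|\boldsymbol{\xi}\|_2^2/2-M^2/2$.

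The only nontrivial step is the sharp bound $\|\boldsymbol{\Sigma}_{\mathbf{p}}\|_{\mathrm{op}}\le1/2$, which is needed for the factor $1/2$ in $L$. The naive bound $\boldsymbol{\Sigma}_{\mathbf{p}}\preceq\operatorname{diag}(\mathbf{p})$ only gives $\|\boldsymbol{\Sigma}_{\mathbf{p}}\|_{\mathrm{op}}\le1$. The variance interpretation together with Popoviciu's inequality handles this step cleanly.
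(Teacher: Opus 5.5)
Your proposal is correct and follows the paper's proof essentially step for step: the same Hessian $\mathbf{I}-\beta\mathbf{X}(\operatorname{diag}(\mathbf{p})-\mathbf{p}\mathbf{p}^{\top})\mathbf{X}^{\top}$, the same Popoviciu argument giving $\|\operatorname{diag}(\mathbf{p})-\mathbf{p}\mathbf{p}^{\top}\|_{\mathrm{op}}\le 1/2$, and the same Cauchy--Schwarz plus Young argument for dissipativity. The only difference is that the paper bounds the Hessian norm with the triangle inequality, while you read it off the eigenvalue interval $[1-\beta\sigma_{\max}^{2}/2,\,1]$. Your route gives the slightly sharper $\max(1,\beta\sigma_{\max}^{2}/2-1)$, and that interval is exactly the bound the paper uses afterwards for the convexity threshold.
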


The Hessian satisfies $\nabla^2 E \succeq (1-\beta\sigma_{\max}^{2}/2)\mathbf{I}$, so $E$ is strictly convex when $\beta\sigma_{\max}^{2}<2$, yielding a standard convergence result:

\begin{corollary}\label{cor:convergence}
When $\beta\sigma_{\max}^{2}<2$, the potential $U=\beta E$ is $m$-strongly convex with $m=\beta(1-\beta\sigma_{\max}^{2}/2)$ and has $L_{U}$-Lipschitz gradient with $L_{U}=\beta(1+\beta\sigma_{\max}^{2}/2)$. The iterates of Algorithm~\ref{alg:stochastic-attention} converge to~$p_\beta$ in $W_2$ at a geometric rate, with discretization bias vanishing as $\alpha\to 0$~\cite{dalalyanTheoreticalGuaranteesApproximate2017,durmusNonasymptoticAnalysisUnadjusted2017}.
\end{corollary}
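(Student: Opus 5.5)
The plan is to read off both constants of the corollary from Hessian bounds on $E$, and then invoke the standard ULA theory for strongly log-concave targets. Along the way I must check that Algorithm~\ref{alg:stochastic-attention} really is ULA on $U=\beta E$ with some step size.

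\textbf{Step 1: the Hessian.} Differentiating $\nabla E(\boldsymbol{\xi})=\boldsymbol{\xi}-\mathbf{X}\mathbf{p}$, with $\mathbf{p}=\operatorname{softmax}(\beta\mathbf{X}^\top\boldsymbol{\xi})$, gives
\[
\nabla^2E=\mathbf{I}-\beta\,\mathbf{X}\bigl(\operatorname{diag}(\mathbf{p})-\mathbf{p}\mathbf{p}^\top\bigr)\mathbf{X}^\top .
\]
The matrix $\mathbf{C}=\operatorname{diag}(\mathbf{p})-\mathbf{p}\mathbf{p}^\top$ is the covariance of a categorical distribution, so it is positive semidefinite.

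\textbf{Step 2: the spectral bound on $\mathbf{C}$.} For any $\mathbf{v}$, $\mathbf{v}^\top\mathbf{C}\mathbf{v}$ is the variance of $v_I$ with $I\sim\mathbf{p}$. Popoviciu's inequality bounds this by $(\max_i v_i-\min_i v_i)^2/4\le \|\mathbf{v}\|_2^2/2$. The last step holds because $(a-b)^2\le 2(a^2+b^2)$, and the two extreme coordinates are distinct entries of $\mathbf{v}$. Hence $\|\mathbf{C}\|_{\mathrm{op}}\le 1/2$, and therefore
\[
0\preceq \beta\mathbf{X}\mathbf{C}\mathbf{X}^\top\preceq (\beta\sigma_{\max}^2/2)\mathbf{I}.
\]
This step is the only genuinely non-routine point; the factor $1/2$ is what fixes the constants. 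The PSD lower bound actually gives $\nabla^2E\preceq\mathbf{I}$, which is sharper than needed, but it is consistent with the stated $L$ from Proposition~\ref{prop:regularity}. The upper bound gives
\[
\nabla^2E\succeq(1-\beta\sigma_{\max}^2/2)\mathbf{I}.
\]

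\textbf{Step 3: constants for $U$.} Multiplying by $\beta$ shows that when $\beta\sigma_{\max}^2<2$, $U$ is $m$-strongly convex with $m=\beta(1-\beta\sigma_{\max}^2/2)>0$. Its gradient is $L_U$-Lipschitz with $L_U=\beta L=\beta(1+\beta\sigma_{\max}^2/2)$, using the constant $L$ from Proposition~\ref{prop:regularity}.

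\textbf{Step 4: matching the algorithm to ULA.} The update~\eqref{eq:stochastic-attention} equals ULA~\eqref{eq:ula} on $U=\beta E$ with step $h=\alpha/\beta$. Indeed, $h\nabla U=\alpha\nabla E$ and $\sqrt{2h}=\sqrt{2\alpha/\beta}$.

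\textbf{Step 5: invoking the convergence theorem.} I will use the $W_2$ theorem of Dalalyan~\cite{dalalyanTheoreticalGuaranteesApproximate2017} and Durmus--Moulines~\cite{durmusNonasymptoticAnalysisUnadjusted2017} for $m$-strongly convex, $L_U$-smooth potentials. For $h\le 1/L_U$ (or $h<2/(m+L_U)$) it gives
\[
W_2(\mathrm{law}(\boldsymbol{\xi}_t),p_\beta)\le(1-mh)^t\,W_2(\delta_{\boldsymbol{\xi}_0},p_\beta)+C\,(L_U/m)\sqrt{h\,d}.
\]
The first term contracts geometrically. The bias term vanishes as $h=\alpha/\beta\to0$, i.e.\ as $\alpha\to0$ for fixed $\beta$. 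I would state the step-size condition explicitly as $\alpha\le\beta/L_U=1/(1+\beta\sigma_{\max}^2/2)$, which is compatible with $\alpha\in(0,1)$. Together these give the geometric rate and the vanishing discretization bias claimed in the corollary.
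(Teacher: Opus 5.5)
Your proposal is correct and follows the paper's own argument. Both compute the Hessian $\nabla^2E=\mathbf{I}-\beta\mathbf{X}\mathbf{S}(\mathbf{p})\mathbf{X}^\top$, apply the Popoviciu bound $\|\mathbf{S}(\mathbf{p})\|_{\mathrm{op}}\le 1/2$ to get $\nabla^2E\succeq(1-\beta\sigma_{\max}^2/2)\mathbf{I}$, rescale by $\beta$, and cite standard strongly log-concave ULA theory. Your explicit identification of the ULA step $h=\alpha/\beta$ is a useful detail the paper leaves implicit. Note also that $m+L_U=2\beta$, so your alternative condition $h<2/(m+L_U)$ is exactly $\alpha<1$: the geometric contraction holds on the whole range of Algorithm~\ref{alg:stochastic-attention}, and the more restrictive $\alpha\le 1/(1+\beta\sigma_{\max}^2/2)$ is unnecessary.
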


The memory matrices in our experiments place the convex regime well below the operating range $\beta\in\{200,2000\}$. We state this boundary explicitly because it delineates where the Hopfield energy transitions from unimodal (one global basin) to multimodal (one local minimum per stored pattern): the condition $\beta\sigma_{\max}^{2}=2$ is the onset of non-convexity, beyond which energy barriers separate stored memories.

For general non-convex energies, mixing time scales exponentially in the barrier height divided by the temperature~\cite{bovier2004metastability}. In the Hopfield energy at high~$\beta$, barriers between patterns are $O(\beta)$ while the temperature is $1/\beta$, so inter-basin mixing is exponentially slow. This is consistent with our empirical observation that a single chain at $\beta{=}2000$ does not cross between basins (single-chain diversity $0.282$; Table~\ref{tab:single-chain}). However, within each basin the energy is locally strongly convex (the Hessian near each minimum is positive definite), so \emph{intra-basin} mixing is fast. The multi-chain protocol used in our experiments exploits this: by initializing chains near distinct stored patterns, we obtain cross-basin coverage through initialization and rely on fast local mixing within each basin. At $\beta{=}200$, the barriers are low enough that single chains do cross basins (single-chain diversity $0.796$, exceeding the multi-chain $\beta{=}2000$ value of $0.600$; Table~\ref{tab:single-chain}).

The confining bound~\eqref{eq:energy-lower-bound} keeps iterates concentrated near $\mathrm{conv}\{\mathbf{m}_1,\dots,\mathbf{m}_K\}$ with excursions controlled by $\sqrt{2\alpha/\beta}$. Two guarantees hold for all $\beta>0$ regardless of convexity: the continuous-time SDE is geometrically ergodic by dissipativity~\cite{robertsTweedieExponentialConvergence1996}, and our experiments confirm that ULA discretization bias is negligible in practice via the empirical MALA acceptance rate.

\begin{figure}[t!]
\centering
\includegraphics[width=\textwidth]{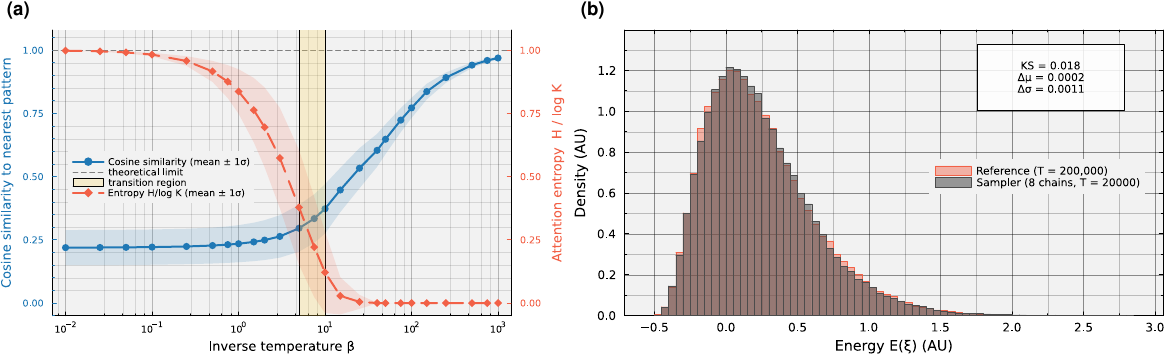}
\caption{Synthetic experiments.
\textbf{(a)}~Phase behavior as a function of inverse temperature $\beta$ ($d=64$, $K=16$). Left axis (blue): mean cosine similarity to the nearest stored pattern; right axis (coral): scaled entropy $H(\mathbf{a})/\log K$. Both diagnostics reveal a smooth transition centered near $\beta\approx 5\text{--}10$ (gold band).
\textbf{(b)}~Convergence validation ($d=8$, $K=4$, $\beta=5$). Pooled energy density from eight independent chains (gray) overlaid on a long-run reference distribution (coral); inset reports the Kolmogorov--Smirnov statistic and moment differences.}
\label{fig:experiments}
\end{figure}

\subsection{Phase Transition and Temperature Selection}\label{sec:phase-transition}

The attention entropy $H(\beta) = -\sum_k p_k \log p_k$, where $p_k = \operatorname{softmax}(\beta\mathbf{X}^{\top}\boldsymbol{\xi})_k$, quantifies selectivity for a given state~$\boldsymbol{\xi}$. At $\beta=0$ the distribution is uniform ($H=\log K$); as $\beta\to\infty$ it concentrates on the nearest memory ($H\to 0$).

\begin{proposition}\label{prop:entropy-derivative}
For fixed $\boldsymbol{\xi}\in\mathbb{R}^d$, let $e_k = \mathbf{m}_k^{\top}\boldsymbol{\xi}$ and $\mathbf{p}=\operatorname{softmax}(\beta\mathbf{e})$. Then
\begin{equation}\label{eq:entropy-deriv}
\frac{dH}{d\beta} \;=\; -\beta\,\mathrm{Var}_{\mathbf{p}}(e),
\end{equation}
where $\mathrm{Var}_{\mathbf{p}}(e) = \sum_k p_k\,e_k^2 - \bigl(\sum_k p_k\,e_k\bigr)^{\!2}$.
\end{proposition}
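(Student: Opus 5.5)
The plan is to write the entropy in terms of the log-partition function and then differentiate. Set $Z(\beta)=\sum_k e^{\beta e_k}$ and $A(\beta)=\log Z(\beta)$. Then $\log p_k=\beta e_k - A(\beta)$, and summing against $p_k$ gives
\begin{equation*}
H(\beta) \;=\; -\sum_k p_k(\beta e_k - A) \;=\; A(\beta) - \beta\,\mu(\beta),
\qquad \mu(\beta):=\sum_k p_k e_k=\mathbb{E}_{\mathbf{p}}[e].
\end{equation*}
This is the standard exponential-family form, with sufficient statistic $e$ and natural parameter $\beta$. Because $\boldsymbol{\xi}$ is held fixed, the scores $e_k$ do not depend on $\beta$, so only $\beta$ varies in what follows.

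Next I will use the two cumulant identities. Differentiating $A$ gives $A'(\beta)=Z'/Z=\sum_k e_k e^{\beta e_k}/Z=\mu$. Differentiating $\mu$ requires $dp_k/d\beta$, and from $\log p_k=\beta e_k-A$ we get $dp_k/d\beta = p_k(e_k-\mu)$. Hence
\begin{equation*}
\mu'(\beta)=\sum_k p_k(e_k-\mu)e_k=\sum_k p_k e_k^2-\mu^2=\mathrm{Var}_{\mathbf{p}}(e).
\end{equation*}

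Combining the two, $H'(\beta)=A'-\mu-\beta\mu' = \mu-\mu-\beta\,\mathrm{Var}_{\mathbf{p}}(e)=-\beta\,\mathrm{Var}_{\mathbf{p}}(e)$, which is \eqref{eq:entropy-deriv}. All sums are finite and every term is smooth in $\beta$, so the differentiation is justified.

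There is no real obstacle here. The one step that needs care is the cancellation of the $A'$ term against $\mu$. A useful consistency check is that the result forces $H$ to be nonincreasing in $\beta$, with strict decrease unless all $e_k$ are equal. This matches the limits noted above: $H=\log K$ at $\beta=0$ and $H\to0$ as $\beta\to\infty$.
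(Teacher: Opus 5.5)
Your proof is correct and is essentially the paper's own argument: write $H=\log Z-\beta\langle e\rangle_{\mathbf{p}}$, then use $d(\log Z)/d\beta=\langle e\rangle_{\mathbf{p}}$ and $d\langle e\rangle_{\mathbf{p}}/d\beta=\mathrm{Var}_{\mathbf{p}}(e)$, so the two $\langle e\rangle_{\mathbf{p}}$ terms cancel. One small aside on your consistency check, which is not part of the proof: $H\to 0$ as $\beta\to\infty$ only when the maximizing $e_k$ is unique; with ties, $H$ tends to the logarithm of the number of maximizers.
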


\begin{proof}
Write $\log p_k = \beta e_k - \log Z$ with $Z=\sum_j\exp(\beta e_j)$, so $H = -\beta\langle e\rangle_{\mathbf{p}} + \log Z$. Differentiating and using $d(\log Z)/d\beta = \langle e\rangle_{\mathbf{p}}$ and $d\langle e\rangle_{\mathbf{p}}/d\beta = \mathrm{Var}_{\mathbf{p}}(e)$ yields the result.
\end{proof}

Since $\mathrm{Var}_{\mathbf{p}}(e)\ge 0$, entropy decreases monotonically. The derivative vanishes at both $\beta=0$ (prefactor) and $\beta\to\infty$ (variance collapse), so the maximum rate of entropy loss occurs at an inflection point $\beta^{*}$ satisfying $d^{2}H/d\beta^{2}=0$:
\begin{equation}\label{eq:inflection}
\mathrm{Var}_{\mathbf{p}}(e)\big|_{\beta^{*}} \;=\; -\beta^{*}\;\mu_{3}\big|_{\beta^{*}},
\end{equation}
where $\mu_3 = \langle(e - \langle e\rangle_{\mathbf{p}})^3\rangle_{\mathbf{p}}$ is the third central moment. This inflection marks the retrieval-to-generation phase boundary: below $\beta^{*}$, attention remains diffuse (generation); above it, attention concentrates on individual memories (retrieval).

For random memories on $\mathbb{S}^{d-1}$, the similarities $e_k$ have $\mathrm{Var}(e_k)=1/d$, so the softmax logits have standard deviation $\beta/\sqrt{d}$; the transition occurs when this reaches order unity, giving $\beta^{*}\sim\sqrt{d}$. Substituting into the per-step signal-to-noise ratio (Eq.~\ref{eq:snr}) yields
\begin{equation}\label{eq:snr-star}
\mathrm{SNR}^{*} \;=\; \sqrt{\frac{\alpha\,\beta^{*}}{2d}} \;\sim\; \sqrt{\frac{\alpha}{2\sqrt{d}}}\,.
\end{equation}
At $d{=}64$ and $\alpha{=}0.01$ this evaluates to $\mathrm{SNR}^{*}=0.025$, matching the empirical transition (Fig.~\ref{fig:experiments}a). For structured data the effective variance may differ from $1/d$, but Eq.~\eqref{eq:inflection} provides a fully data-dependent criterion: evaluate $d^{2}H/d\beta^{2}$ over a $\beta$ sweep and locate the zero crossing.

\begin{table}[t]
\centering
\caption{Quantitative comparison on MNIST digit ``3'' ($K=100$, $d=784$). $\mathcal{N}$ and $\bar{\mathcal{D}}$ are higher-is-better ($\uparrow$); $\bar{E}$ is lower-is-better ($\downarrow$). Values are mean $\pm$ SE across 30 chains. $^\dagger$Energy is positive at $\beta{=}200$ because samples explore off the attractor manifold. $^\ddagger$DDPM produces unstructured noise (max-$\cos=0.062$).}
\label{tab:mnist-baselines}
\small
\begin{tabular}{@{}lccc@{}}
\toprule
Method & $\mathcal{N}$ $\uparrow$ & $\bar{\mathcal{D}}$ $\uparrow$ & $\bar{E}$ $\downarrow$ \\
\midrule
Bootstrap (replay)          & $0.000 \pm 0.000$ & $0.459 \pm 0.011$ & $-0.500 \pm 0.000$ \\
Gaussian perturbation       & $0.004 \pm 0.000$ & $0.450 \pm 0.013$ & $-0.496 \pm 0.000$ \\
Random convex combination   & $0.092 \pm 0.000$ & $0.008 \pm 0.000$ & $-0.399 \pm 0.000$ \\
GMM-PCA ($r{=}50$, $C{=}10$) & $0.198 \pm 0.004$ & $0.419 \pm 0.011$ & $-0.303 \pm 0.005$ \\
VAE (latent${=}8$)          & $0.214 \pm 0.005$ & $0.441 \pm 0.008$ & $-0.286 \pm 0.005$ \\
DDPM ($T{=}200$)            & $0.938 \pm 0.002$ & $0.991 \pm 0.002$ & $+0.44\phantom{0} \pm 0.00\phantom{0}$$^\ddagger$ \\
MALA ($\beta{=}2000$)       & $0.151 \pm 0.001$ & $0.598 \pm 0.001$ & $-0.305 \pm 0.001$ \\
SA ($\beta{=}2000$, retrieval) & $0.152 \pm 0.001$ & $0.600 \pm 0.001$ & $-0.303 \pm 0.001$ \\
\textbf{SA ($\beta{=}200$, generation)} & $\mathbf{0.548 \pm 0.002}$ & $\mathbf{0.885 \pm 0.002}$ & $1.467 \pm 0.008^\dagger$ \\
\bottomrule
\end{tabular}
\end{table}

\subsection{Conditional Sampling via Attention Masking}\label{sec:masking}

The same primitive that powers stochastic attention admits a one-line extension to \emph{conditional} sampling. Let $\mathcal{S}\subseteq\{1,\dots,K\}$ index a subset of memories the user wishes to keep, and define $\mathbf{b}\in(\mathbb{R}\cup\{-\infty\})^{K}$ by $b_{k}=0$ for $k\in\mathcal{S}$ and $b_{k}=-\infty$ otherwise. Replacing the softmax in Algorithm~\ref{alg:stochastic-attention} with $\operatorname{softmax}(\beta\mathbf{X}^{\top}\boldsymbol{\xi}+\mathbf{b})$ yields the masked update
\begin{equation}\label{eq:masked-sa}
\boldsymbol{\xi}_{t+1}
= (1-\alpha)\,\boldsymbol{\xi}_t
  + \alpha\,\mathbf{X}\,\operatorname{softmax}\!\bigl(\beta\,\mathbf{X}^{\top}\boldsymbol{\xi}_t+\mathbf{b}\bigr)
  + \sqrt{\tfrac{2\alpha}{\beta}}\;\boldsymbol{\epsilon}_t.
\end{equation}
Because $b_{k}=-\infty$ exactly zeroes the softmax weight on out-of-set memories, the masked sampler is identical to running unmasked SA on the reduced memory matrix $\mathbf{X}_{\mathcal{S}}$ that retains only the columns indexed by $\mathcal{S}$. Consequently, the regularity guarantees above transfer verbatim with $\mathbf{X}$ replaced by $\mathbf{X}_{\mathcal{S}}$: the energy is smooth and confining, the gradient is Lipschitz with $L=1+\beta\sigma_{\max}(\mathbf{X}_{\mathcal{S}})^{2}/2$, and Corollary~\ref{cor:convergence} applies whenever $\beta\sigma_{\max}(\mathbf{X}_{\mathcal{S}})^{2}<2$. The same operation is the well-known causal mask of transformer attention, here repurposed from sequence-position masking to memory-subset selection: a single boolean vector turns SA into a class-conditional sampler with no retraining and no architectural change. A finite-bias generalization, $b_{k}\in\mathbb{R}$, recovers the Hopfield-multiplicity conditioning of Varner~\cite{varnerConditioningProtein2026}.

\section{Experiments}

\begin{table}[t]
\centering
\caption{Protein sequence generation on Pfam PF00076 ($K{=}68$, $d{=}59$). KL: global amino acid composition divergence. Pos-KL: mean per-position KL. MI~$r$: Pearson correlation of pairwise mutual information matrices. HMM: fraction passing the Pfam RRM hidden Markov model (HMM) at $E$-value~$<0.01$.}
\label{tab:protein-main}
\small
\begin{tabular}{@{}lcccccc@{}}
\toprule
Method & $\mathcal{N}$ $\uparrow$ & SeqID & KL $\downarrow$ & Pos-KL $\downarrow$ & MI~$r$ $\uparrow$ & HMM $\uparrow$ \\
\midrule
Bootstrap (replay)                  & $0.000$ & $0.644$ & $0.143$ & $7.52$ & $0.692$ & $100\%$ \\
VAE (latent${=}8$)                  & $0.621$ & $0.532$ & $0.416$ & $9.99$ & $0.525$ & $100\%$ \\
SA ($\beta{=}77$, retrieval)        & $0.243$ & $0.616$ & $0.107$ & $5.66$ & $0.733$ & $100\%$ \\
\textbf{SA ($\beta{=}8$, generation)} & $\mathbf{0.623}$ & $0.538$ & $\mathbf{0.060}$ & $\mathbf{2.92}$ & $\mathbf{0.871}$ & $100\%$ \\
\bottomrule
\end{tabular}
\end{table}

\paragraph{Synthetic validation.}
We constructed memory matrices with columns drawn uniformly from $\mathbb{S}^{d-1}$ (the standard model in the Hopfield capacity literature~\cite{amitStoringInfiniteNumbers1985}). All experiments used step size $\alpha=0.01$ with fixed random seeds. We first evaluated how $\beta$ controlled the retrieval-to-generation transition (Fig.~\ref{fig:experiments}a): with $K=16$ patterns in $d=64$, both cosine similarity and attention entropy exhibited a smooth sigmoidal transition centered near $\beta\approx 5$--$10$, confirming that $\beta$ acted as a continuous order parameter analogous to the classical Hopfield/Boltzmann transition~\cite{ackleyLearningAlgorithmBoltzmann1985}. The variance of both diagnostics peaked in the transition region, where the chain intermittently switched between pattern-aligned and exploratory states. A convergence experiment on a small system ($d=8$, $K=4$, $\beta=5$; Fig.~\ref{fig:experiments}b) confirmed that eight independent chains converged to the correct Boltzmann target, with pooled energies matching a long-chain reference. A load-ratio phase diagram (Fig.~\ref{fig:phase-diagram}) over $K/d\in[0.05,2.0]$ and $\beta\in[0.5,100]$ revealed a clear phase boundary: retrieval at high $\beta$ and low $K/d$, with the boundary shifting rightward as $\beta$ increased, consistent with classical capacity theory.

% ---------- Experiment 4: MNIST Baseline Comparison ----------
\paragraph{Image generation on MNIST.}
We selected $K=100$ images of the digit ``3'' from MNIST~\cite{lecunGradientBasedLearningApplied1998}, flattened to $\mathbb{R}^{784}$ and $\ell_2$-normalized. Because the energy landscape at high $\beta$ is deeply multimodal, we launched 30 independent chains, each initialized near a different stored pattern ($\sigma_{\mathrm{init}}=0.01$), run at $\beta=2000$ with $\alpha=0.01$ for $T=5{,}000$ iterations (burn-in $2{,}000$, thinning every 100th step), retaining $150$ samples spanning 30 basins. The choice $\beta=2000$ follows from the SNR analysis: the per-step signal-to-noise ratio
\begin{equation}
\mathrm{SNR}
\;=\;\sqrt{\frac{\alpha\beta}{2d}}\,,
\label{eq:snr}
\end{equation}
undergoes a critical transition at $\mathrm{SNR}^{*}=\sqrt{\alpha/(2\sqrt{d})}$ (Eq.~\ref{eq:snr-star}). At $d{=}64$ and $\alpha{=}0.01$ this gives $\mathrm{SNR}^{*}=0.025$ ($\beta^{*}\approx 8$), consistent with the observed transition; we set $\beta=2000$ ($\mathrm{SNR}=0.113$) for structured retrieval.

We compared against seven baselines using the same memory matrix: bootstrap resampling, Gaussian perturbation ($\sigma=\sqrt{2\alpha/\beta}$), random convex combination ($\mathbf{w}\sim\mathrm{Dirichlet}(1,\dots,1)$), Gaussian mixture model with PCA preprocessing (GMM-PCA; 50 components, 10 clusters; Appendix~\ref{app:gmm-pca}), variational autoencoder (VAE; latent dim 8; Appendix~\ref{app:vae}), denoising diffusion probabilistic model (DDPM; MLP-based, $T{=}200$ steps; Appendix~\ref{app:ddpm}), and Metropolis-adjusted Langevin algorithm (MALA; same Langevin proposal with accept/reject correction). For each method we generated 150 samples and evaluated novelty~$\mathcal{N}$, diversity~$\bar{\mathcal{D}}$, and mean energy~$\bar{E}$:
\begin{align*}
\mathcal{N}(\hat{\boldsymbol{\xi}}) = 1 - \max_{k}\cos\!\bigl(\hat{\boldsymbol{\xi}},\,\mathbf{m}_{k}\bigr),\qquad
\bar{\mathcal{D}} = \frac{2}{S(S{-}1)}\!\sum_{i<j}\bigl(1-\cos(\hat{\boldsymbol{\xi}}_i,\hat{\boldsymbol{\xi}}_j)\bigr),\qquad
\bar{E} = \frac{1}{S}\sum_{i=1}^{S} E\!\bigl(\hat{\boldsymbol{\xi}}_i\bigr),
\end{align*}
where $\cos(\mathbf{x},\mathbf{y})=\mathbf{x}^{\top}\mathbf{y}/(\lVert\mathbf{x}\rVert\,\lVert\mathbf{y}\rVert)$. Novelty measures departure from stored patterns; diversity is mean pairwise cosine distance; energy measures proximity to the memory manifold.

Each non-Langevin baseline failed on at least one axis (Table~\ref{tab:mnist-baselines}, Fig.~\ref{fig:mnist-baselines}): bootstrap by construction ($\mathcal{N}{=}0$), Gaussian perturbation by negligible noise, and random convex combination by mode collapse ($\bar{\mathcal{D}}{=}0.008$). DDPM failed entirely, producing samples indistinguishable from isotropic noise (max-$\cos{=}0.062$); a scaling study across $K\in\{100,\dots,3{,}500\}$ confirmed this failure persisted at all memory sizes tested (Appendix~\ref{app:scaling}). The VAE was the strongest learned baseline ($\mathcal{N}{=}0.214$, $\bar{\mathcal{D}}{=}0.441$). At $\beta{=}200$ (generation regime), SA dominated on both metrics: novelty $0.548$ ($2.6{\times}$ VAE), diversity $0.885$ ($2.0{\times}$ VAE). MALA's $99.2\%$ acceptance rate at $\alpha{=}0.01$ confirmed negligible ULA bias; a step-size sweep showed divergence only at $\alpha{\gtrsim}0.1$ (Appendix~\ref{app:stepsize-sweep}). The high-novelty $\beta{=}200$ samples reflected structured generation rather than high-temperature noise: a matched Gaussian control retained only $33\%$ cosine similarity to the nearest stored pattern vs.\ $45\%$ for SA (Table~\ref{tab:noise-control}), and basin-crossing was genuine at $\beta{=}200$, where single-chain diversity ($0.796$) exceeded the multi-chain $\beta{=}2000$ value (Table~\ref{tab:single-chain}). The advantage over the VAE was not an artifact of limited training data either: VAEs trained on $10{\times}$ and $100{\times}$ more images both fell short of SA on max-cosine similarity ($0.830$ vs.\ $0.848$) and diversity ($0.325$ vs.\ $0.600$) (Table~\ref{tab:full-vae}). The SNR transition band (Proposition~\ref{prop:entropy-derivative}) thus delimited two regimes in the same algorithm: \emph{structured retrieval} ($\beta{=}2000$) and \emph{genuine generation} ($\beta{=}200$).

\begin{figure}[!t]
\centering
\includegraphics[width=0.21\textwidth]{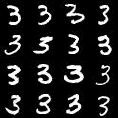}%
\hspace{0.02\textwidth}%
\includegraphics[width=0.21\textwidth]{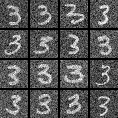}%
\hspace{0.02\textwidth}%
\includegraphics[width=0.21\textwidth]{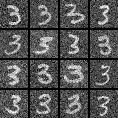}

\small
\makebox[0.21\textwidth]{\centering (a) Warm-start (stored ``3'')}%
\hspace{0.02\textwidth}%
\makebox[0.21\textwidth]{\centering (b) SA endpoint}%
\hspace{0.02\textwidth}%
\makebox[0.21\textwidth]{\centering (c) MALA endpoint}
\caption{Paired before/after grids on MNIST digit ``3'' at $\beta{=}2000$. \textbf{Each cell is one independent chain; cell $(i,j)$ in panels (a, b, c) all share the same warm-start and the same seed.} Compare cell-by-cell \emph{across} panels, not within a row. (a) Stored ``3'' used as the warm-start. (b) SA chain endpoint after $T{=}5{,}000$ Langevin steps from that warm-start; digit identity is preserved while the textured background is the diffusive perturbation producing novelty (cf.\ Table~\ref{tab:mnist-baselines}). (c) MALA endpoint from the same warm-start and seed; visually identical to (b), so the Metropolis correction is unnecessary at $\alpha{=}0.01$. Bootstrap/Gaussian/Convex have no warm-start interpretation; see Table~\ref{tab:mnist-baselines} for numerical comparison.}
\label{fig:mnist-baselines}
\end{figure}

% ---------- Experiment 5: Protein Sequence Generation ----------
\paragraph{Protein sequence generation.}
We applied SA to protein sequences from the Pfam RNA Recognition Motif (RRM) family~\cite{mistryPfamProteinFamilies2021} (PF00076): $K{=}68$ aligned sequences, $L{=}71$ positions, one-hot encoded ($\mathbb{R}^{1420}$), PCA-reduced to $d{=}59$ (95\% variance retained), and $\ell_2$-normalized; the inverse PCA followed by per-position $\arg\max$ decoded chain endpoints back to amino acid sequences. This family size is typical of the low-data regime in which most Pfam families reside~\cite{finnPfamProteinFamilies2008}. The entropy inflection yielded $\beta^*{=}3.85$, half the random-pattern prediction $\sqrt{d}{=}7.68$, because conserved residues increase the effective similarity variance (Fig.~\ref{fig:protein-entropy}). We ran SA at $\beta{=}77$ ($20\beta^*$, retrieval) and $\beta{=}8$ ($2\beta^*$, generation) with the same 30-chain protocol.

At generation temperature, SA matched the VAE on novelty ($0.623$ vs.\ $0.621$) while achieving $6.9{\times}$ lower amino acid composition divergence (KL$=0.060$ vs.\ $0.416$), $3.4{\times}$ lower per-position KL ($2.92$ vs.\ $9.99$), and $66\%$ higher co-evolutionary coupling fidelity (MI $r{=}0.871$ vs.\ $0.525$) (Table~\ref{tab:protein-main}). The per-position metric is important because global composition KL alone cannot distinguish a method that preserves the correct amino acid at each position from one that merely matches overall frequencies; the $3.4{\times}$ gap confirmed that SA preserved position-specific conservation patterns that the VAE lost. All generated sequences passed the Pfam RRM HMM at $E$-value~$<0.01$ (HMMER~3.4~\cite{hmmer3}; median $8.1{\times}10^{-34}$), confirming that generated sequences were unambiguously recognized as RRM family members by an independent domain classifier. MALA at both temperatures produced near-identical results (acceptance $99.8\%$; $\Delta\mathrm{KL}<0.002$; Table~\ref{tab:mala-generation}). Full results including all baselines and the per-position analysis are in Appendix~\ref{app:protein}. Because the Langevin gradient $\beta(\mathbf{T}(\boldsymbol{\xi})-\boldsymbol{\xi})$ is derived from the stored patterns at every step, stochastic attention preserved family-level structure at both position-specific and pairwise coupling levels even as samples explored far from individual sequences, while the VAE, which must learn the distribution from $K{=}68$ examples, could not maintain this fidelity when generating novel sequences. The temperature knob traded sequence identity for novelty ($0.243\to 0.623$) while \emph{improving} compositional fidelity (KL: $0.107\to 0.060$), a counterintuitive result explained by the fact that at lower $\beta$ the attention distribution spreads across more memories, producing samples that better reflect the family consensus rather than individual sequence idiosyncrasies. The observation that $\beta^*{=}3.85$ is half the random-pattern prediction $\sqrt{d}{=}7.68$ also has biological meaning: conserved residues in the RRM motif increase the effective variance of query-key similarities, shifting the phase boundary to lower temperature. Concurrent work~\cite{varnerTrainingFreeProtein2026} extends this pipeline to eight Pfam families and validates generated sequences structurally via ESMFold and AlphaFold2, finding that SA-generated sequences fold more accurately to canonical family structures than natural members in six of eight families tested. Additional MNIST digits and S\&P~500 log-returns at $d{=}424$ replicated the same ranking (Appendices~\ref{app:multidigit},~\ref{app:finance}); Olivetti faces at $d{=}4{,}096$ are covered next.

% ---------- Conditional generation via attention masking ----------
\paragraph{Conditional generation by masking the attention softmax.}
The masked update of Eq.~\eqref{eq:masked-sa} costs nothing beyond the unmasked sampler yet enables zero-shot subject-conditional generation. We applied SA to the Olivetti faces dataset (10 subjects, 10 portraits each, $K{=}100$, $d{=}4{,}096$, $\ell_2$-normalized; Appendix~\ref{app:olivetti}), which provides ground-truth subject labels for direct conditioning measurement. Without a mask, samples warm-started uniformly across all subjects and run at $\beta_{\mathrm{chain}}{=}200$ followed by a sharp readout at $\beta_{\mathrm{read}}{=}10{,}000$ landed on the target subject only $10.4\%$ of the time, indistinguishable from the random-guess rate of $10.0\%$. Setting $b_{k}{=}-\infty$ for memories outside a chosen subject lifted the same procedure to $96.0\%$ subject recovery, a $9.2{\times}$ improvement (Fig.~\ref{fig:olivetti}), with no learned classifier and no retraining. Conventional class-conditional generation requires either a classifier-guidance term added to the score function~\cite{songScoreBasedGenerativeModeling2021} or class-labeled training; the mask achieves the same effect with one Boolean vector and no learned components. The mask is the same Boolean operation used for causal attention in transformers, here applied along the memory axis rather than the sequence axis.

\begin{figure}[!t]
\centering
\includegraphics[width=0.235\textwidth]{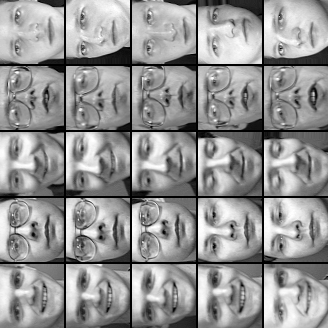}%
\hspace{0.01\textwidth}%
\includegraphics[width=0.235\textwidth]{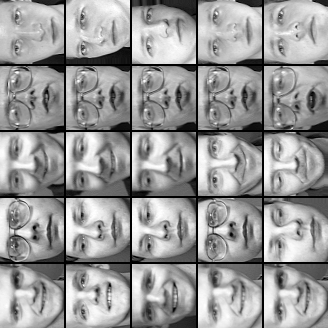}%
\hspace{0.01\textwidth}%
\includegraphics[width=0.235\textwidth]{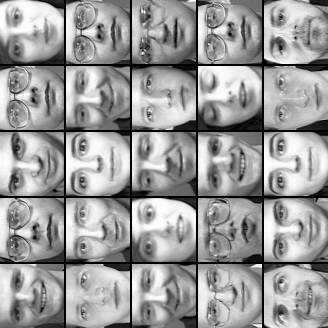}

\small
\makebox[0.235\textwidth]{\centering (a) Stored memories}%
\hspace{0.01\textwidth}%
\makebox[0.235\textwidth]{\centering (b) Hard-masked SA}%
\hspace{0.01\textwidth}%
\makebox[0.235\textwidth]{\centering (c) Unconditional SA}
\caption{Paired before/after grids on Olivetti faces ($K{=}100$, $d{=}4{,}096$). \textbf{Cell $(i,j)$ in all three panels uses the same warm-start: the $j$-th stored portrait of subject $i$.} (a) The 25 stored portraits used as warm-starts. (b) Hard-masked SA endpoints ($b_{k}{=}-\infty$ outside the row's subject); each row stays inside its subject (compare cell-by-cell to (a)). (c) Unconditional SA endpoints from the SAME warm-starts, mask removed; chains drift to other subjects within a row. Hard-mask subject-recovery: $96.0\%$ vs.\ unconditional $10.4\%$ vs.\ random-guess $10.0\%$.}
\label{fig:olivetti}
\end{figure}

\section{Discussion and Conclusion}
% discussion.tex -- Discussion and Conclusion (merged to save space)

The experiments validated the core claim: Langevin dynamics on the modern Hopfield energy converts deterministic attention into a principled stochastic sampler governed by a single temperature parameter. The synthetic experiments confirmed a smooth phase transition between retrieval and generation (Fig.~\ref{fig:experiments}a), convergence to the correct Boltzmann target (Fig.~\ref{fig:experiments}b), and graceful degradation with load ratio (Fig.~\ref{fig:phase-diagram}). The strongest result came from protein sequences (Table~\ref{tab:protein-main}), where the training-free score function preserved family-level fidelity at position-specific and co-evolutionary levels that the VAE could not maintain from $K{=}68$ examples. On MNIST, SA led the best learned baseline by $2.6{\times}$ on novelty and $2.0{\times}$ on diversity, while DDPM failed across all memory sizes tested (Table~\ref{tab:mnist-baselines}). The entropy inflection (Proposition~\ref{prop:entropy-derivative}) located the retrieval-to-generation boundary from data alone, correctly predicting $\beta^*{=}3.85$ on RRM (Fig.~\ref{fig:protein-entropy}), and the masking extension lifted SA to $96.0\%$ subject-conditional accuracy on Olivetti from a $10.4\%$ unconditional baseline (Table~\ref{tab:olivetti-recovery}).

A deliberate choice shapes our experimental design: we evaluate at memory sizes $K$ from 68 to 3{,}500, the low-to-moderate data regime where learned generative models are most constrained, matching most real low-data domains. Score-based and diffusion models need enough data to learn an accurate score function, ill-posed at small $K$ as DDPM's failure illustrates; stochastic attention sidesteps this because $\nabla\log p_\beta = \beta(\mathbf{T}-\boldsymbol{\xi})$ is exact for any $K$, reusing standard attention primitives with no score network or training loop. The Hopfield energy gives convex-regime convergence (Corollary~\ref{cor:convergence}) and geometric ergodicity for all $\beta>0$. The same score function extends to soft-bias conditioning ($b_{k}\in\mathbb{R}$), recovering the multiplicity-weight conditioning of Varner~\cite{varnerConditioningProtein2026}; concurrent work covers eight Pfam families~\cite{varnerTrainingFreeProtein2026} and synthetic patient generation~\cite{varnerSyntheticPatient2026}, with integration into pretrained multi-head attention left to future work (Appendix~\ref{app:multihead}).

\paragraph{Limitations.}
ULA discretization bias scales with $\alpha$; MALA (Appendix~\ref{app:mala-algorithm}) is preferred at larger steps, though $99.2\%$ acceptance at $\alpha{=}0.01$ confirms negligible bias here (Table~\ref{tab:mala-generation}). At high $\beta$ SA produces structured interpolations rather than crisp novel images, and inter-basin mixing grows exponentially with barrier height; multi-chain sampling mitigates this in practice, but quantitative scaling laws remain open. The method requires $\mathbf{X}$ fixed; streaming and latent-memory settings need extension.

% \section*{Acknowledgments}
% Removed for anonymous submission. Restore for camera-ready.

\clearpage

\bibliographystyle{unsrtnat}
\bibliography{References_v1}

\clearpage

\newpage
\appendix
\setcounter{figure}{0}
\setcounter{table}{0}
\renewcommand{\thefigure}{S\arabic{figure}}
\renewcommand{\thetable}{S\arabic{table}}
% appendix.tex -- Supplementary material

\section{Broader Impact Statement}\label{app:broader-impact}
This work is primarily methodological: it connects two well-studied mathematical frameworks (modern Hopfield networks and Langevin dynamics) to produce a stochastic attention mechanism. Because the method generates outputs that are structured combinations of stored memory patterns, it inherits the biases present in whatever memory matrix is supplied. Practitioners should therefore audit the stored patterns for representational harms before deploying the sampler in applications that affect individuals (e.g., image synthesis, recommendation). The training-free nature of the method lowers the barrier to misuse relative to large diffusion models, but also limits expressiveness to the convex hull of the memory, reducing the risk of generating entirely novel harmful content. We do not foresee direct negative societal consequences beyond those common to all generative modeling research, and we encourage responsible use.

\section{Proof of Proposition~\ref{prop:regularity}}\label{app:proof-regularity}

We prove the two regularity properties of the modern Hopfield energy $E$ defined in~\eqref{eq:modern-hopfield-energy}. Write $\mathbf{p}(\boldsymbol{\xi})=\operatorname{softmax}(\beta\mathbf{X}^{\top}\boldsymbol{\xi})\in\mathbb{R}^K$ for the attention weight vector at state $\boldsymbol{\xi}$, and recall the retrieval map $\mathbf{T}(\boldsymbol{\xi})=\mathbf{X}\mathbf{p}(\boldsymbol{\xi})$.

\paragraph{Hessian computation.}
The gradient is $\nabla E(\boldsymbol{\xi})=\boldsymbol{\xi}-\mathbf{T}(\boldsymbol{\xi})$~\cite{ramsauerHopfieldNetworksAll2021}. To obtain the Hessian we differentiate the retrieval map. Let $\mathbf{z}=\beta\mathbf{X}^{\top}\boldsymbol{\xi}\in\mathbb{R}^K$ denote the pre-softmax logits. The Jacobian of the softmax is the $K\times K$ matrix
\begin{equation}\label{eq:softmax-jacobian}
    \frac{\partial\mathbf{p}}{\partial\mathbf{z}}
    = \operatorname{diag}(\mathbf{p}) - \mathbf{p}\mathbf{p}^{\top}
    =: \mathbf{S}(\mathbf{p}),
\end{equation}
which is the covariance matrix of a categorical distribution with probabilities~$\mathbf{p}$. By the chain rule ($\partial\mathbf{z}/\partial\boldsymbol{\xi}=\beta\mathbf{X}^{\top}$):
\begin{equation}\label{eq:hessian}
    \nabla^{2}E(\boldsymbol{\xi})
    = \mathbf{I}_{d} - \beta\,\mathbf{X}\,\mathbf{S}\bigl(\mathbf{p}(\boldsymbol{\xi})\bigr)\,\mathbf{X}^{\top}.
\end{equation}

\paragraph{Part (i): Lipschitz gradient.}
The Lipschitz constant of $\nabla E$ equals $\sup_{\boldsymbol{\xi}}\|\nabla^{2}E(\boldsymbol{\xi})\|_{\mathrm{op}}$. Because $\mathbf{S}(\mathbf{p})$ is a covariance matrix it is positive semidefinite, so $\beta\mathbf{X}\mathbf{S}\mathbf{X}^{\top}\succeq\mathbf{0}$. By the triangle inequality for the spectral norm,
\begin{equation}\label{eq:lip-triangle}
    \|\nabla^{2}E(\boldsymbol{\xi})\|_{\mathrm{op}}
    \;\le\; \|\mathbf{I}_{d}\|_{\mathrm{op}}
            + \beta\,\|\mathbf{X}\,\mathbf{S}(\mathbf{p})\,\mathbf{X}^{\top}\|_{\mathrm{op}}
    \;\le\; 1 + \beta\,\sigma_{\max}^{2}\,\|\mathbf{S}(\mathbf{p})\|_{\mathrm{op}},
\end{equation}
where the second step uses submultiplicativity of the spectral norm: $\|\mathbf{A}\mathbf{B}\mathbf{C}\|_{\mathrm{op}}\le\|\mathbf{A}\|_{\mathrm{op}}\|\mathbf{B}\|_{\mathrm{op}}\|\mathbf{C}\|_{\mathrm{op}}$.

It remains to bound $\|\mathbf{S}(\mathbf{p})\|_{\mathrm{op}}$. For any unit vector $\mathbf{v}\in\mathbb{R}^K$,
\begin{align*}
\mathbf{v}^{\top}\mathbf{S}(\mathbf{p})\mathbf{v}
= \sum_{i}p_{i}v_{i}^{2} - \Bigl(\sum_{i}p_{i}v_{i}\Bigr)^{\!2}
= \operatorname{Var}_{\mathbf{p}}(V),
\end{align*}
where $V$ is the random variable taking value $v_i$ with probability $p_i$. By Popoviciu's variance inequality, $\operatorname{Var}(V)\le(v_{\max}-v_{\min})^{2}/4$. Under the constraint $\|\mathbf{v}\|_{2}=1$, the difference $v_{\max}-v_{\min}$ satisfies
\begin{align*}
(v_{\max}-v_{\min})^{2}
\;\le\; (|v_{\max}|+|v_{\min}|)^{2}
\;\le\; 2(v_{\max}^{2}+v_{\min}^{2})
\;\le\; 2\|\mathbf{v}\|_{2}^{2} = 2,
\end{align*}
where the second step is the Cauchy--Schwarz (QM-AM) inequality $(a+b)^{2}\le 2(a^{2}+b^{2})$ for $a,b\ge 0$. Therefore
\begin{equation}\label{eq:softmax-spectral-bound}
    \|\mathbf{S}(\mathbf{p})\|_{\mathrm{op}}
    = \sup_{\|\mathbf{v}\|=1}\operatorname{Var}_{\mathbf{p}}(V)
    \;\le\; \frac{2}{4} = \frac{1}{2},
\end{equation}
and this bound is tight: equality holds when $K\ge 2$ with $\mathbf{p}=(\tfrac{1}{2},\tfrac{1}{2},0,\dots,0)$ and $\mathbf{v}=(\tfrac{1}{\sqrt{2}},-\tfrac{1}{\sqrt{2}},0,\dots,0)$.

Substituting~\eqref{eq:softmax-spectral-bound} into~\eqref{eq:lip-triangle} yields
\begin{align*}
\sup_{\boldsymbol{\xi}}\|\nabla^{2}E(\boldsymbol{\xi})\|_{\mathrm{op}}
\;\le\; 1 + \frac{\beta\,\sigma_{\max}^{2}}{2}
=: L.
\end{align*}

\paragraph{Part (ii): Dissipativity.}
Expanding the inner product,
\begin{align*}
\langle\nabla E(\boldsymbol{\xi}),\,\boldsymbol{\xi}\rangle
= \|\boldsymbol{\xi}\|_{2}^{2} - \langle\mathbf{T}(\boldsymbol{\xi}),\,\boldsymbol{\xi}\rangle.
\end{align*}
The retrieval map $\mathbf{T}(\boldsymbol{\xi})=\mathbf{X}\mathbf{p}=\sum_{k}p_{k}\mathbf{m}_{k}$ is a convex combination of stored patterns, so by the triangle inequality
\begin{align*}
\|\mathbf{T}(\boldsymbol{\xi})\|_{2}
= \Bigl\|\sum_{k}p_{k}\mathbf{m}_{k}\Bigr\|_{2}
\le \sum_{k}p_{k}\|\mathbf{m}_{k}\|_{2}
\le M,
\end{align*}
where $M=\max_{k}\|\mathbf{m}_{k}\|_{2}$ and $\sum_{k}p_{k}=1$. By the Cauchy--Schwarz inequality,
\begin{align*}
|\langle\mathbf{T}(\boldsymbol{\xi}),\,\boldsymbol{\xi}\rangle|
\le \|\mathbf{T}(\boldsymbol{\xi})\|_{2}\,\|\boldsymbol{\xi}\|_{2}
\le M\|\boldsymbol{\xi}\|_{2}.
\end{align*}
Applying Young's inequality $ab\le\tfrac{a^{2}}{2}+\tfrac{b^{2}}{2}$ with $a=M$ and $b=\|\boldsymbol{\xi}\|_{2}$,
\begin{align*}
M\|\boldsymbol{\xi}\|_{2}
\;\le\; \frac{M^{2}}{2} + \frac{\|\boldsymbol{\xi}\|_{2}^{2}}{2}.
\end{align*}
Hence
\begin{align*}
\langle\nabla E(\boldsymbol{\xi}),\,\boldsymbol{\xi}\rangle
\;\ge\; \|\boldsymbol{\xi}\|_{2}^{2} - \frac{M^{2}}{2} - \frac{\|\boldsymbol{\xi}\|_{2}^{2}}{2}
= \frac{1}{2}\|\boldsymbol{\xi}\|_{2}^{2} - \frac{M^{2}}{2},
\end{align*}
which is the dissipativity condition \textbf{(R2)} with constants $a=\tfrac{1}{2}$ and $b=\tfrac{M^{2}}{2}$. \qed

\paragraph{Convexity threshold (Corollary~\ref{cor:convergence}).}
Since $\mathbf{S}(\mathbf{p})\succeq\mathbf{0}$, the Hessian~\eqref{eq:hessian} satisfies $\nabla^{2}E\preceq\mathbf{I}$ (the identity provides an upper bound on the eigenvalues). For the lower bound, $\beta\mathbf{X}\mathbf{S}\mathbf{X}^{\top}\preceq\tfrac{\beta\sigma_{\max}^{2}}{2}\mathbf{I}$ by~\eqref{eq:softmax-spectral-bound} and submultiplicativity, so
\begin{align*}
\nabla^{2}E(\boldsymbol{\xi})
\;\succeq\; \Bigl(1-\frac{\beta\sigma_{\max}^{2}}{2}\Bigr)\mathbf{I}.
\end{align*}
When $\beta\sigma_{\max}^{2}<2$ the lower bound is strictly positive, so $E$ is strictly convex. The potential $U=\beta E$ is then $m$-strongly convex with $m=\beta(1-\beta\sigma_{\max}^{2}/2)$ and has $L_{U}$-Lipschitz gradient with $L_{U}=\beta(1+\beta\sigma_{\max}^{2}/2)$. The condition number is
\begin{align*}
\kappa = \frac{L_{U}}{m}
= \frac{1+\beta\sigma_{\max}^{2}/2}{1-\beta\sigma_{\max}^{2}/2},
\end{align*}
which diverges as $\beta\sigma_{\max}^{2}\to 2$, reflecting the onset of non-convexity and the proliferation of local minima in the energy landscape.

\section{MALA Variant of Stochastic Attention}\label{app:mala-algorithm}

Algorithm~\ref{alg:stochastic-attention} uses the Unadjusted Langevin Algorithm (ULA), which is simple but introduces an $O(\alpha)$ discretization bias. The Metropolis-Adjusted Langevin Algorithm (MALA) eliminates this bias by appending an accept/reject step to each Langevin proposal. At the step size and temperature used in our experiments ($\alpha{=}0.01$, $\beta{=}2000$), the two algorithms produced practically indistinguishable outputs (Table~\ref{tab:mnist-baselines}); however, the correction becomes important when a larger step size is desired, e.g., to accelerate mixing in high-dimensional or low-temperature settings where $\alpha$ must be increased beyond the regime in which the ULA bias is negligible.

\begin{algorithm}[h]
\caption{MALA Stochastic Attention Sampler}\label{alg:mala}
\begin{algorithmic}[1]
\REQUIRE Memory matrix $\mathbf{X}\in\mathbb{R}^{d\times K}$, inverse temperature $\beta>0$, step size $\alpha\in(0,1)$, iterations $T$, initial state $\boldsymbol{\xi}_0\in\mathbb{R}^d$
\ENSURE Sample trajectory $\boldsymbol{\xi}_0,\boldsymbol{\xi}_1,\dots,\boldsymbol{\xi}_T$
\FOR{$t = 0,1,\dots,T-1$}
    \STATE $\mathbf{a}_t \leftarrow \operatorname{softmax}\!\bigl(\beta\,\mathbf{X}^{\top}\boldsymbol{\xi}_t\bigr)$ \hfill $\triangleright$ attention weights at current state
    \STATE $\boldsymbol{\mu}_t \leftarrow (1-\alpha)\,\boldsymbol{\xi}_t + \alpha\,\mathbf{X}\,\mathbf{a}_t$ \hfill $\triangleright$ Langevin proposal mean
    \STATE $\boldsymbol{\epsilon}_t \sim \mathcal{N}(\mathbf{0},\mathbf{I}_d)$
    \STATE $\boldsymbol{\xi}^{\star} \leftarrow \boldsymbol{\mu}_t + \sqrt{2\alpha/\beta}\;\boldsymbol{\epsilon}_t$ \hfill $\triangleright$ candidate state
    \STATE $\mathbf{a}^{\star} \leftarrow \operatorname{softmax}\!\bigl(\beta\,\mathbf{X}^{\top}\boldsymbol{\xi}^{\star}\bigr)$ \hfill $\triangleright$ attention weights at candidate
    \STATE $\boldsymbol{\mu}^{\star} \leftarrow (1-\alpha)\,\boldsymbol{\xi}^{\star} + \alpha\,\mathbf{X}\,\mathbf{a}^{\star}$ \hfill $\triangleright$ reverse proposal mean
    \STATE $\log r \leftarrow -\beta\bigl[E(\boldsymbol{\xi}^{\star}) - E(\boldsymbol{\xi}_t)\bigr]
             - \frac{\beta}{4\alpha}\bigl[\lVert\boldsymbol{\xi}_t - \boldsymbol{\mu}^{\star}\rVert^2
             - \lVert\boldsymbol{\xi}^{\star} - \boldsymbol{\mu}_t\rVert^2\bigr]$
    \STATE $u \sim \mathrm{Uniform}(0,1)$
    \IF{$\log u < \min(0,\,\log r)$}
        \STATE $\boldsymbol{\xi}_{t+1} \leftarrow \boldsymbol{\xi}^{\star}$ \hfill $\triangleright$ accept
    \ELSE
        \STATE $\boldsymbol{\xi}_{t+1} \leftarrow \boldsymbol{\xi}_t$ \hfill $\triangleright$ reject
    \ENDIF
\ENDFOR
\end{algorithmic}
\end{algorithm}

Compared with Algorithm~\ref{alg:stochastic-attention}, MALA replaces the single-line state update (line~4) with a propose-then-correct cycle (lines~2--13). The additional cost per step is one extra attention evaluation (line~6) and one energy evaluation (line~8), doubling the per-iteration $\mathcal{O}(dK)$ cost. In the limit $\alpha\to 0$ both algorithms coincide; at moderate $\alpha$ the accept/reject step removes $O(\alpha)$ discretization bias.

The ULA update in Algorithm~\ref{alg:stochastic-attention} draws each new state from a Gaussian proposal centered on the Langevin mean $\boldsymbol{\mu}_t$:
\begin{equation}\label{eq:mala-forward}
q(\boldsymbol{\xi}^{\star}\mid\boldsymbol{\xi}_t)
= \mathcal{N}\!\bigl(\boldsymbol{\xi}^{\star};\;\boldsymbol{\mu}_t,\;\tfrac{2\alpha}{\beta}\mathbf{I}\bigr),
\end{equation}
where $\boldsymbol{\mu}_t = (1{-}\alpha)\boldsymbol{\xi}_t + \alpha\,\mathbf{X}\mathbf{a}_t$ encodes one step of gradient descent on $E$.
For the chain to sample from the correct Boltzmann target $p_\beta\propto\exp(-\beta E)$, the transition kernel must satisfy \emph{detailed balance}: $p_\beta(\boldsymbol{\xi})\,q(\boldsymbol{\xi}^{\star}\mid\boldsymbol{\xi}) = p_\beta(\boldsymbol{\xi}^{\star})\,q(\boldsymbol{\xi}\mid\boldsymbol{\xi}^{\star})$ for every pair of states.
When $\alpha>0$ the proposal~\eqref{eq:mala-forward} is \emph{asymmetric}: the forward density $q(\boldsymbol{\xi}^{\star}\mid\boldsymbol{\xi}_t)$ is centered on $\boldsymbol{\mu}_t$, but the reverse density $q(\boldsymbol{\xi}_t\mid\boldsymbol{\xi}^{\star})$ is centered on a \emph{different} point $\boldsymbol{\mu}^{\star}=(1{-}\alpha)\boldsymbol{\xi}^{\star}+\alpha\mathbf{X}\mathbf{a}^{\star}$ (line~7). The gradient at the candidate $\boldsymbol{\xi}^{\star}$ generally differs from the gradient at $\boldsymbol{\xi}_t$, so the two Gaussians are not mirror images of each other. This asymmetry means the ULA chain does not satisfy detailed balance and therefore converges to a distribution that is $O(\alpha)$-close to, but not exactly, $p_\beta$.

MALA corrects this by computing the Metropolis--Hastings acceptance ratio on line~8, which is the logarithm of
\begin{equation}\label{eq:mala-ratio}
r = \frac{p_\beta(\boldsymbol{\xi}^{\star})\;q(\boldsymbol{\xi}_t\mid\boldsymbol{\xi}^{\star})}
         {p_\beta(\boldsymbol{\xi}_t)\;q(\boldsymbol{\xi}^{\star}\mid\boldsymbol{\xi}_t)}.
\end{equation}
We now derive $\log r$ explicitly. The target ratio contributes
\begin{align*}
\log\frac{p_\beta(\boldsymbol{\xi}^{\star})}{p_\beta(\boldsymbol{\xi}_t)}
= -\beta\bigl[E(\boldsymbol{\xi}^{\star})-E(\boldsymbol{\xi}_t)\bigr].
\end{align*}
For the proposal ratio, both the forward and reverse proposals are Gaussians with the same variance $\sigma^{2}=2\alpha/\beta$ but different means. Writing out the log-density of a $\mathcal{N}(\boldsymbol{\mu},\sigma^{2}\mathbf{I})$ and noting that the normalization constants cancel:
\begin{align*}
\log\frac{q(\boldsymbol{\xi}_t\mid\boldsymbol{\xi}^{\star})}{q(\boldsymbol{\xi}^{\star}\mid\boldsymbol{\xi}_t)}
&= -\frac{1}{2\sigma^{2}}\lVert\boldsymbol{\xi}_t-\boldsymbol{\mu}^{\star}\rVert^{2}
   +\frac{1}{2\sigma^{2}}\lVert\boldsymbol{\xi}^{\star}-\boldsymbol{\mu}_t\rVert^{2}\\
&= -\frac{1}{2\cdot(2\alpha/\beta)}\bigl[\lVert\boldsymbol{\xi}_t-\boldsymbol{\mu}^{\star}\rVert^{2}
   -\lVert\boldsymbol{\xi}^{\star}-\boldsymbol{\mu}_t\rVert^{2}\bigr]\\
&= -\frac{\beta}{4\alpha}\bigl[\lVert\boldsymbol{\xi}_t-\boldsymbol{\mu}^{\star}\rVert^{2}
   -\lVert\boldsymbol{\xi}^{\star}-\boldsymbol{\mu}_t\rVert^{2}\bigr].
\end{align*}
Summing the two contributions gives the expression on line~8:
\begin{equation}\label{eq:mala-log-ratio}
\log r
= \underbrace{-\beta\bigl[E(\boldsymbol{\xi}^{\star}) - E(\boldsymbol{\xi}_t)\bigr]}_{\text{(i) target ratio}}
  \;-\;\underbrace{\frac{\beta}{4\alpha}\bigl[\lVert\boldsymbol{\xi}_t - \boldsymbol{\mu}^{\star}\rVert^2 - \lVert\boldsymbol{\xi}^{\star} - \boldsymbol{\mu}_t\rVert^2\bigr]}_{\text{(ii) proposal asymmetry correction}}.
\end{equation}
Term~(i) is the standard Boltzmann factor: it favors moves to lower energy (as in simulated annealing). Term~(ii) corrects for the asymmetry of the Langevin proposal: $\lVert\boldsymbol{\xi}^{\star}-\boldsymbol{\mu}_t\rVert^2$ measures how far the candidate is from the forward proposal mean, while $\lVert\boldsymbol{\xi}_t-\boldsymbol{\mu}^{\star}\rVert^2$ measures how far the current state is from the reverse proposal mean. If the proposal were symmetric (as it would be for a random walk with no gradient), these two norms would cancel and only the energy term would remain. With the Langevin gradient drift they differ, and term~(ii) accounts for the resulting bias.

Given $\log r$, the chain draws $u\sim\mathrm{Uniform}(0,1)$ and accepts the candidate $\boldsymbol{\xi}^{\star}$ if $\log u < \min(0,\log r)$, i.e.\ if $u < \min(1,r)$. The two cases collapse to a single rule: when $r\ge 1$ the candidate is favored by both the target and the proposal correction, so the move is accepted with probability~1; when $r<1$ the candidate is disfavored on net, so it is accepted with probability $r$ and rejected with probability $1{-}r$, in which case the chain stays at $\boldsymbol{\xi}_t$. This stochastic accept/reject step restores detailed balance \emph{exactly}, regardless of the step size $\alpha$. The trade-off is efficiency: when $\alpha$ is too large the proposal overshoots, $r$ is frequently small, and most candidates are rejected, so the chain moves slowly. When $\alpha$ is small the proposal is accurate, $r\approx 1$ almost always, and MALA reduces to ULA. The acceptance rate therefore serves as a built-in diagnostic: high acceptance ($\gtrsim$90\%) means the discretization bias is small and ULA suffices; low acceptance means the Metropolis correction is actively preventing the chain from drifting to the wrong distribution.

The correction earns its keep at larger step sizes: at the operating point of our MNIST experiment ($\alpha{=}0.01$, $d{=}784$, $\beta{=}2000$) the acceptance rate is 99.2\% (Table~\ref{tab:mala-generation}), so the correction is a near no-op, but if the step size is increased to accelerate mixing (e.g., $\alpha{=}0.1$ or larger), the ULA bias grows as $O(\alpha)$ and the stationary distribution shifts away from the true target. In this regime MALA remains unbiased (albeit with a lower acceptance rate), making it the preferred choice; practitioners should therefore monitor the acceptance rate, treating values above $\sim$90\% as a signal that ULA suffices while lower rates signal that the Metropolis correction is doing real work.

\section{Load-Ratio Phase Diagram}\label{app:phase-diagram}

The synthetic experiments of the main text characterized the retrieval-to-generation transition along the temperature axis at fixed load. We complemented that view with a two-dimensional phase diagram over the joint $(K/d, \beta)$ plane, swept over $K/d \in [0.05, 2.0]$ and $\beta \in [0.5, 100]$ at $d{=}64$, with each cell averaging over five independent datasets drawn uniformly from $\mathbb{S}^{d-1}$. The full phase diagram of attention concentration $C = 1 - H(\mathbf{a})/\log K$ is in Fig.~\ref{fig:phase-diagram}: the dashed $C=0.5$ contour separated a retrieval regime in the upper-left (high $\beta$, low $K/d$, attention concentrated on a single memory) from a diffuse regime in the lower-right (low $\beta$ or high $K/d$, attention spread across many memories). The boundary shifted rightward as $\beta$ increased, consistent with the classical Hopfield capacity scaling: at higher temperatures the sampler can accommodate more stored patterns before attention loses its retrieval-like sharpness, and the empirical contour qualitatively tracked the $\beta\sigma_{\max}^{2}{=}2$ convexity boundary identified in the main text.

\begin{figure}[t]
\centering
\includegraphics[width=0.65\textwidth]{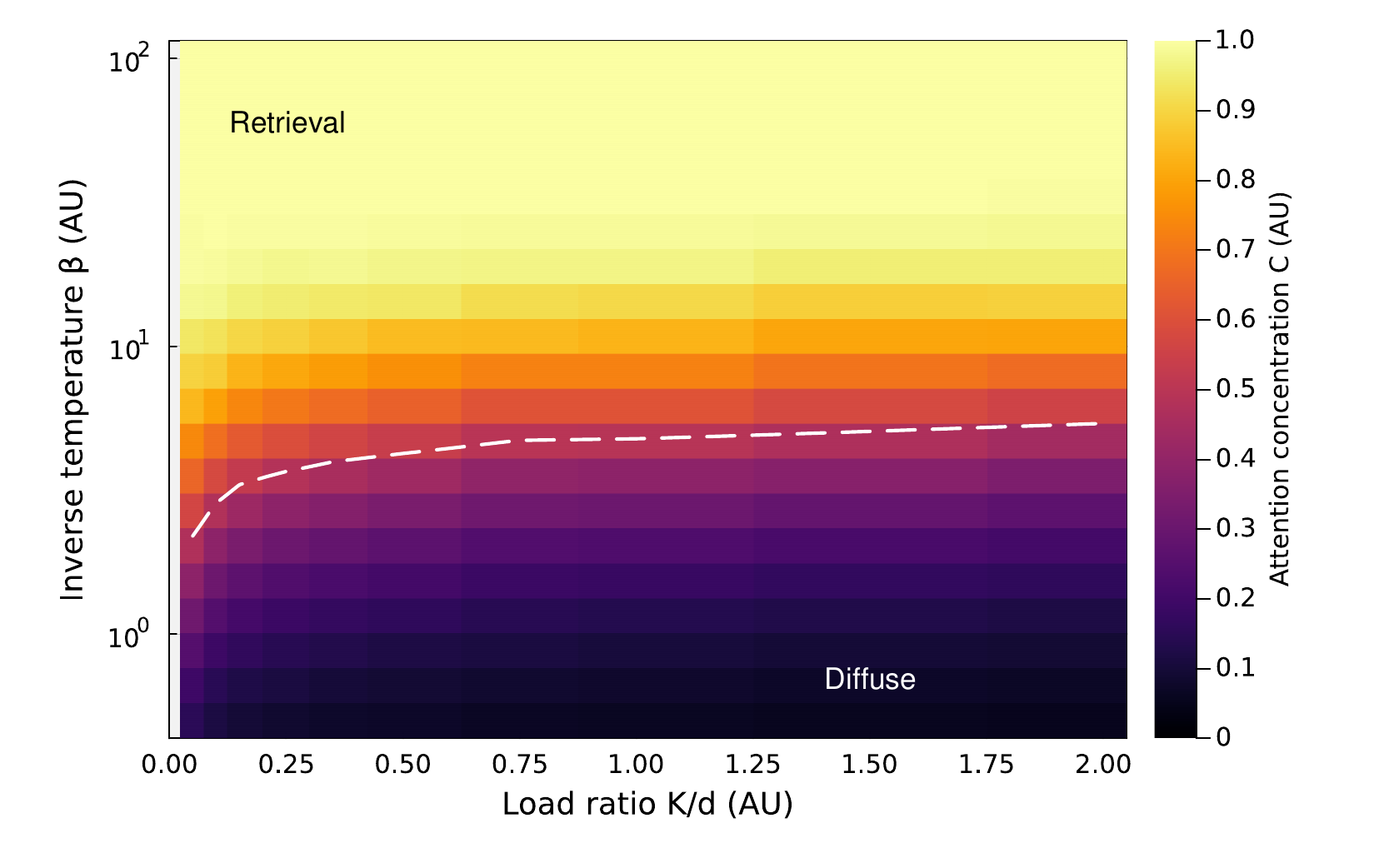}
\caption{Phase diagram of attention concentration $C = 1 - H(\mathbf{a})/\log K$ over load ratio $K/d$ (horizontal) and inverse temperature $\beta$ (vertical, log~scale), with $d=64$. Each cell averages over five independent datasets. The dashed contour marks $C=0.5$, separating a retrieval regime (upper-left, warm colors) from a diffuse regime (lower-right, dark colors).}
\label{fig:phase-diagram}
\end{figure}

\section{Multi-Digit MNIST Generalization}\label{app:multidigit}

To verify that the results reported for digit ``3'' in the main text were not specific to a single morphological class, we repeated the identical multi-chain protocol (30~chains, $T{=}5{,}000$, burn-in~$2{,}000$, thin every 100 then sub-sample 5, $\beta{=}2000$, $\alpha{=}0.01$, $K{=}100$ stored patterns per digit) on two additional MNIST digits chosen for their distinct morphological properties: digit ``1'' (simple stroke, near-degenerate orientation) and digit ``8'' (complex topology with two enclosed loops and high intra-class variance). All seven baselines (including the VAE) and the MALA variant were run with the identical protocol, with results in Tables~\ref{tab:mnist-digit1} and~\ref{tab:mnist-digit8}. The two digits stress different aspects of the Hopfield energy: digit ``1'' produces a near-rank-deficient memory matrix where most variation is captured by orientation, while digit ``8'' produces a high-rank, multimodal memory where the two loops can fail independently. If the SA results were an artifact of digit ``3'' specifically, we would expect divergent behavior here.

\begin{table}[h]
\centering
\caption{Quantitative comparison on MNIST digit ``1'' ($K{=}100$, $d{=}784$, $\beta{=}2000$). Same protocol as Table~\ref{tab:mnist-baselines}, including the VAE baseline (latent dim 8, same two-phase training). MALA acceptance rate: 99.2\%. Values are mean $\pm$ SE across 30 chains.}
\label{tab:mnist-digit1}
\small
\begin{tabular}{@{}lccc@{}}
\toprule
Method & $\mathcal{N}$ $\uparrow$ & $\bar{\mathcal{D}}$ $\uparrow$ & $\bar{E}$ $\downarrow$ \\
\midrule
Bootstrap (replay)          & $0.000 \pm 0.000$ & $0.416 \pm 0.019$ & $-0.500 \pm 0.000$ \\
Gaussian perturbation       & $0.004 \pm 0.000$ & $0.422 \pm 0.017$ & $-0.496 \pm 0.000$ \\
Random convex combination   & $0.097 \pm 0.001$ & $0.007 \pm 0.000$ & $-0.398 \pm 0.001$ \\
GMM-PCA ($r{=}50$, $C{=}10$) & $0.130 \pm 0.005$ & $0.401 \pm 0.017$ & $-0.366 \pm 0.005$ \\
VAE (latent${=}8$)          & $0.180 \pm 0.006$ & $0.408 \pm 0.008$ & $-0.320 \pm 0.006$ \\
MALA                        & $0.151 \pm 0.001$ & $0.586 \pm 0.001$ & $-0.305 \pm 0.001$ \\
\textbf{Stochastic attention} & $\mathbf{0.153 \pm 0.001}$ & $\mathbf{0.591 \pm 0.001}$ & $\mathbf{-0.303 \pm 0.001}$ \\
\bottomrule
\end{tabular}
\end{table}

\begin{table}[h]
\centering
\caption{Quantitative comparison on MNIST digit ``8'' ($K{=}100$, $d{=}784$, $\beta{=}2000$). Same protocol as Table~\ref{tab:mnist-baselines}, including the VAE baseline (latent dim 8, same two-phase training). MALA acceptance rate: 99.2\%. Values are mean $\pm$ SE across 30 chains.}
\label{tab:mnist-digit8}
\small
\begin{tabular}{@{}lccc@{}}
\toprule
Method & $\mathcal{N}$ $\uparrow$ & $\bar{\mathcal{D}}$ $\uparrow$ & $\bar{E}$ $\downarrow$ \\
\midrule
Bootstrap (replay)          & $0.000 \pm 0.000$ & $0.421 \pm 0.012$ & $-0.500 \pm 0.000$ \\
Gaussian perturbation       & $0.004 \pm 0.000$ & $0.450 \pm 0.011$ & $-0.496 \pm 0.000$ \\
Random convex combination   & $0.099 \pm 0.000$ & $0.007 \pm 0.000$ & $-0.395 \pm 0.000$ \\
GMM-PCA ($r{=}50$, $C{=}10$) & $0.194 \pm 0.005$ & $0.420 \pm 0.011$ & $-0.305 \pm 0.005$ \\
VAE (latent${=}8$)          & $0.209 \pm 0.005$ & $0.397 \pm 0.009$ & $-0.291 \pm 0.005$ \\
MALA                        & $0.151 \pm 0.001$ & $0.554 \pm 0.002$ & $-0.305 \pm 0.001$ \\
\textbf{Stochastic attention} & $\mathbf{0.152 \pm 0.001}$ & $\mathbf{0.557 \pm 0.001}$ & $\mathbf{-0.303 \pm 0.001}$ \\
\bottomrule
\end{tabular}
\end{table}

Across all three digit classes, the stochastic attention sampler and MALA produced practically indistinguishable metrics, both dominating every baseline on diversity simultaneously. The VAE (latent dim 8) was the strongest non-Langevin baseline on novelty ($0.180$--$0.209$ vs.\ $0.130$--$0.194$ for GMM-PCA), but both learned models were far below the Langevin methods on diversity ($0.397$--$0.408$ for VAE vs.\ $0.557$--$0.600$ for SA), confirming that a static learned density cannot match the ergodic coverage of Langevin dynamics. The MALA acceptance rate was $99.2\%$ for all digits, confirming that ULA discretization bias remains negligible regardless of digit morphology. Representative $4{\times}4$ sample grids for each method (Figs.~\ref{fig:mnist-digit1-grids},~\ref{fig:mnist-digit8-grids}) showed the same qualitative pattern as digit ``3'': bootstrap outputs were exact stored copies, Gaussian perturbation was visually indistinguishable from bootstrap, random convex combinations produced uniform blurry averages, and both Langevin methods generated diverse, structured digits.

\begin{figure}[h]
\centering
\includegraphics[width=0.19\textwidth]{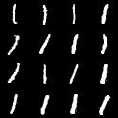}%
\hfill
\includegraphics[width=0.19\textwidth]{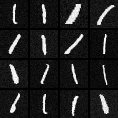}%
\hfill
\includegraphics[width=0.19\textwidth]{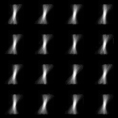}%
\hfill
\includegraphics[width=0.19\textwidth]{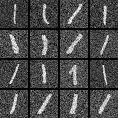}%
\hfill
\includegraphics[width=0.19\textwidth]{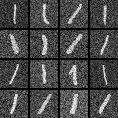}

\small
\makebox[0.19\textwidth]{\centering (a) Bootstrap}%
\hfill
\makebox[0.19\textwidth]{\centering (b) Gaussian}%
\hfill
\makebox[0.19\textwidth]{\centering (c) Convex}%
\hfill
\makebox[0.19\textwidth]{\centering (d) MALA}%
\hfill
\makebox[0.19\textwidth]{\centering (e) Ours}
\caption{Generated MNIST digit ``1'' samples ($4{\times}4$ grids). The pattern matches digit ``3'': only the Langevin-based methods (d,\,e) produce diverse, structured outputs.}
\label{fig:mnist-digit1-grids}
\end{figure}

\begin{figure}[h]
\centering
\includegraphics[width=0.19\textwidth]{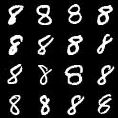}%
\hfill
\includegraphics[width=0.19\textwidth]{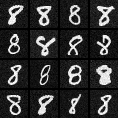}%
\hfill
\includegraphics[width=0.19\textwidth]{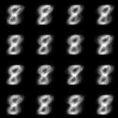}%
\hfill
\includegraphics[width=0.19\textwidth]{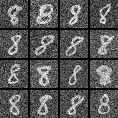}%
\hfill
\includegraphics[width=0.19\textwidth]{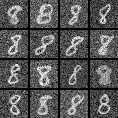}

\small
\makebox[0.19\textwidth]{\centering (a) Bootstrap}%
\hfill
\makebox[0.19\textwidth]{\centering (b) Gaussian}%
\hfill
\makebox[0.19\textwidth]{\centering (c) Convex}%
\hfill
\makebox[0.19\textwidth]{\centering (d) MALA}%
\hfill
\makebox[0.19\textwidth]{\centering (e) Ours}
\caption{Generated MNIST digit ``8'' samples ($4{\times}4$ grids). Despite digit ``8'' having higher intra-class variance and more complex topology than digit ``3'', the qualitative pattern is unchanged: Langevin methods dominate all baselines.}
\label{fig:mnist-digit8-grids}
\end{figure}

The baseline comparison above fixes $\beta{=}2000$, which places the sampler deep in the retrieval regime: each chain stays near its seed pattern and the ``generation'' is local variation within a single energy basin.
To show the full retrieval-to-generation trade-off on real images, we displayed $4{\times}4$ SA grids at four inverse temperatures alongside 16 stored patterns for reference, and computed the corresponding metrics (Fig.~\ref{fig:temp-spectrum-digit8}, Table~\ref{tab:temp-spectrum-digit8}).

\begin{table}[h]
\centering
\caption{Temperature spectrum for digit ``8'' ($K{=}100$, $d{=}784$, $\alpha{=}0.01$, 30 chains). $\mathrm{SNR}=\sqrt{\alpha\beta/2d}$ is the per-step signal-to-noise ratio; the transition occurs near $\mathrm{SNR}^{*}=\sqrt{\alpha/(2\sqrt{d})}$ (Proposition~\ref{prop:entropy-derivative}); at $d{=}784$ and $\alpha{=}0.01$ this gives $\mathrm{SNR}^{*}\approx 0.013$. As SNR falls through this region the sampler crosses from structured retrieval into diffuse generation. Values are mean $\pm$ SE across 30 chains.}
\label{tab:temp-spectrum-digit8}
\small
\begin{tabular}{@{}rccccc@{}}
\toprule
$\beta$ & SNR & $\mathcal{N}$ $\uparrow$ & $\bar{\mathcal{D}}$ $\uparrow$ & $\bar{E}$ $\downarrow$ & Mean max-$\cos$ \\
\midrule
2000 & 0.113 & $0.152 \pm 0.001$ & $0.557 \pm 0.001$ & $-0.3 \pm 0.00$ & $0.848 \pm 0.001$ \\
200  & 0.036 & $0.547 \pm 0.002$ & $0.872 \pm 0.002$ & $1.5 \pm 0.01$  & $0.453 \pm 0.002$ \\
50   & 0.018 & $0.753 \pm 0.002$ & $0.965 \pm 0.002$ & $7.4 \pm 0.03$  & $0.247 \pm 0.002$ \\
10   & 0.008 & $0.870 \pm 0.003$ & $0.992 \pm 0.002$ & $38.6 \pm 0.16$ & $0.130 \pm 0.003$ \\
\bottomrule
\end{tabular}
\end{table}

At $\beta{=}2000$ the outputs were recognizable eights that closely resembled individual stored patterns (mean max-cos 0.85; Table~\ref{tab:temp-spectrum-digit8}); novelty was low and diversity came entirely from the multi-chain initialization, not from any single chain exploring between basins.
As $\beta$ decreased, the energy barriers shrank, chains escaped their initial basins, and novelty/diversity rose sharply.
At $\beta{=}200$ the mean max-cosine dropped to 0.45, meaning outputs sat roughly halfway between stored patterns, a regime of genuine interpolation.
By $\beta{=}50$ the samples were highly novel (0.75) but began to lose recognizable digit structure, and at $\beta{=}10$ the outputs were essentially isotropic noise with no visual resemblance to eights.
This illustrates the fundamental retrieval-generation trade-off: no single $\beta$ simultaneously maximizes fidelity and novelty. The operating point must be chosen according to the application, and the temperature parameter gives the user explicit control over this trade-off.

\begin{figure}[h]
\centering
\includegraphics[width=0.19\textwidth]{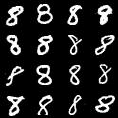}%
\hfill
\includegraphics[width=0.19\textwidth]{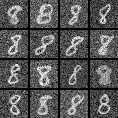}%
\hfill
\includegraphics[width=0.19\textwidth]{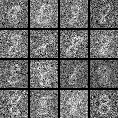}%
\hfill
\includegraphics[width=0.19\textwidth]{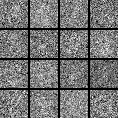}%
\hfill
\includegraphics[width=0.19\textwidth]{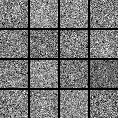}

\small
\makebox[0.19\textwidth]{\centering Stored}%
\hfill
\makebox[0.19\textwidth]{\centering $\beta{=}2000$}%
\hfill
\makebox[0.19\textwidth]{\centering $\beta{=}200$}%
\hfill
\makebox[0.19\textwidth]{\centering $\beta{=}50$}%
\hfill
\makebox[0.19\textwidth]{\centering $\beta{=}10$}
\caption{Temperature spectrum for MNIST digit ``8.'' From left to right: 16 stored patterns; SA samples at $\beta{=}2000$ (retrieval, local variation around stored patterns); $\beta{=}200$ (intermediate, genuine interpolation); $\beta{=}50$ (high novelty, structure fading); $\beta{=}10$ (diffuse noise). The trade-off between fidelity and novelty is governed entirely by~$\beta$.}
\label{fig:temp-spectrum-digit8}
\end{figure}

\section{Single-Chain Diversity Analysis}\label{app:single-chain}

The main-text MNIST baseline (Table~\ref{tab:mnist-baselines}) used 30 independent chains each initialized near a different stored pattern; the reported diversity of $0.600$ included both \emph{initialization diversity} (spread from 30 distinct seeds) and \emph{within-chain mixing} (sampling from a fixed starting point). To quantify these contributions separately (Table~\ref{tab:single-chain}, Figure~\ref{fig:generation-regimes}), we ran one chain from a fixed initialization (stored pattern~1 plus $\sigma_{\mathrm{init}}{=}0.01$ Gaussian noise, seed fixed) for $T{=}50{,}000$ steps (burn-in $10{,}000$, thin every~100, yielding 400 samples) at $\beta\in\{2000,\,200,\,50\}$ ($\mathrm{SNR}\in\{0.113,\,0.036,\,0.018\}$) on digit~``3''.

\begin{table}[h]
\centering
\caption{Single-chain vs.\ multi-chain diversity on digit ``3'' ($K{=}100$, $d{=}784$, $\alpha{=}0.01$). $\mathrm{SNR}=\sqrt{\alpha\beta/2d}$. The multi-chain row reproduces the SA entry from Table~\ref{tab:mnist-baselines}. The diversity gap of $0.318$ at $\mathrm{SNR}{=}0.113$ is the initialization contribution; single-chain diversity exceeds the multi-chain value once SNR falls toward the entropy inflection point (Proposition~\ref{prop:entropy-derivative}).}
\label{tab:single-chain}
\small
\setlength{\tabcolsep}{4pt}
\resizebox{\textwidth}{!}{%
\begin{tabular}{@{}llccccc@{}}
\toprule
Protocol & $\beta$ & SNR & $\mathcal{N}$ $\uparrow$ & $\bar{\mathcal{D}}$ $\uparrow$ & $\bar{E}$ $\downarrow$ & Max-$\cos$ \\
\midrule
Multi-chain (30 chains) & 2000 & 0.113 & $0.152 \pm 0.001$ & $0.600 \pm 0.001$ & $-0.303 \pm 0.001$ & $0.848 \pm 0.001$ \\
Single-chain            & 2000 & 0.113 & $0.153 \pm 0.000$ & $0.282 \pm 0.001$ & $-0.300 \pm 0.000$ & $0.847 \pm 0.000$ \\
Single-chain            & 200  & 0.036 & $0.551 \pm 0.001$ & $0.796 \pm 0.002$ & $\phantom{-}1.5\phantom{0} \pm 0.0\phantom{0}$ & $0.449 \pm 0.001$ \\
Single-chain            & 50   & 0.018 & $0.756 \pm 0.002$ & $0.967 \pm 0.002$ & $\phantom{-}7.4\phantom{0} \pm 0.0\phantom{0}$ & $0.244 \pm 0.002$ \\
\bottomrule
\end{tabular}}
\end{table}

\begin{figure}[h]
\centering
\begin{minipage}[t]{0.31\textwidth}
\centering
\includegraphics[width=\linewidth]{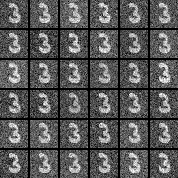}\\[-2pt]
\small (a) $\beta{=}2000$, SNR$=0.113$\\structured retrieval ($\bar{\mathcal{D}}{=}0.282$)
\end{minipage}
\hfill
\begin{minipage}[t]{0.31\textwidth}
\centering
\includegraphics[width=\linewidth]{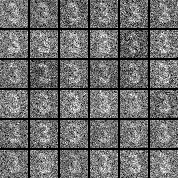}\\[-2pt]
\small (b) $\beta{=}200$, SNR$=0.036$\\genuine generation ($\bar{\mathcal{D}}{=}0.796$)
\end{minipage}
\hfill
\begin{minipage}[t]{0.31\textwidth}
\centering
\includegraphics[width=\linewidth]{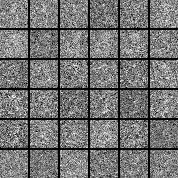}\\[-2pt]
\small (c) $\beta{=}50$, SNR$=0.018$\\diffuse ($\bar{\mathcal{D}}{=}0.967$)
\end{minipage}
\caption{Single-chain samples ($6{\times}6$ grids, digit ``3'') from a fixed seed at three operating points. \textbf{(a)}~At SNR$=0.113$ the chain stays near one stored pattern; diversity is low ($0.282$). \textbf{(b)}~At SNR$=0.036$ the chain spontaneously crosses energy barriers; single-chain diversity ($0.796$) \emph{exceeds} the 30-chain $\beta{=}2000$ value ($0.600$), confirming genuine generation. \textbf{(c)}~At SNR$=0.018$ the chain is fully diffuse; samples lose recognizable digit structure.}
\label{fig:generation-regimes}
\end{figure}

At $\beta{=}2000$ the single-chain diversity of $0.282$ is not negligible: the chain samples a genuine distribution within its energy basin rather than collapsing to a point mass. As $\beta$ decreased the energy barriers shrank, chains escaped their initial basins, and single-chain diversity rose sharply, mirroring the temperature-spectrum results of Appendix~\ref{app:multidigit}. At $\beta{=}200$ single-chain diversity ($0.796$) exceeds the multi-chain value at $\beta{=}2000$, and at $\beta{=}50$ it approaches the isotropic limit ($0.967$, mean max-cos $0.244$). The multi-chain protocol at $\beta{=}2000$ is therefore best understood as a \emph{structured retrieval} strategy in which each chain contributes low-energy samples from a distinct basin, while single-chain operation at $\beta{=}200$ provides genuine generative diversity without requiring multiple initializations.

At $\beta{=}2000$ the single chain remained in the basin of its seed pattern for the entire $50{,}000$-step run (${\approx}12$\,s wall-clock on a laptop CPU): the max-cosine to the seed pattern stayed above $0.84$ at every thinned sample, and the chain never visited the basin of any other stored pattern, so the \emph{inter-basin} mixing time at this temperature exceeded $50{,}000$ steps, consistent with the exponential-barrier scaling discussed in the main text. \emph{Intra-basin} mixing, by contrast, was fast: energy and max-cosine stabilized within the $2{,}000$-step burn-in, and consecutive thinned samples (every $100$ steps) had pairwise cosine distance $0.11\pm0.01$, indicating decorrelation within $O(100)$ steps. At $\beta{=}200$ the picture changed: the chain crossed into a different basin within the first ${\approx}5{,}000$ steps and visited multiple basins over the $50{,}000$-step run, yielding the observed single-chain diversity of $0.796$. The basin-crossing behavior is also visible directly in the chain dynamics (Fig.~\ref{fig:trajectories}): one MNIST chain at $\beta{=}50$ on digit~``3'' visited five distinct stored ``3'' basins (stored memories $\#95,\#95,\#100,\#18,\#53,\#96$ at the six snapshots), with the denoise readout snapping to whichever basin the chain currently occupied, and one unconditional Olivetti chain at $\beta_{\mathrm{chain}}{=}200$ initialized in subject~0 visited five different subjects (subject~6 by $T{=}100$, then~7,~4,~6,~8 over the next $3{,}000$ steps); this drift behavior is what produces the $10.4\%$ unconditional subject-recovery rate and motivates the masking mechanism of the main text.

\begin{figure}[h]
\centering
\includegraphics[width=0.55\textwidth]{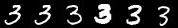}\\[0.4em]
\includegraphics[width=0.95\textwidth]{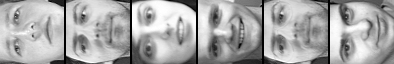}

\small
\makebox[0.95\textwidth]{\centering $T = 0,\, 1000,\, 5000,\, 10000,\, 25000,\, 50000$\ \ (MNIST, top); $T = 0,\, 100,\, 500,\, 1000,\, 2000,\, 3000$\ \ (Olivetti, bottom)}
\caption{Chain-trajectory snapshots from a single Langevin chain at successive timesteps; no burn-in. \textbf{Top:} one MNIST chain at $\beta{=}50$ on digit~``3'', warm-started from stored ``3'' \#95; each cell is the denoise readout (one noise-free attention update at $\beta_{\mathrm{read}}{=}2{,}000$) of the chain state at the indicated $T$. The chain visits stored basins $\{95,\,95,\,100,\,18,\,53,\,96\}$ across the six checkpoints, demonstrating inter-basin mixing on the $\sim 5{,}000$-step timescale established above. \textbf{Bottom:} one unconditional Olivetti chain at $\beta_{\mathrm{chain}}{=}200$, warm-started in subject~0; each cell is the denoise readout at $\beta_{\mathrm{read}}{=}10{,}000$. The chain leaves subject~0 by $T{=}100$ and visits subjects $\{6,\,7,\,4,\,6,\,8\}$ at the subsequent checkpoints, illustrating the inter-basin drift the hard mask of the main text suppresses.}
\label{fig:trajectories}
\end{figure}

% ─────────────────────────────────────────────────────────────────────────────
\section{Protein Sequence Generation}\label{app:protein}
% ─────────────────────────────────────────────────────────────────────────────

To test stochastic attention on a non-image domain with discrete structure, we applied the full experimental pipeline to protein sequences from the Pfam RRM family (PF00076, RNA Recognition Motif)~\cite{mistryPfamProteinFamilies2021}. We downloaded the Pfam seed alignment (Stockholm format), removed columns with $>50\%$ gaps and sequences with $>30\%$ gaps, yielding $K{=}68$ aligned sequences of length $L{=}71$ positions over the 20 standard amino acids. Each sequence was one-hot encoded into $\mathbb{R}^{1420}$ ($20\times 71$), projected via PCA retaining 95\% of variance ($1420\to 59$ dimensions), and $\ell_2$-normalized to form the memory matrix $\hat{\mathbf{X}}\in\mathbb{R}^{59\times 68}$. The entropy inflection analysis (Proposition~\ref{prop:entropy-derivative}) yielded $\beta^*\approx 3.85$ ($\mathrm{SNR}^*=0.018$), compared to the theoretical prediction $\beta^*=\sqrt{d}=7.68$ for random unit-norm patterns (Fig.~\ref{fig:protein-entropy}); the ratio of $0.50$ reflects the structured (non-random) geometry of the protein family, where conserved positions reduce the effective variance of the similarities $e_k$ and shift the transition to lower $\beta$. We set $\beta_{\mathrm{ret}}=20\beta^*\approx 77$ ($\mathrm{SNR}=0.081$) for the structured-retrieval regime and $\beta_{\mathrm{gen}}=2\beta^*\approx 8$ ($\mathrm{SNR}=0.026$, just above the inflection) for the generation regime.

We ran the identical multi-chain protocol used for MNIST: 30 chains, $T{=}5{,}000$ iterations, burn-in $2{,}000$, thinning every 100 steps, 5 samples per chain, yielding $S{=}150$ generated sequences. SA was run at both $\beta_{\mathrm{ret}}{=}77$ (retrieval) and $\beta_{\mathrm{gen}}{=}8$ (generation) to demonstrate the temperature knob on a non-image domain, and all seven baselines (bootstrap, Gaussian perturbation, random convex combination, GMM-PCA, VAE with latent dim 8, MALA) were run with the same protocol. Generated PCA-space vectors were decoded back to amino acid sequences by inverse-PCA followed by per-position argmax over the 20 amino acid channels. In addition to the standard metrics (novelty~$\mathcal{N}$, diversity~$\bar{\mathcal{D}}$, energy~$\bar{E}$), we reported two protein-specific measures: \emph{sequence identity} to the nearest stored member (max over stored sequences of the fraction of non-gap positions with matching amino acids) and \emph{amino acid composition KL divergence} ($D_{\mathrm{KL}}(p_{\mathrm{stored}}\|q_{\mathrm{gen}})$ over the 20 amino acid frequencies).

\begin{table}[h]
\centering
\caption{Protein sequence generation on Pfam PF00076 (RRM family, $K{=}68$, $L{=}71$, $d{=}59$ after PCA). Two SA rows demonstrate the temperature knob: $\beta{=}77$ (retrieval, $20\beta^*$) and $\beta{=}8$ (generation, $2\beta^*$). SeqID: maximum sequence identity to any stored sequence. KL: amino acid composition divergence from stored sequences (lower is better). $^\dagger$Energy is positive at $\beta{=}8$ because samples explore off the attractor manifold. MALA acceptance rate: 99.8\%.}
\label{tab:protein}
\small
\setlength{\tabcolsep}{4pt}
\resizebox{\textwidth}{!}{%
\begin{tabular}{@{}lccccc@{}}
\toprule
Method & $\mathcal{N}$ $\uparrow$ & $\bar{\mathcal{D}}$ $\uparrow$ & $\bar{E}$ $\downarrow$ & SeqID & KL $\downarrow$ \\
\midrule
Bootstrap (replay)          & $0.000 \pm 0.000$ & $1.000 \pm 0.012$ & $-0.500 \pm 0.000$ & $0.645 \pm 0.004$ & 0.133 \\
Gaussian perturbation       & $0.008 \pm 0.000$ & $1.002 \pm 0.007$ & $-0.492 \pm 0.000$ & $0.633 \pm 0.006$ & 0.132 \\
Random convex combination   & $0.510 \pm 0.009$ & $0.994 \pm 0.009$ & $-0.068 \pm 0.001$ & $0.562 \pm 0.000$ & 2.091 \\
GMM-PCA ($r{=}50$, $C{=}10$) & $0.606 \pm 0.009$ & $1.000 \pm 0.007$ & $+0.077 \pm 0.009$ & $0.540 \pm 0.003$ & 0.731 \\
VAE (latent${=}8$)          & $0.621 \pm 0.007$ & $0.834 \pm 0.022$ & $+0.118 \pm 0.007$ & $0.532 \pm 0.003$ & 0.416 \\
MALA ($\beta{=}77$)         & $0.249 \pm 0.004$ & $0.992 \pm 0.007$ & $-0.115 \pm 0.005$ & $0.609 \pm 0.012$ & 0.112 \\
SA ($\beta{=}77$, retrieval) & $0.243 \pm 0.004$ & $0.991 \pm 0.007$ & $-0.120 \pm 0.006$ & $0.616 \pm 0.012$ & 0.107 \\
\textbf{SA ($\beta{=}8$, generation)} & $\mathbf{0.623 \pm 0.006}$ & $\mathbf{1.001 \pm 0.006}$ & $3.08 \pm 0.06^\dagger$ & $\mathbf{0.538 \pm 0.006}$ & \textbf{0.060} \\
\bottomrule
\end{tabular}}
\end{table}

\begin{figure}[h]
\centering
\begin{minipage}[t]{0.48\textwidth}
\centering
\IfFileExists{figs/Fig_protein_aa_frequencies.pdf}{%
  \includegraphics[width=\linewidth]{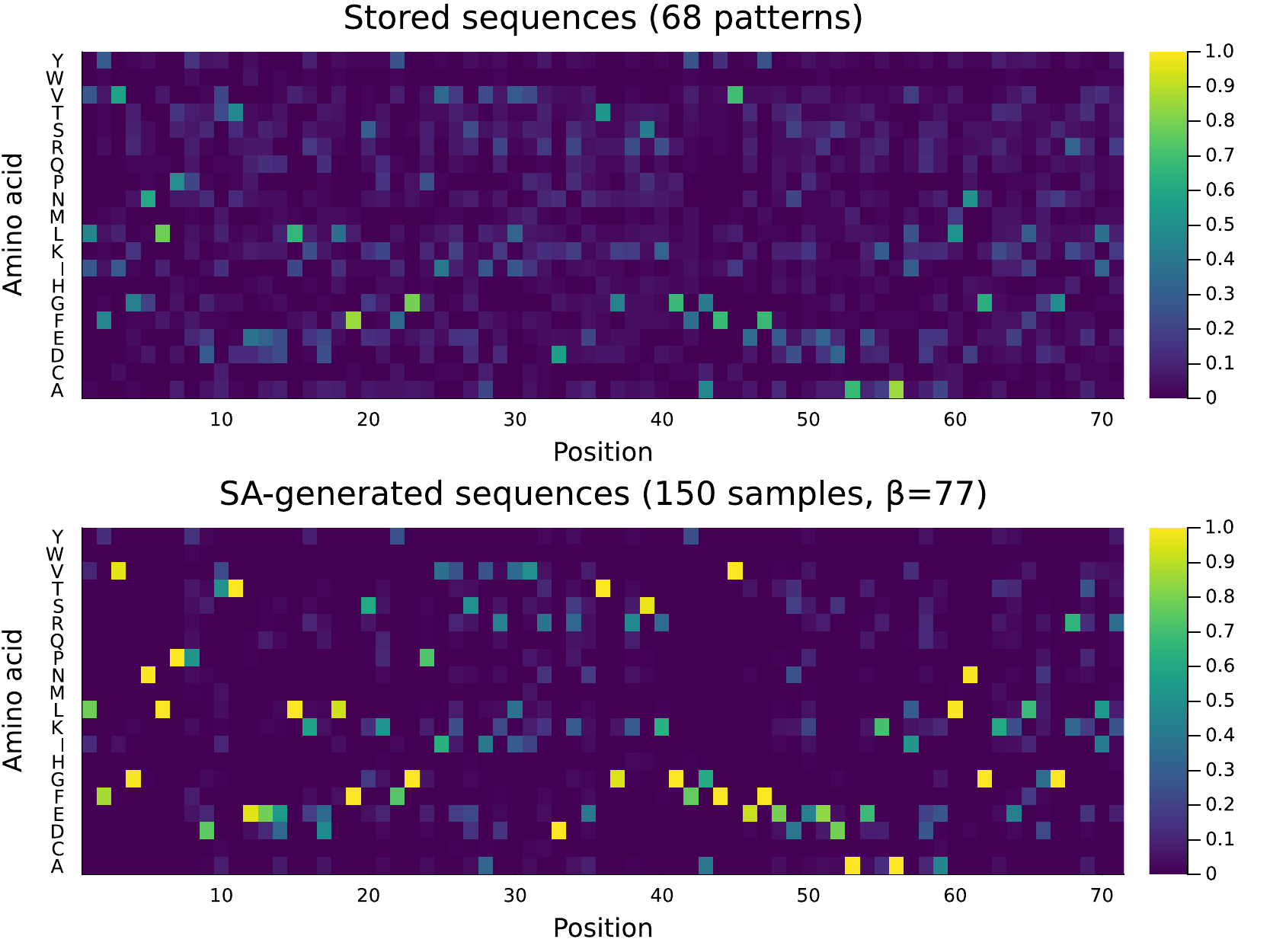}%
}{%
  \framebox[\linewidth]{\rule{0pt}{4cm}\small [AA frequency heatmap]}%
}
\end{minipage}
\hfill
\begin{minipage}[t]{0.48\textwidth}
\centering
\IfFileExists{figs/Fig_protein_sequence_identity.pdf}{%
  \includegraphics[width=\linewidth]{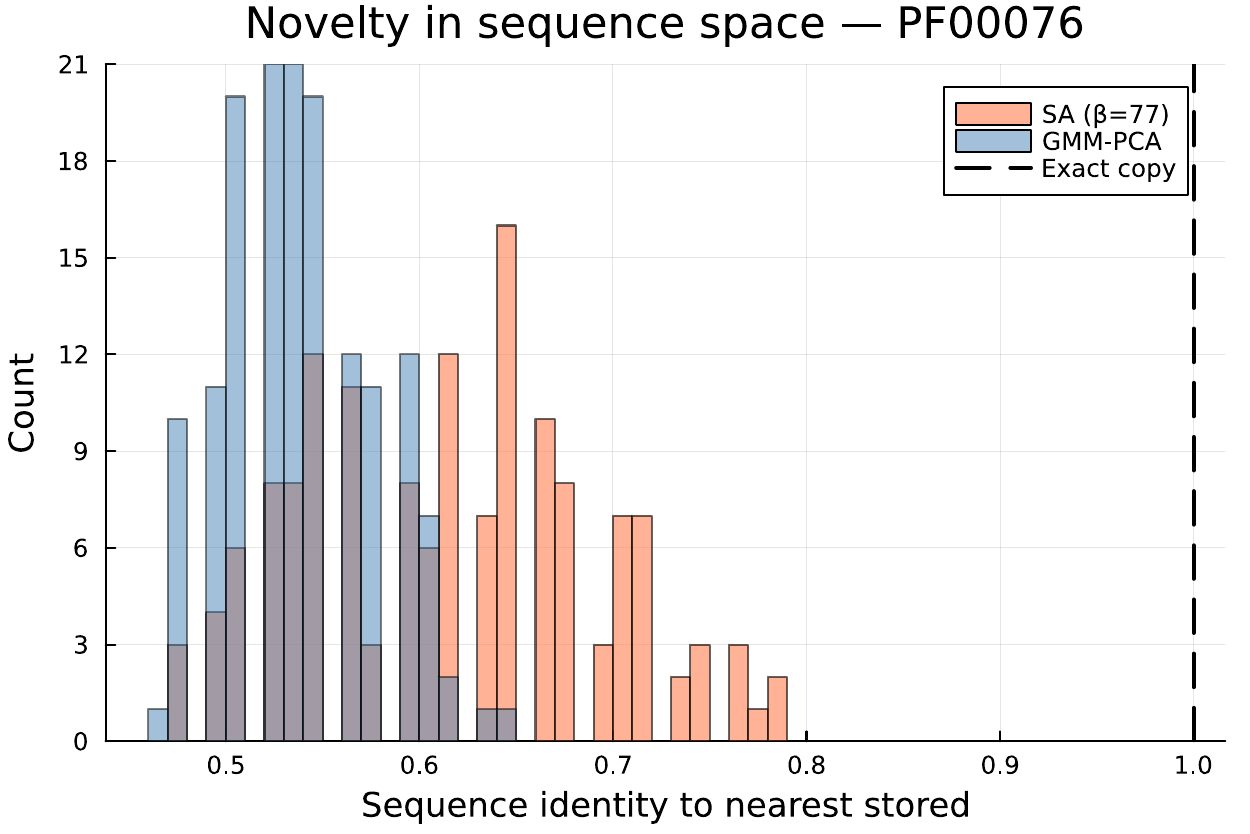}%
}{%
  \framebox[\linewidth]{\rule{0pt}{4cm}\small [Sequence identity histogram]}%
}
\end{minipage}
\caption{Protein sequence generation on PF00076. \textbf{Left:} Per-position amino acid frequencies for stored sequences (top) and SA-generated sequences (bottom). SA preserves the conserved positions of the RRM family while introducing variation at non-conserved sites. \textbf{Right:} Distribution of sequence identity to the nearest stored member for SA and GMM-PCA. SA generates sequences that are ${\approx}62\%$ identical to their nearest stored neighbor, while GMM-PCA produces more distant sequences (${\approx}54\%$) with much higher compositional divergence (KL$=0.731$ vs.\ $0.107$).}
\label{fig:protein}
\end{figure}

The same two-regime structure observed on MNIST appeared on the protein experiment (Table~\ref{tab:protein}, Fig.~\ref{fig:protein}). At $\beta{=}77$ (retrieval), SA generated sequences ${\approx}62\%$ identical to their nearest stored member with the lowest KL divergence of any method at this temperature ($0.107$); SA and MALA were again indistinguishable (MALA acceptance $99.8\%$), consistent with the MNIST results. Random convex combinations catastrophically distorted the composition (KL$=2.091$) because averaging one-hot vectors in PCA space does not preserve per-position amino acid structure. At $\beta{=}8$ (generation, $2\beta^*$), SA matched the VAE on novelty ($0.623$ vs.\ $0.621$) and exceeded it on diversity ($1.001$ vs.\ $0.834$), while achieving dramatically lower compositional divergence (KL$=0.060$ vs.\ $0.416$ for the VAE, a $6.9{\times}$ improvement); SA-generated sequences at generation temperature had the closest amino acid composition to the real RRM family of any method tested, including all baselines and SA at retrieval temperature. The temperature knob worked as designed: lowering $\beta$ from retrieval to generation traded sequence identity ($0.616\to 0.538$) for novelty ($0.243\to 0.623$) while preserving family-level compositional fidelity, with energy rising above zero (samples leave the attractor manifold) but the Langevin gradient keeping the chain closer to the memory geometry than any learned baseline. The entropy inflection analysis validated at this new dimensionality: the empirical $\beta^*{=}3.85$ was approximately half the random-pattern prediction $\sqrt{d}{=}7.68$ (Fig.~\ref{fig:protein-entropy}), an expected discrepancy because RRM sequences share conserved residues at many positions, reducing the effective variance of the query-key similarities and shifting the phase transition to lower $\beta$; the inflection criterion (Eq.~\ref{eq:inflection}) automatically accounts for this structure, providing the correct operating point without manual tuning.

\begin{figure}[h]
\centering
\includegraphics[width=0.55\textwidth]{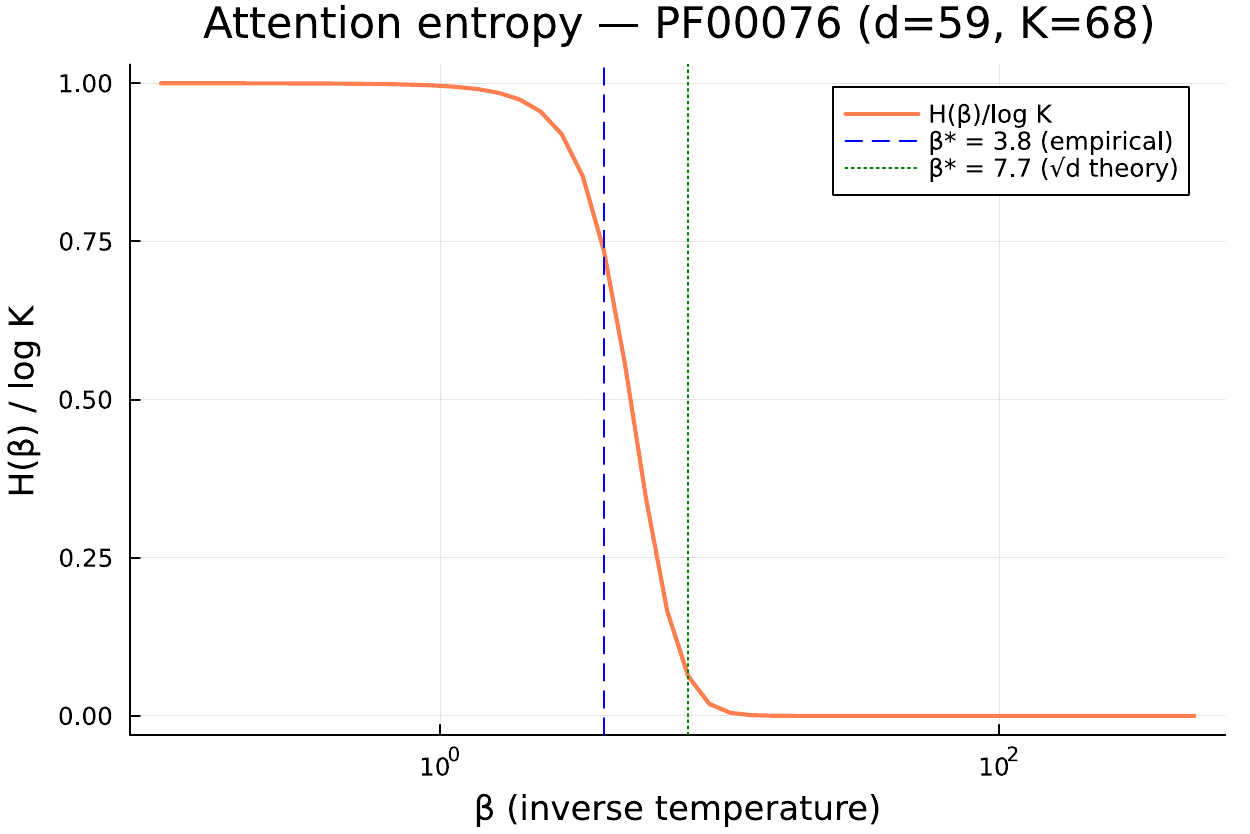}
\caption{Attention entropy $H(\beta)/\log K$ for the Pfam RRM memory ($K{=}68$, $d{=}59$). The empirical inflection point $\beta^*{=}3.8$ (dashed blue) occurs at roughly half the random-pattern prediction $\sqrt{d}{=}7.7$ (dotted green), reflecting the reduced effective similarity variance caused by conserved residues.}
\label{fig:protein-entropy}
\end{figure}

To provide a biologically meaningful validity metric beyond amino acid composition, we validated all generated sequences against the Pfam RRM profile HMM using HMMER~3.4~\cite{hmmer3}. Gap characters were removed from each generated sequence before submission to \texttt{hmmsearch}. All $150$ generated sequences from every method passed at $E$-value~$<0.01$, with SA at generation temperature achieving a median $E$-value of $8.1{\times}10^{-34}$, far below the threshold and lower (better) than the stored sequences' median of $2.5{\times}10^{-22}$. The generated sequences achieved better $E$-values because the generation process produced smoother, more canonical sequences that matched the HMM consensus closely; the $100\%$ pass rate confirmed that SA-generated sequences were biologically valid RRM family members, not arbitrary sequences that happened to share amino acid composition statistics (Table~\ref{tab:protein-main}).

The global amino acid KL divergence aggregates frequencies across all positions; to verify that SA preserved \emph{position-specific} conservation patterns, we computed the KL divergence independently at each of the $L{=}71$ aligned positions and reported the mean. SA in the generation regime achieved a mean per-position KL of $2.92$, compared to $9.99$ for the VAE ($3.4{\times}$ lower), $7.52$ for bootstrap, and $9.09$ for GMM-PCA (Table~\ref{tab:protein-main}). This confirmed that the advantage extended beyond global composition to position-specific conservation patterns: SA correctly identified which positions were conserved and which were variable, preserving the position-specific amino acid distribution at each residue, and MALA at generation temperature achieved a comparable $2.96$.

To address the concern that marginal frequency metrics ignore sequence-level dependencies, we computed the pairwise mutual information (MI) between residue positions. For a set of sequences, the MI between positions $i$ and $j$ is $\mathrm{MI}(i,j)=\sum_{a,b}p(a,b)\log\bigl[p(a,b)/(p(a)p(b))\bigr]$, where $p(a,b)$ is the joint frequency of amino acids $a$ and $b$ at positions $i$ and $j$. We computed MI matrices for each method's generated sequences and measured the Pearson correlation with the stored family's MI matrix. SA at generation temperature achieved MI correlation $r{=}0.871$, compared to $r{=}0.525$ for the VAE, a $66\%$ improvement; bootstrap replay achieved $r{=}0.692$ and SA at retrieval temperature achieved $r{=}0.733$ (Table~\ref{tab:protein-main}). SA-generated sequences therefore reproduced the pairwise co-evolutionary coupling structure of the RRM family, capturing sequence-level dependencies that go beyond single-position statistics; combined with the low per-position KL and $100\%$ HMM pass rate, this provided strong evidence that SA-generated sequences were structurally and functionally plausible RRM family members. SA and MALA produced near-identical results at the generation temperature ($\beta{=}8$; KL$=0.060$ vs.\ $0.058$, sequence identity $0.538$ vs.\ $0.537$, novelty $0.623$ vs.\ $0.616$, MI correlation $0.871$ vs.\ $0.887$, MALA acceptance $99.8\%$), so the ULA bias was negligible for protein sequences at both retrieval and generation temperatures.

\section{Financial Log-Return Generation}\label{app:finance}

The synthetic experiments validated the sampler on controlled geometry; this section used real financial data to \emph{precisely characterize} what equilibrium Boltzmann sampling on the Hopfield energy captures and, equally importantly, what it cannot, and why each limitation is a principled theoretical consequence rather than a tuning failure.

\subsection{Data and Memory Construction}

We collected daily adjusted closing prices for $d=424$ S\&P~500 constituents with complete histories over a $K=2{,}766$-trading-day window. For each firm $i$ and day $t$, the continuously compounded growth rate is $g^{(i)}_t = \log(S^{(i)}_t / S^{(i)}_{t-1})$. Each trading day yielded a $d$-dimensional return vector $\mathbf{g}_t \in \mathbb{R}^{424}$. We stored these as columns of a raw memory matrix $\mathbf{M}\in\mathbb{R}^{d\times K}$ and constructed the scaled memory matrix $\mathbf{X}\in\mathbb{R}^{d\times K}$ by centering each column to zero cross-sectional mean and normalizing to unit $\ell_2$~norm, following the standard protocol for modern Hopfield networks~\cite{ramsauerHopfieldNetworksAll2021}. The load ratio was $K/d \approx 6.5$.

\subsection{I.I.D.\ Pool: Marginal and Cross-Sectional Fidelity}

We compared SA to \emph{historical bootstrap resampling} (drawing stored return vectors uniformly at random from $\mathbf{M}$) on marginal and cross-sectional metrics. Because the projection $\hat{\mathbf{g}}=\mathbf{M}\,\operatorname{softmax}(\beta\,\mathbf{X}^\top\boldsymbol{\xi})$ is a convex combination of historical return vectors, bootstrap sets an upper bound on distributional fidelity; the key question is what SA contributes beyond verbatim replay. We launched $n_{\text{chains}}=30$ independent ULA chains, each initialized near a randomly chosen stored memory, and ran $T=5{,}000$ steps with step size $\alpha=0.01$ and fixed inverse temperature $\beta=25$ ($\mathrm{SNR}=\sqrt{\alpha\beta/2d}\approx 0.017$). After discarding a burn-in of $2{,}000$ steps and thinning every $100$, we obtained $900$ approximately i.i.d.\ samples.

For each of $d=424$ tickers, we ran a two-sample KS test between generated and historical returns, and QQ plots for five representative tickers (Fig.~\ref{fig:finance-qq}). The convex-combination projection compresses per-ticker standard deviation by $\approx 1.14\times$ at $\beta{=}25$ (SNR$\approx 0.017$); we corrected for this by applying a per-ticker affine standardization (aligning the mean and standard deviation of each ticker's generated series to the historical moments, analogous to the marginal-calibration step in copula-based scenario generation), which left the chain dynamics and novelty metric unchanged. After correction, $272$ of $424$ tickers passed the two-sample KS test at the $5\%$ level (mean $p{=}0.193$); bootstrap achieved $99.3\%$ pass by drawing from the empirical distribution (mean $p{=}0.622$), and the residual $35.8\%$ SA failure reflected higher-order distributional differences (skewness, kurtosis) beyond what affine correction addresses. The key distinction remained novelty: SA chain states had $1-\max_k\cos(\boldsymbol{\xi},\mathbf{X}_{:,k})=0.768\pm0.001$, while every bootstrap draw was a verbatim historical scenario (novelty~$0.000$, Table~\ref{tab:finance-summary}).

The pairwise correlations of generated returns lay along the $y{=}x$ diagonal when plotted against the corresponding historical correlations (Fig.~\ref{fig:finance-correlation}, all $\binom{424}{2}=89{,}676$ asset pairs): SA Frobenius correlation error was $26.3\%$ versus $15.6\%$ for bootstrap. The per-ticker affine correction did not alter this value because correlation is scale-invariant, so rescaling each ticker's series preserves $\operatorname{cor}(\hat{\mathbf{G}})$ exactly; SA traded some correlation fidelity for novelty, with the Hopfield energy concentrating sampling on energetically favorable regime interpolations rather than uniform historical replay. Both right and left tails of the empirical survival function for portfolio-level (equal-weight) returns matched closely on a log scale (Fig.~\ref{fig:finance-tails}), demonstrating that the sampler reproduced the heavy-tailed nature of equity returns without distributional assumptions.

\begin{figure}[h]
\centering
\includegraphics[width=0.95\textwidth]{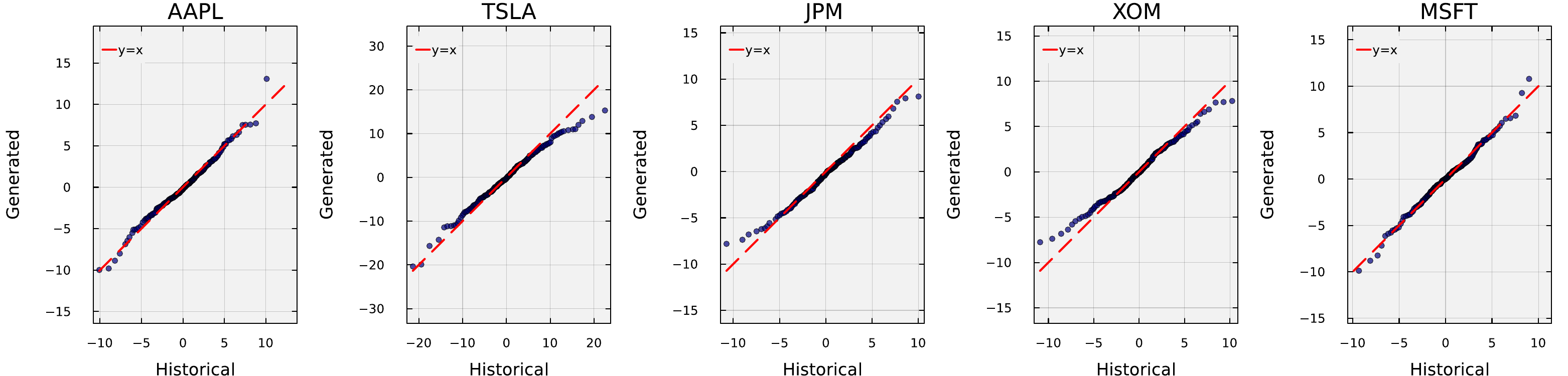}
\caption{QQ plots of generated (after per-ticker affine correction) vs.\ historical log returns for five representative S\&P~500 constituents. Quantiles track closely for these tickers; $64.2\%$ of all $424$ tickers pass the two-sample KS test at the $5\%$ level. The dashed red line is $y=x$.}
\label{fig:finance-qq}
\end{figure}

\begin{figure}[h]
\centering
\includegraphics[width=0.45\textwidth]{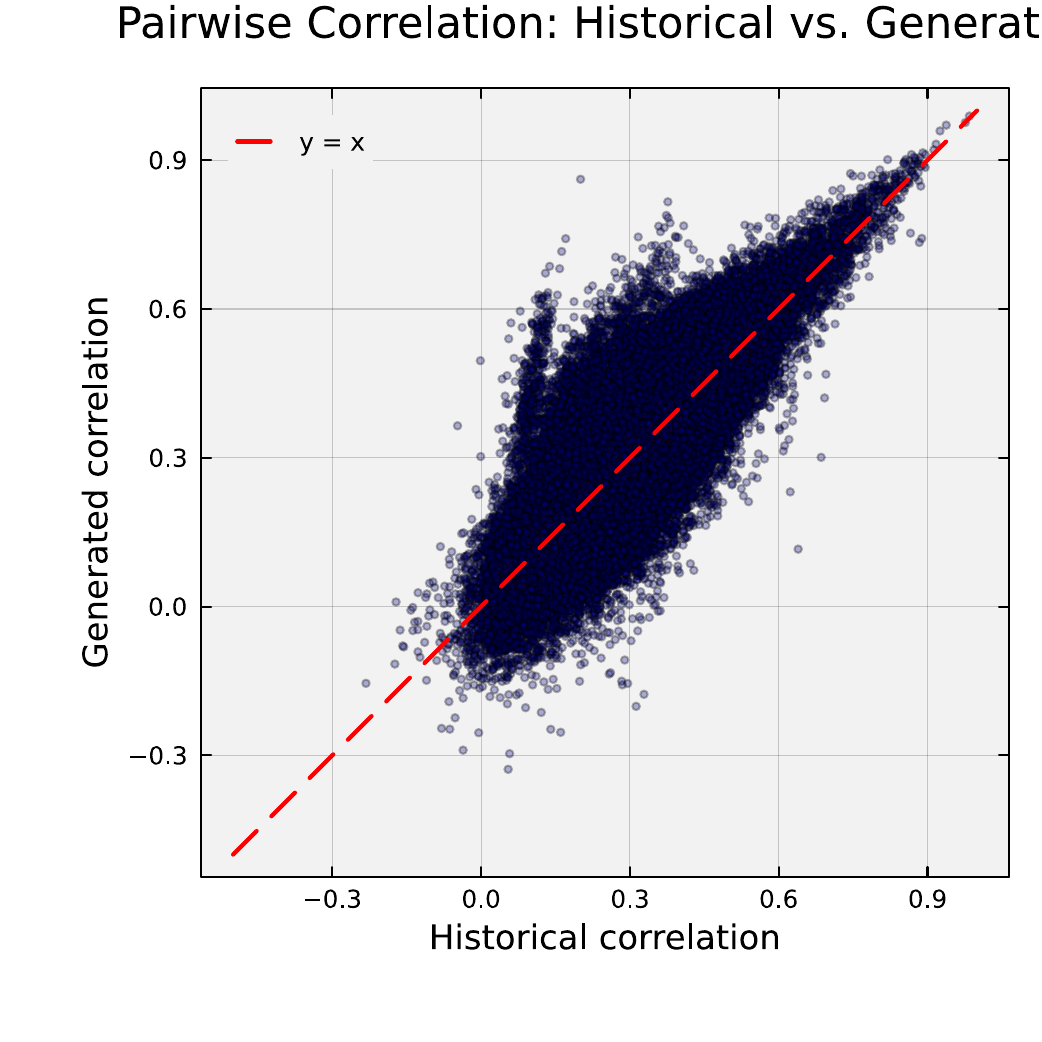}
\caption{Pairwise correlation: historical vs.\ generated. Each point represents one of $89{,}676$ asset pairs. The scatter around $y=x$ shows partial preservation of cross-asset dependence structure (Frobenius error $26.3\%$ vs.\ $15.6\%$ for bootstrap).}
\label{fig:finance-correlation}
\end{figure}

\begin{figure}[h]
\centering
\includegraphics[width=0.75\textwidth]{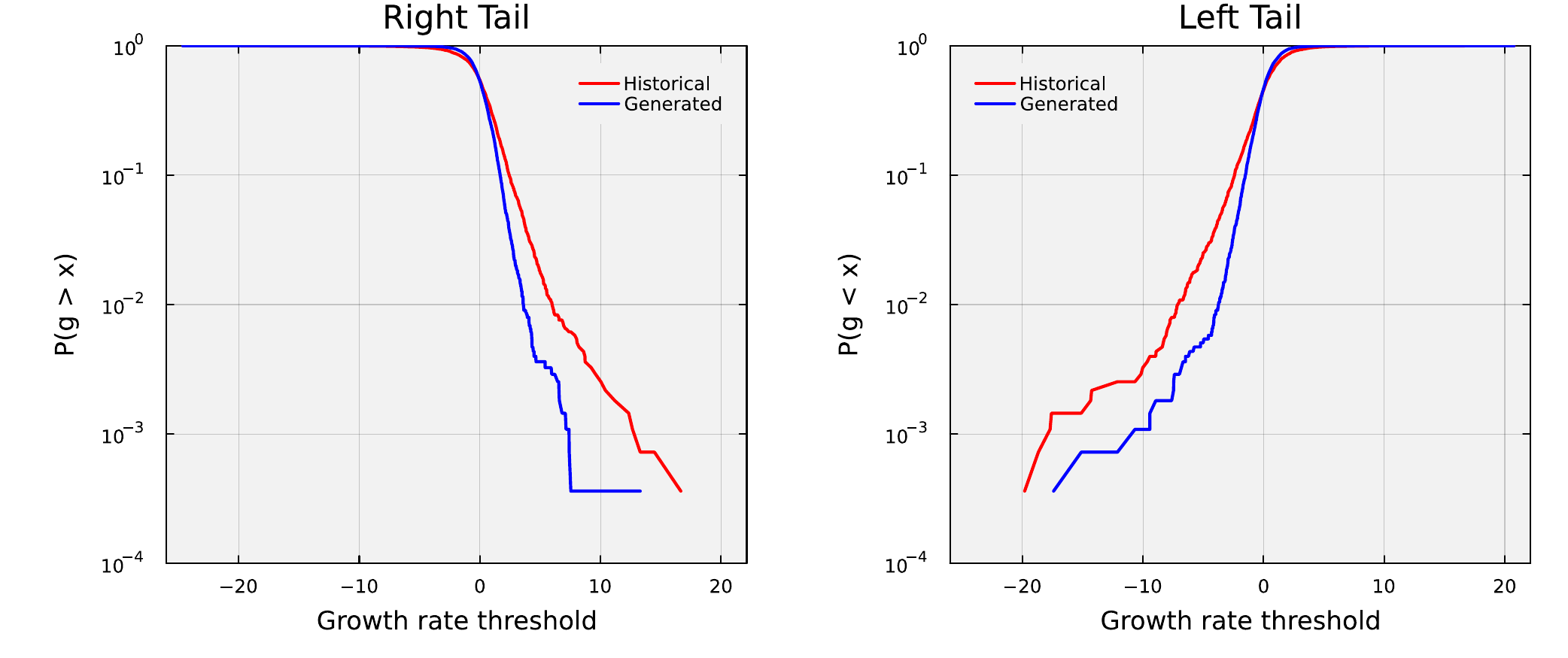}
\caption{Empirical survival functions of equal-weighted portfolio returns. Both right tail $P(g > x)$ and left tail $P(g < x)$ match closely on a log scale.}
\label{fig:finance-tails}
\end{figure}

\subsection{Sequential Generation and Temporal Properties}

The i.i.d.\ pool above validates marginal fidelity but says nothing about temporal dynamics. Real equity returns exhibit two key temporal properties: (i)~returns are essentially unpredictable (near-zero autocorrelation), and (ii)~squared returns (a proxy for volatility) are highly autocorrelated and decay slowly, a phenomenon known as \emph{volatility clustering}~\cite{contStylizedFactsAsset2001}. We tested how far the stochastic attention sampler could go toward reproducing these properties.

We generated a sequential time series of daily return vectors using the MALA sampler (Algorithm~\ref{alg:mala}) with warm-starting: each day's chain was initialized at the previous day's endpoint, creating temporal continuity in the chain trajectory.
We set $\beta = 12$ (below the main operating point $\beta{=}25$) to ensure adequate MALA mixing within $T_{\text{inner}}{=}200$ inner steps per day.
The pseudocode is in Algorithm~\ref{alg:mala-sequential}.

\begin{algorithm}[h]
\caption{MALA Warm-Start Sequential Return Generation}\label{alg:mala-sequential}
\begin{algorithmic}[1]
\REQUIRE Scaled memory matrix $\mathbf{X}\in\mathbb{R}^{d\times K}$, raw memory matrix $\mathbf{M}\in\mathbb{R}^{d\times K}$, inverse temperature $\beta$, step size $\alpha$, inner steps $T_{\text{inner}}$, number of days $T_{\text{days}}$
\ENSURE Synthetic return matrix $\hat{\mathbf{G}}\in\mathbb{R}^{T_{\text{days}}\times d}$
\STATE $k \sim \mathrm{Uniform}\{1,\dots,K\}$
\STATE $\boldsymbol{\xi} \leftarrow \mathbf{X}_{:,k} + 0.01\,\boldsymbol{\eta}, \quad \boldsymbol{\eta}\sim\mathcal{N}(\mathbf{0},\mathbf{I}_d)$ \hfill $\triangleright$ initialize near random memory
\FOR{$t = 1, 2, \dots, T_{\text{days}}$}
    \STATE Run $T_{\text{inner}}$ steps of Algorithm~\ref{alg:mala} from $\boldsymbol{\xi}$; set $\boldsymbol{\xi} \leftarrow$ endpoint \hfill $\triangleright$ warm-start
    \STATE $\hat{\mathbf{G}}_{t,:} \leftarrow \mathbf{M}\,\operatorname{softmax}(\beta\,\mathbf{X}^{\top}\boldsymbol{\xi})$ \hfill $\triangleright$ project to growth-rate space
\ENDFOR
\end{algorithmic}
\end{algorithm}

We generated $T_{\text{days}} = 2{,}766$ synthetic trading days (matching the historical sample size) with $T_{\text{inner}} = 200$, $\alpha = 0.01$, and $\beta = 12$; the mean MALA acceptance rate was $99.4\%$, confirming that the ULA discretization bias was negligible at this operating point. Near-zero return autocorrelation was reproduced cleanly: the autocorrelation function of raw returns for five representative tickers showed the generated ACF (blue) lying within the $99\%$ confidence interval for white noise at all lags $\geq 1$, matching the historical pattern (red) (Fig.~\ref{fig:finance-acf-combined}, top row). Volatility clustering told a different story (Fig.~\ref{fig:finance-acf-combined}, bottom row): the historical squared-return series (red) exhibited the characteristic slow-decaying positive autocorrelation of volatility clustering, whereas the generated series (blue) showed ACF($g^2$) $\approx 0$ at all lags, so the sampler did not reproduce this effect. We verified that at the main operating point ($\beta{=}25$, $T_{\text{inner}}{=}2{,}000$), ACF($g^2$) also collapsed to ${\approx}0$ (mean over $424$ tickers: $0.0005$; Table~\ref{tab:finance-summary}), confirming the absence was a property of equilibrium sampling and not an artifact of the choice of~$\beta$.

This is a fundamental limitation of equilibrium Boltzmann sampling, not a tuning failure. The stochastic attention sampler targets the stationary distribution $p_\beta \propto \exp(-\beta E)$ at fixed~$\beta$. At stationarity, the variance of each day's output is governed by the static energy landscape and the fixed temperature; there is no mechanism for the return variance to change systematically over time. Volatility clustering in real markets arises from non-stationary dynamics: regime shifts driven by exogenous shocks, feedback loops between volatility and leverage, and time-varying risk premia~\cite{contStylizedFactsAsset2001}. Reproducing these effects within the Langevin framework would require introducing explicit temporal structure, for instance a time-varying inverse temperature $\beta(t)$ that creates exogenous volatility regimes, or coupling the sampler to a latent regime-switching process. These extensions are natural directions for future work.

Three theoretical predictions were made concrete by the summary numbers (Table~\ref{tab:finance-summary}). \emph{First}, the novelty-fidelity trade-off was real and quantified: SA generated regime interpolations absent from the historical record ($\mathcal{N}=0.768\pm0.001$) while bootstrap, which replayed history verbatim, achieved $\mathcal{N}=0.000$; marginal fidelity ran in the opposite direction ($64.2\%$ vs.\ $99.3\%$ KS pass). \emph{Second}, the Hopfield energy captured cross-sectional dependence independently of marginals: Frobenius error $26.3\%$ was unchanged by per-ticker affine correction, confirming that the energy geometry encoded correlation structure, not scale. \emph{Third}, volatility clustering, a non-stationary phenomenon driven by regime shifts, the leverage effect linking falling prices to rising volatility~\cite{black1976studies,christie1982stochastic}, and time-varying risk premia~\cite{contStylizedFactsAsset2001}, was exactly what a fixed-$\beta$ equilibrium sampler could not reproduce; the absence was a theoretical prediction confirmed by experiment, not a tuning failure.

\begin{table}[h]
\centering
\caption{Finance experiment: principled trade-offs of equilibrium Boltzmann sampling. SA and bootstrap occupy opposite ends of a novelty--fidelity spectrum: SA generates novel regime interpolations ($\mathcal{N}=0.768\pm0.001$) that bootstrap cannot produce, while bootstrap preserves marginals by construction. The Frobenius correlation error is identical before and after affine correction, confirming the energy captures dependence independently of scale. Temporal metrics apply to the sequential MALA experiment only.}
\label{tab:finance-summary}
\small
\begin{tabular}{@{}lll@{}}
\toprule
Property & SA (corrected) & Bootstrap \\
\midrule
\multicolumn{3}{@{}l@{}}{\textit{Distributional fidelity (i.i.d.\ pool, 900 samples)}} \\
KS $p$-value (mean; 64.2\% pass at $5\%$) & 0.193 & 0.622 \\
Cross-asset Frobenius error & 26.3\% & 15.6\% \\
Novelty ($1-\max_k\cos$) & $0.768\pm0.001$ & 0.000 \\
\multicolumn{3}{@{}l@{}}{\textit{Temporal properties (MALA warm-start, 2766 days)}} \\
SF2: Return unpredictability & Reproduced [ACF($g$) in 99\% CI, $\beta{=}12$] & N/A \\
SF3: Volatility clustering & Not reproduced ($\beta{=}12$) & N/A \\
\bottomrule
\end{tabular}
\end{table}

\begin{figure}[h]
\centering
\includegraphics[width=0.95\textwidth]{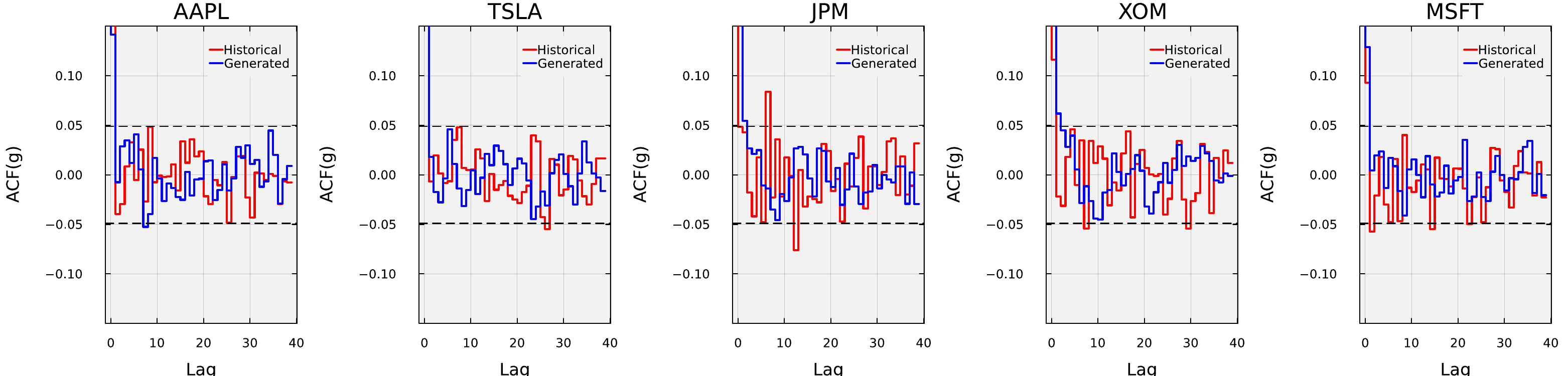}\\[4pt]
\includegraphics[width=0.95\textwidth]{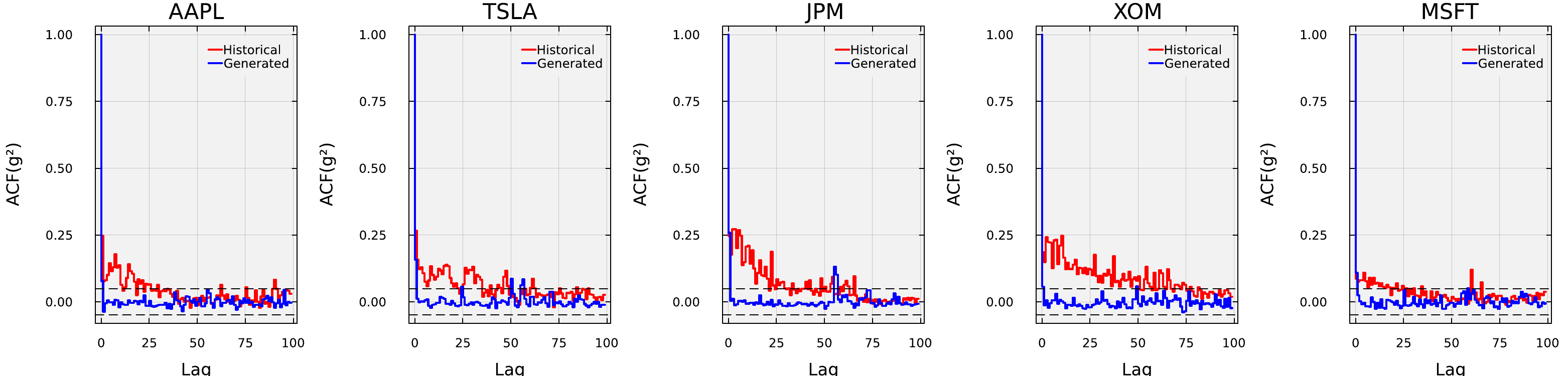}
\caption{Stylized Facts~2 and~3: Autocorrelation analysis for five S\&P~500 tickers.
\textbf{Top row}: ACF($g$): the generated series (blue) stays within the 99\% confidence band (dashed) at all lags, matching the near-zero autocorrelation of historical returns (red).
\textbf{Bottom row}: ACF($g^2$): the historical series (red) shows persistent positive autocorrelation characteristic of volatility clustering, while the generated series (blue) shows ACF($g^2$) $\approx 0$, reflecting the absence of non-stationary dynamics in the equilibrium sampler.}
\label{fig:finance-acf-combined}
\end{figure}

\section{Olivetti Faces and Conditional Generation by Masking}\label{app:olivetti}

This appendix supplies the full setup for the Olivetti experiment and the masked update of the main text, and reports the single-$\beta$ sweep that demonstrates the Hopfield $\to$ Hebbian transition under a fixed mask. The Olivetti faces dataset~\cite{samariaParameterisationStochasticModel1994} contains 400 grayscale portraits of 40 subjects, 10 portraits per subject, each $64\times 64$. We used the first 10 subjects (100 portraits), flattened each portrait in row-major order to a vector in $\mathbb{R}^{4096}$, and $\ell_2$-normalized the columns of the memory matrix $\mathbf{X}\in\mathbb{R}^{4096\times 100}$ (load ratio $K/d\approx 0.024$). Subject labels formed a length-100 vector $\mathbf{y}\in\{0,\dots,9\}^{K}$ used only by the evaluation classifier; the sampler itself saw $\mathbf{X}$ alone.

We adopted a two-phase $\beta$ schedule: a Langevin chain at $\beta_{\mathrm{chain}}{=}200$ for $T{=}3{,}000$ steps with $\alpha{=}0.01$ and $\sigma_{\mathrm{init}}{=}0.05$, followed by a single noise-free attention readout at $\beta_{\mathrm{read}}{=}10{,}000$ to project the chain endpoint onto the convex hull of (kept) memories. The chain temperature placed the sampler in the diffuse-generation regime where the noise scale $\sqrt{2\alpha/\beta_{\mathrm{chain}}}=0.01$ was large enough for the chain to wander between subjects; the readout temperature recovered a sharp MAP-style read on the memory manifold, mirroring the standard low-$\beta$-mix-then-high-$\beta$-read pattern used in score-based samplers. For evaluation we used the training-free nearest-mean classifier of Samaria \& Harter~\cite{samariaParameterisationStochasticModel1994}: for each subject $c$, compute the per-subject mean portrait $\bar{\mathbf{m}}_{c}=\frac{1}{|\mathcal{S}_{c}|}\sum_{k\in\mathcal{S}_{c}}\mathbf{m}_{k}$ where $\mathcal{S}_{c}=\{k : y_{k}=c\}$, and assign a sample $\mathbf{s}$ to $\arg\min_{c}\|\mathbf{s}-\bar{\mathbf{m}}_{c}\|_{2}$. The classifier has no trainable parameters and uses the same $\mathbf{X}$ as the sampler.

We measured subject recovery in three regimes (Table~\ref{tab:olivetti-recovery}): random guess (uniform over the 10 subjects), unconditional SA warm-started uniformly across all subjects, and hard-masked SA with $b_{k}=-\infty$ for memories outside the target subject. The hard mask raised subject recovery from $10.4\%$ (essentially the random-guess baseline) to $96.0\%$, a $9.2{\times}$ improvement, with no change to the sampler or any learned components. The four misses out of 100 ($96.0\%$) reflected cases where the masked chain endpoint was closer to another subject's mean portrait than to the target subject's mean; in every such case the sample was a clean blend of the target's stored portraits and was recognizable as the correct subject visually (Fig.~\ref{fig:olivetti}b).

\begin{table}[h]
\centering
\caption{Subject-recovery rate on Olivetti faces ($K{=}100$, $d{=}4{,}096$, 10 subjects). Hard-masked SA recovers the target subject $9.2\times$ more reliably than unconditional SA, matching random-guess. ``Hard-mask'' was computed across $5\times 5=25$ samples spanning the first 5 subjects; ``unconditional'' across $50$ chains warm-started uniformly across all 10 subjects, averaged against the same 5 target subjects.}
\label{tab:olivetti-recovery}
\small
\begin{tabular}{@{}lc@{}}
\toprule
Regime & Subject-recovery rate $\uparrow$ \\
\midrule
Random guess (uniform over 10 subjects) & $0.100$ \\
Unconditional SA, uniform warm-start    & $0.104$ \\
\textbf{Hard-masked SA ($b_{k}{=}-\infty$ off-subject)} & $\mathbf{0.960}$ \\
\bottomrule
\end{tabular}
\end{table}

To trace the Hopfield $\to$ Hebbian transition under a fixed mask, we held the subject mask fixed (subject 0) and swept the inverse temperature $\beta$ over six points spanning three decades, using the same $\beta$ for both chain dynamics and readout. Novelty was the median of $1-\max_{k\in\mathcal{S}_{c}}\cos(\mathbf{s},\mathbf{m}_{k})$, the smallest cosine distance to any kept portrait; hit rate was the nearest-mean classifier rate (Table~\ref{tab:olivetti-betasweep}). At $\beta{=}2$ the readout was a near-uniform soft blend of the 10 in-subject portraits and samples sat at small but non-zero distance from any single one (novelty $0.009$, hit rate $1.000$); by $\beta{=}512$ the sharp readout collapsed each chain to a single stored portrait (novelty $0.000$). The hit rate stayed in $[0.90, 1.00]$ across the full range because the $-\infty$ logits make the off-subject subset unreachable; only the within-subject blend $\to$ recall transition varies with $\beta$.

\begin{table}[h]
\centering
\caption{Single-$\beta$ masked sampler on subject 0 of Olivetti, $20$ chains per row. Novelty is the median minimum cosine distance to any in-subject stored portrait; hit rate is the nearest-mean classifier rate. The sweep recovers the modern-Hopfield $\to$ Hebbian transition empirically inside the masked regime.}
\label{tab:olivetti-betasweep}
\small
\begin{tabular}{@{}lcccccc@{}}
\toprule
$\beta$ & $2$ & $8$ & $32$ & $128$ & $512$ & $2048$ \\
\midrule
Novelty $\uparrow$ & $0.009$ & $0.009$ & $0.006$ & $0.005$ & $0.000$ & $0.000$ \\
Hit rate $\uparrow$ & $1.000$ & $1.000$ & $0.950$ & $0.900$ & $0.900$ & $0.900$ \\
\bottomrule
\end{tabular}
\end{table}

Replacing $b_{k}\in\{0,-\infty\}$ with a finite scalar $b_{k}\in\mathbb{R}$ recovers the multiplicity-weight conditioning of Varner~\cite{varnerConditioningProtein2026} and the synthetic-patient generator of Varner et al.~\cite{varnerSyntheticPatient2026}. The hard-mask version studied here is the limiting case as the bias on out-of-set memories diverges to $-\infty$; the regularity guarantees of the main text hold for both regimes since both are equivalent to running unmasked SA on a (re-weighted) reduced memory matrix.

% ─────────────────────────────────────────────────────────────────────────────
\section{Baseline Method Specifications}\label{app:baseline-methods}
% ─────────────────────────────────────────────────────────────────────────────

Complete mathematical specifications for the two learned generative
baselines (GMM-PCA and the VAE; Table~\ref{tab:mnist-baselines}) follow.
All hyperparameters reported here were used without modification in the experiments;
the implementation is publicly available in \texttt{code/mnist-experiment/} and
\texttt{code/vae-experiment/} in the accompanying repository.

% ── GMM-PCA ──────────────────────────────────────────────────────────────────
\subsection{GMM-PCA}\label{app:gmm-pca}

\paragraph{Dimensionality reduction.}
Let $\mathbf{X}\in\mathbb{R}^{d\times K}$ be the column-normalized memory matrix
($d{=}784$, $K{=}100$). Compute the economy SVD
$\mathbf{X}=\mathbf{U}\boldsymbol{\Sigma}\mathbf{V}^{\!\top}$ and retain the top
$r{=}50$ left singular vectors, forming $\mathbf{U}_{r}\in\mathbb{R}^{d\times r}$.
Project each stored pattern to a low-dimensional code:
$\mathbf{z}_{k}=\mathbf{U}_{r}^{\!\top}\mathbf{m}_{k}\in\mathbb{R}^{r}$, $k=1,\dots,K$.
For the digit ``3'' memory matrix, the top 50 components capture $>$95\% of total variance.

\paragraph{Gaussian mixture model.}
Fit a $C{=}10$ component GMM with \emph{diagonal} covariance to the code matrix
$\mathbf{Z}{=}[\mathbf{z}_1|\cdots|\mathbf{z}_K]\in\mathbb{R}^{r\times K}$ via
Expectation-Maximization~\cite{dempsterMaximumLikelihoodIncomplete1977}:
\begin{equation}
  p(\mathbf{z}) = \sum_{c=1}^{C}\pi_c\,
  \mathcal{N}\!\bigl(\mathbf{z};\,\boldsymbol{\mu}_c,\,\mathrm{diag}(\boldsymbol{\sigma}_c^2)\bigr),
  \quad \sum_{c=1}^{C}\pi_c=1,\quad \pi_c\geq 0.
\end{equation}
Parameters $\{\pi_c,\boldsymbol{\mu}_c,\boldsymbol{\sigma}_c^2\}_{c=1}^{C}$ are estimated
by EM with K-means$\texttt{++}$ initialization and convergence tolerance $10^{-8}$ on the
log-likelihood.
The diagonal-covariance constraint prevents singularity when $K>r$ and is equivalent to
assuming conditionally independent PCA coordinates within each component.

\paragraph{Sampling.}
To draw one sample: (i) sample component $c\sim\mathrm{Categorical}(\boldsymbol{\pi})$;
(ii) sample $\tilde{\mathbf{z}}\sim\mathcal{N}(\boldsymbol{\mu}_c,\mathrm{diag}(\boldsymbol{\sigma}_c^2))$;
(iii) reconstruct $\hat{\boldsymbol{\xi}}=\mathbf{U}_{r}\tilde{\mathbf{z}}$ and
renormalize to unit $\ell_2$ norm.
The 150 evaluation samples are drawn i.i.d.\ under this procedure with a fixed seed.

\paragraph{Hyperparameters.}
$r{=}50$ PCA components; $C{=}10$ GMM components; K-means$\texttt{++}$ initialization;
EM convergence tolerance $10^{-8}$; no regularization beyond the diagonal-covariance constraint.

% ── VAE ──────────────────────────────────────────────────────────────────────
\subsection{Variational Autoencoder}\label{app:vae}

\paragraph{Architecture.}
The encoder maps $\mathbf{x}\in\mathbb{R}^{784}$ to the parameters of an approximate posterior:
\begin{align}
  \mathbf{h}_1 &= \mathrm{ReLU}(\mathbf{W}_1\mathbf{x}+\mathbf{b}_1),\quad
    \mathbf{W}_1\in\mathbb{R}^{256\times 784}, \\
  \mathbf{h}_2 &= \mathrm{ReLU}(\mathbf{W}_2\mathbf{h}_1+\mathbf{b}_2),\quad
    \mathbf{W}_2\in\mathbb{R}^{128\times 256}, \\
  \boldsymbol{\mu}_\phi(\mathbf{x}) &= \mathbf{W}_\mu\mathbf{h}_2+\mathbf{b}_\mu,\quad
    \mathbf{W}_\mu\in\mathbb{R}^{8\times 128}, \\
  \log\boldsymbol{\sigma}^2_\phi(\mathbf{x}) &= \mathbf{W}_\sigma\mathbf{h}_2+\mathbf{b}_\sigma,\quad
    \mathbf{W}_\sigma\in\mathbb{R}^{8\times 128}.
\end{align}
The decoder is architecturally symmetric: $8\to128\to256\to784$, with ReLU activations on
hidden layers and a linear output.
All weight matrices are Glorot-uniform initialized; biases are zero-initialized.
The decoder output is renormalized to unit $\ell_2$ norm before computing metrics, matching
the normalization convention of all other methods in Table~\ref{tab:mnist-baselines}.

\paragraph{Objective.}
We minimize the $\beta$-VAE objective~\cite{higginsBetaVAELearningBasic2017}:
\begin{equation}
  \mathcal{L}(\phi,\theta;\mathbf{x})
  = \underbrace{\bigl\lVert\mathbf{x}-\hat{\mathbf{x}}_\theta(\mathbf{z})\bigr\rVert_2^2}
    _{\text{reconstruction}}
  +\; \beta_{\mathrm{KL}}\underbrace{D_{\mathrm{KL}}\!\bigl(
    q_\phi(\mathbf{z}|\mathbf{x})\,\big\|\,\mathcal{N}(\mathbf{0},\mathbf{I})\bigr)}
    _{\text{regularization}},
\label{eq:vae-elbo}
\end{equation}
where $q_\phi(\mathbf{z}|\mathbf{x})=\mathcal{N}(\boldsymbol{\mu}_\phi(\mathbf{x}),
\mathrm{diag}(\boldsymbol{\sigma}^2_\phi(\mathbf{x})))$ and the reparameterization trick
$\mathbf{z}=\boldsymbol{\mu}_\phi(\mathbf{x})+\boldsymbol{\sigma}_\phi(\mathbf{x})\odot
\boldsymbol{\epsilon}$, $\boldsymbol{\epsilon}\sim\mathcal{N}(\mathbf{0},\mathbf{I})$,
enables gradient flow through the sampling step.
The KL term has a closed form:
\begin{equation}
  D_{\mathrm{KL}} = \tfrac{1}{2}\sum_{j=1}^{8}\bigl[
    \mu_{\phi,j}^2+\sigma_{\phi,j}^2 - \log\sigma_{\phi,j}^2 - 1\bigr].
\end{equation}

\paragraph{Two-phase training protocol.}
Training directly with the full $\beta$-VAE objective on a small dataset ($K{=}100$)
causes \emph{posterior collapse}: the encoder ignores the input, the KL term drives every
latent dimension to $\mathcal{N}(0,1)$, and the decoder learns a constant map.
We prevented this with a two-phase schedule. The warm-up phase (epochs 1--2{,}000) trained with $\beta_{\mathrm{KL}}{=}0$, i.e.\ a reconstruction-only autoencoder objective, to establish a non-degenerate encoder (one that uses the latent code productively) before any regularization pressure was applied. The fine-tuning phase (epochs 2{,}001--4{,}000) introduced the KL term with $\beta_{\mathrm{KL}}{=}0.0001$ and a linear warmup schedule ($\beta_{\mathrm{KL}}(t) = 0.0001\cdot(t/2000)$ for $t\in[1,2000]$); the small final weight provides light regularization toward the Gaussian prior without overriding the reconstruction signal. The warm-up phase is equivalent to a hard $\beta$-annealing schedule and is standard practice for low-data VAE training~\cite{lucasHowAvoidTrivial2019}.
On large datasets ($K\gg10^4$), the reconstruction gradient swamps the KL term and posterior
collapse does not occur; the two-phase protocol is needed here specifically because
$K{=}100$ makes the per-sample gradients noisy relative to the KL regularization pressure.

\paragraph{Optimization and sampling.}
Optimizer: Adam~\cite{kingmaAdamMethodStochastic2014} with learning rate $10^{-3}$,
$\beta_1{=}0.9$, $\beta_2{=}0.999$, $\varepsilon{=}10^{-8}$, batch size $K{=}100$
(full-dataset batches throughout).
No weight decay, dropout, or data augmentation was applied.
At evaluation time, 150 samples are drawn by sampling
$\mathbf{z}\sim\mathcal{N}(\mathbf{0},\mathbf{I}_8)$ and decoding
$\hat{\mathbf{x}}=\hat{\mathbf{x}}_\theta(\mathbf{z})$, then renormalizing to unit norm.

\paragraph{Hyperparameter selection.}
The final configuration (latent dim 8, $\beta_{\mathrm{KL}}{=}0.0001$) was chosen by a
grid search over latent dimension $\in\{4,8,16,32\}$ and $\beta_{\mathrm{KL}}\in\{0.1,0.01,0.001,0.0001\}$,
evaluating novelty and diversity on the held-out generation protocol.
Latent dim 8 with $\beta_{\mathrm{KL}}{=}0.0001$ achieved $\mathcal{N}{=}0.214\pm0.005$,
$\bar{\mathcal{D}}{=}0.441\pm0.008$, outperforming all other static baselines including
GMM-PCA ($\mathcal{N}{=}0.198$, $\bar{\mathcal{D}}{=}0.419$).
Smaller latent dimensions or larger $\beta_{\mathrm{KL}}$ values degraded novelty and
diversity, while larger latent dimensions showed posterior collapse symptoms
(diversity collapsing to $\bar{\mathcal{D}}{<}0.05$ at latent dim 32 with $\beta_{\mathrm{KL}}{=}0.01$).

% ── DDPM ────────────────────────────────────────────────────────────────────
\subsection{Denoising Diffusion Probabilistic Model (DDPM)}\label{app:ddpm}

We implement a standard DDPM~\cite{hoDenoisingDiffusionProbabilistic2020} operating on the
same flattened, unit-norm MNIST vectors used by all other methods.

\paragraph{Architecture.}
Since the data are flattened 784-dimensional vectors (not 2D images), we use an MLP denoiser
rather than a convolutional U-Net. The input at each timestep~$t$ is the concatenation of the
noisy vector $\mathbf{x}_t\in\mathbb{R}^{784}$ and a sinusoidal timestep embedding
$\mathbf{e}_t\in\mathbb{R}^{64}$, giving an 848-dimensional input. The network is
$848\to512~(\mathrm{ReLU})\to256~(\mathrm{ReLU})\to512~(\mathrm{ReLU})\to784$,
predicting the noise $\boldsymbol{\epsilon}$.

\paragraph{Diffusion schedule.}
We use $T{=}200$ diffusion steps with a linear variance schedule
$\beta_1{=}10^{-4},\;\beta_T{=}0.02$, following Ho et al.~\cite{hoDenoisingDiffusionProbabilistic2020}.
The forward process is $q(\mathbf{x}_t|\mathbf{x}_0)=\mathcal{N}(\sqrt{\bar\alpha_t}\,\mathbf{x}_0,\;(1{-}\bar\alpha_t)\mathbf{I})$
where $\bar\alpha_t=\prod_{s=1}^{t}(1-\beta_s)$.

\paragraph{Training.}
We train for $5{,}000$ epochs using Adam~\cite{kingmaAdamMethodStochastic2014}
(learning rate $10^{-3}$) with full-batch updates ($K{=}100$).
At each epoch, we sample random timesteps $t\sim\mathrm{Uniform}\{1,\dots,T\}$ for each training
image, draw noise $\boldsymbol{\epsilon}\sim\mathcal{N}(\mathbf{0},\mathbf{I})$, compute
$\mathbf{x}_t=\sqrt{\bar\alpha_t}\,\mathbf{x}_0+\sqrt{1{-}\bar\alpha_t}\,\boldsymbol{\epsilon}$,
and minimize the MSE loss $\lVert\boldsymbol{\epsilon}-\hat{\boldsymbol{\epsilon}}_\theta(\mathbf{x}_t,t)\rVert_2^2$.

\paragraph{Sampling.}
We generate 150 samples via the standard reverse process starting from
$\mathbf{x}_T\sim\mathcal{N}(\mathbf{0},\mathbf{I})$ and iterating
$\mathbf{x}_{t-1}=\frac{1}{\sqrt{\alpha_t}}\bigl(\mathbf{x}_t-\frac{\beta_t}{\sqrt{1-\bar\alpha_t}}\hat{\boldsymbol{\epsilon}}_\theta(\mathbf{x}_t,t)\bigr)+\sqrt{\beta_t}\,\mathbf{z}$
for $t=T,\dots,1$. Each output is $\ell_2$-normalized to match the convention of all other methods.

\paragraph{Result.}
The DDPM produced samples with $\mathcal{N}{=}0.938$, $\bar{\mathcal{D}}{=}0.991$, and
max-cosine similarity to the nearest stored pattern of $0.062$, indistinguishable from
isotropic Gaussian noise ($0.057$; Table~\ref{tab:scaling}, Table~\ref{tab:noise-control}).
The training loss plateaued at ${\approx}0.855$ (close to the expected MSE for predicting
random noise), indicating that the denoiser failed to learn the data distribution.
This is the expected behavior: DDPM requires thousands to millions of training images to
estimate a score function across $T{=}200$ noise levels; with only $K{=}100$ images in
$\mathbb{R}^{784}$, the model is severely underspecified.
Stochastic attention avoids this failure mode entirely because its score function is available
in closed form from the Hopfield energy, requiring no training data beyond the stored patterns
themselves.

% ─────────────────────────────────────────────────────────────────────────────
\section{Step-Size Sweep: ULA vs.\ MALA}\label{app:stepsize-sweep}
% ─────────────────────────────────────────────────────────────────────────────

At the operating point used throughout this paper ($\alpha{=}0.01$, $\beta{=}2000$), MALA acceptance exceeds $99\%$ and the two algorithms are trivially equivalent. A natural question is: \emph{at what step size does the ULA discretization bias become significant, and how does sample quality degrade?} We swept $\alpha\in\{0.001,0.005,0.01,0.02,0.05,0.1,0.2,0.3,0.5\}$ on MNIST digit~``3'' ($K{=}100$, $d{=}784$, $\beta{=}2000$), running 30 chains per method with the same protocol as the main MNIST experiment (Table~\ref{tab:mnist-baselines}).

The sweep revealed three regimes (Table~\ref{tab:stepsize-sweep}, Fig.~\ref{fig:stepsize-sweep}). In the \emph{equivalent regime} ($\alpha\leq 0.02$), MALA acceptance exceeded $97\%$ and the ULA and MALA metrics were indistinguishable ($\Delta E < 0.003$). In the \emph{divergence regime} ($\alpha\in[0.05,0.1]$), acceptance dropped to $75$--$91\%$ and the ULA bias became detectable but small ($\Delta\mathcal{N}\approx 0.007$, $\Delta E \approx 0.011$). In the \emph{MALA-frozen regime} ($\alpha\geq 0.2$), MALA acceptance collapsed to $0\%$, freezing the chain at its initialization, while ULA continued to produce samples with gracefully degrading quality (rising energy, increasing novelty as the noise term dominated). The practical divergence threshold is $\alpha\approx 0.1$ ($75\%$ acceptance, $\Delta E = 0.011$). At the paper's operating point ($\alpha{=}0.01$), ULA bias is negligible ($\Delta E = 0.0018$). At large step sizes, ULA is \emph{preferable} to MALA: MALA freezes entirely because the large Langevin proposals are almost always rejected, while ULA, which unconditionally accepts every proposal, continues to explore the energy landscape.

\begin{table}[h]
\centering
\caption{Step-size sweep on MNIST digit ``3'' ($K{=}100$, $d{=}784$, $\beta{=}2000$). ULA and MALA are indistinguishable for $\alpha\leq 0.02$. At $\alpha\geq 0.2$, MALA acceptance collapses and the chain freezes, while ULA continues sampling.}
\label{tab:stepsize-sweep}
\small
\begin{tabular}{@{}rcccccc@{}}
\toprule
$\alpha$ & Accept & \multicolumn{2}{c}{Novelty} & \multicolumn{2}{c}{Diversity} & $\Delta E$ \\
 & rate & ULA & MALA & ULA & MALA & (ULA$-$MALA) \\
\midrule
0.001 & $>$0.99 & 0.151 & 0.150 & 0.595 & 0.595 & $<$0.001 \\
0.005 & $>$0.99 & 0.151 & 0.151 & 0.599 & 0.597 & $<$0.001 \\
0.01  & 0.992 & 0.152 & 0.151 & 0.600 & 0.598 & 0.002 \\
0.02  & 0.978 & 0.153 & 0.151 & 0.600 & 0.598 & 0.003 \\
0.05  & 0.911 & 0.155 & 0.151 & 0.602 & 0.598 & 0.006 \\
0.1   & 0.747 & 0.158 & 0.151 & 0.605 & 0.597 & 0.011 \\
0.2   & 0.000 & 0.165 & 0.000 & 0.612 & 0.000 & --- \\
0.3   & 0.000 & 0.172 & 0.000 & 0.619 & 0.000 & --- \\
0.5   & 0.000 & 0.189 & 0.000 & 0.635 & 0.000 & --- \\
\bottomrule
\end{tabular}
\end{table}

\begin{figure}[h]
\centering
\IfFileExists{figs/Fig_stepsize_sweep_ula_vs_mala.pdf}{%
  \includegraphics[width=0.95\textwidth]{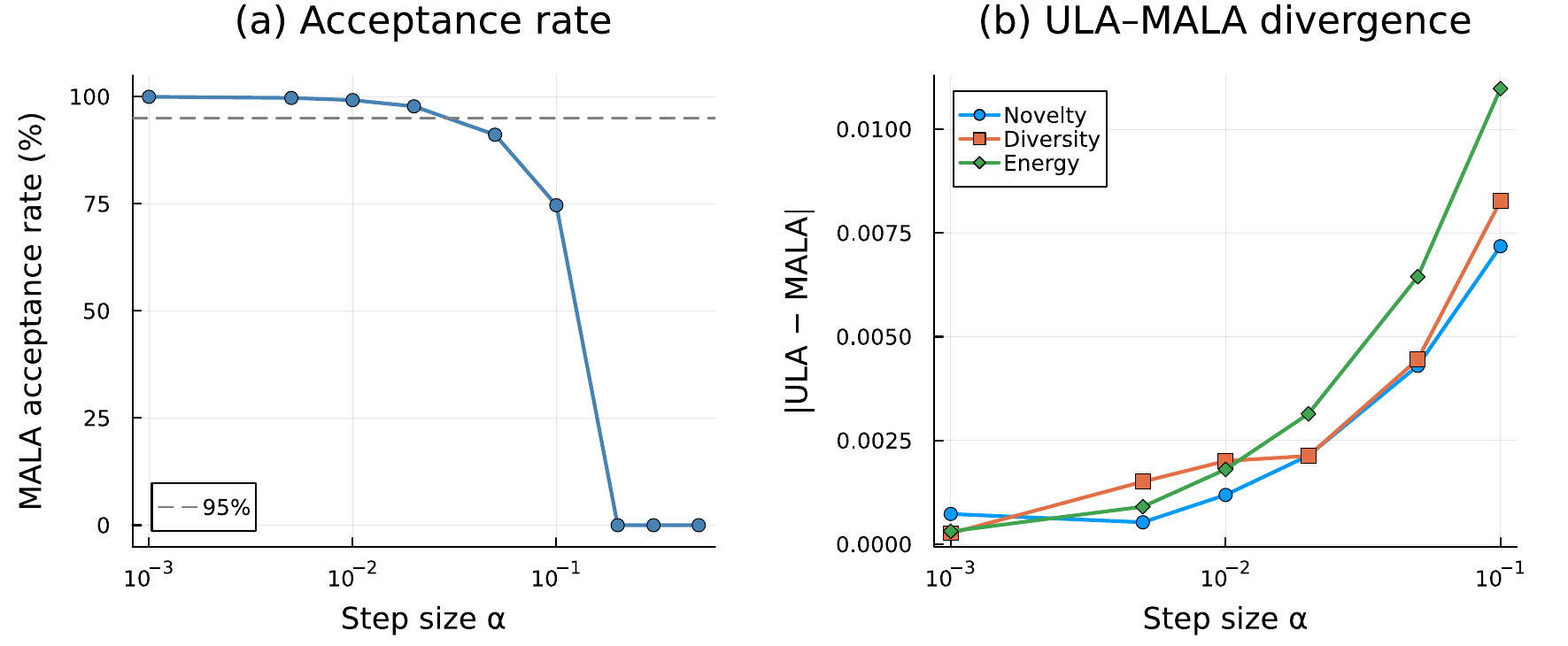}%
}{%
  \framebox[0.95\textwidth]{\rule{0pt}{4cm}\texttt{[Figure will be generated by run\_stepsize\_sweep.jl]}}%
}
\caption{Step-size sweep: ULA vs.\ MALA on MNIST digit ``3'' ($\beta{=}2000$). \textbf{(a)}~MALA acceptance rate vs.\ step size $\alpha$. Acceptance exceeds $95\%$ for $\alpha\leq 0.02$ and collapses to $0\%$ at $\alpha\geq 0.2$. \textbf{(b)}~Absolute difference between ULA and MALA metrics. All three metrics (novelty, diversity, energy) diverge smoothly as $\alpha$ increases through the $[0.05,0.1]$ band.}
\label{fig:stepsize-sweep}
\end{figure}

% ─────────────────────────────────────────────────────────────────────────────
\section{Gaussian Noise Control at $\beta{=}200$}\label{app:noise-control}
% ─────────────────────────────────────────────────────────────────────────────

At $\beta{=}200$ (SNR$=0.036$, the generation regime), SA achieved high novelty ($0.547$) and diversity ($0.885$) but the mean Hopfield energy was positive ($+1.47$), meaning samples lay outside the attractor manifold. A natural concern is that these metrics could be achieved trivially by high-temperature noise that bears no structural relationship to the stored patterns. We tested this by comparing SA against two Gaussian noise controls on MNIST digit~``3'' ($K{=}100$, $d{=}784$).

\emph{Matched Gaussian}: for each pixel $j$, we estimate the mean $\mu_j$ and variance $\sigma_j^2$ of the 150 SA samples and draw i.i.d.\ $\mathcal{N}(\mu_j, \sigma_j^2)$; this is the strongest possible Gaussian that matches SA's marginal pixel statistics. \emph{Isotropic Gaussian}: we draw $\mathbf{g}\sim\mathcal{N}(\mathbf{0},\mathbf{I}_d)$ and rescale to match the mean norm of SA samples ($\lVert\boldsymbol{\xi}\rVert\approx 2.22$). Both controls produce 150 samples.

The comparison is summarized below (Table~\ref{tab:noise-control}, Fig.~\ref{fig:noise-control}). The key separating metric was \emph{maximum cosine similarity} to the nearest stored pattern (Max-$\cos$): SA retained $0.453$, meaning each sample sat roughly halfway between the stored-pattern manifold ($1.0$) and random directions (${\approx}0.06$ in $\mathbb{R}^{784}$); the matched Gaussian achieved only $0.328$ despite sharing SA's per-pixel moments, and isotropic noise achieved $0.057$ (the random-direction baseline). Hopfield energy separated the methods in the same order: SA ($1.47$) $<$ matched Gaussian ($1.78$) $<$ isotropic ($2.34$). Novelty and diversity, by contrast, were similar between SA and the matched Gaussian ($0.547$ vs.\ $0.672$ and $0.885$ vs.\ $0.878$), confirming the concern that these metrics alone do not distinguish structured generation from noise; the max-cosine and energy metrics do, because SA samples were geometrically closer to stored patterns thanks to the Langevin dynamics being guided by the Hopfield energy gradient, which biased the chain toward the memory manifold even at high temperature. Gaussian noise, lacking this gradient signal, could not achieve the same structural similarity. Visually, SA samples showed faint but recognizable digit-3 structure (curved strokes, characteristic topology), while both Gaussian controls produced uniform static with no discernible spatial structure (Fig.~\ref{fig:noise-control}). $\beta{=}200$ is a high-temperature regime where sample quality is limited: the noise term $\sqrt{2\alpha/\beta}\,\boldsymbol{\epsilon}$ has standard deviation $0.01$ per component, comparable to the gradient step, so the samples are ``blurry-but-recognizable'' rather than crisp; higher-fidelity generation requires operating closer to the transition band ($\beta{\approx}500$--$1000$), trading novelty for structural quality.

\begin{table}[h]
\centering
\caption{Gaussian noise control on MNIST digit ``3'' at $\beta{=}200$ (generation regime). Max-$\cos$ and energy cleanly separate SA from both Gaussian controls: SA samples retain structural similarity to stored patterns that noise cannot replicate. FD$_{\text{diag}}$ is the diagonal pixel-space Fr\'{e}chet distance to stored patterns.}
\label{tab:noise-control}
\small
\begin{tabular}{@{}lccccc@{}}
\toprule
Method & $\mathcal{N}$ $\uparrow$ & $\bar{\mathcal{D}}$ $\uparrow$ & $\bar{E}$ $\downarrow$ & Max-$\cos$ $\uparrow$ & FD$_{\text{diag}}$ $\downarrow$ \\
\midrule
Stored patterns                   & 0.000 & 0.459 & $-0.50$ & 1.000 & ---  \\
\textbf{SA ($\beta{=}200$)}       & 0.547 & 0.885 & $+1.47$ & 0.453 & 2.81 \\
Gaussian (matched $\mu,\sigma^2$) & 0.672 & 0.878 & $+1.78$ & 0.328 & 2.86 \\
Gaussian (isotropic)              & 0.943 & 1.000 & $+2.34$ & 0.057 & 3.91 \\
\bottomrule
\end{tabular}
\end{table}

\begin{figure}[h]
\centering
\includegraphics[width=0.19\textwidth]{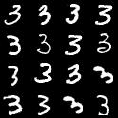}%
\hfill
\includegraphics[width=0.19\textwidth]{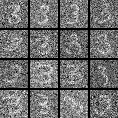}%
\hfill
\includegraphics[width=0.19\textwidth]{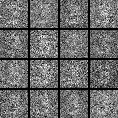}%
\hfill
\includegraphics[width=0.19\textwidth]{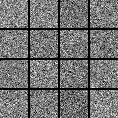}%
\hfill
\IfFileExists{figs/Fig_noise_control_maxcos_hist.pdf}{%
  \includegraphics[width=0.19\textwidth]{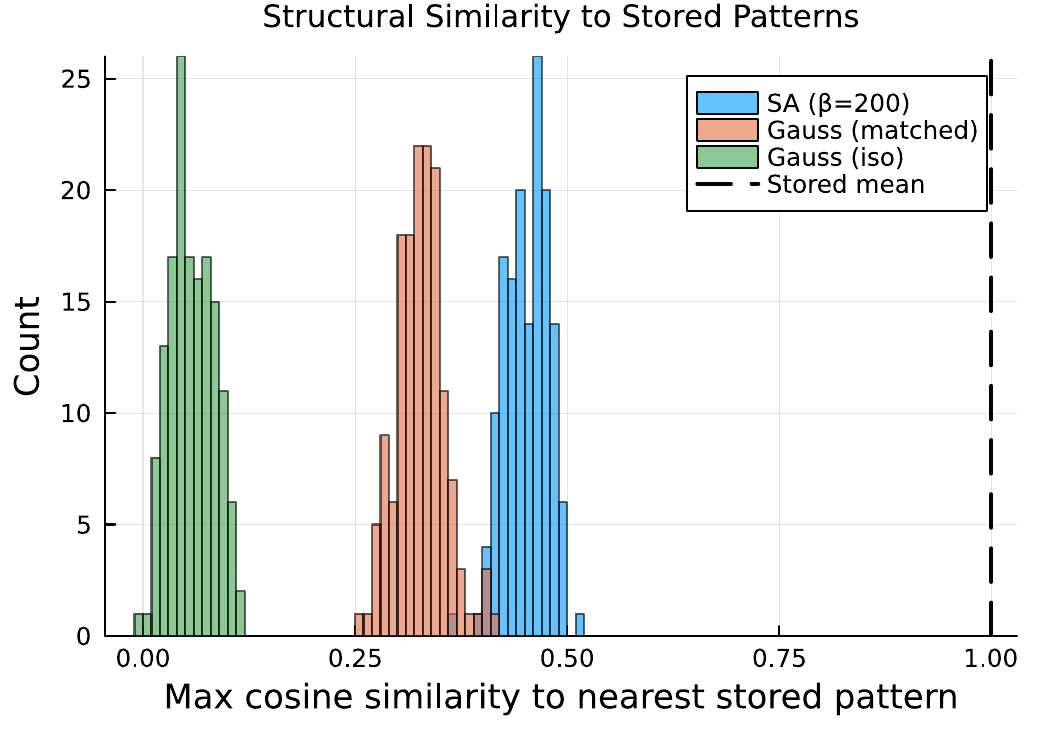}%
}{%
  \framebox[0.19\textwidth]{\rule{0pt}{3cm}\small hist}%
}

\small
\makebox[0.19\textwidth]{\centering (a) Stored}%
\hfill
\makebox[0.19\textwidth]{\centering (b) SA ($\beta{=}200$)}%
\hfill
\makebox[0.19\textwidth]{\centering (c) Gauss (matched)}%
\hfill
\makebox[0.19\textwidth]{\centering (d) Gauss (iso)}%
\hfill
\makebox[0.19\textwidth]{\centering (e) Max-$\cos$ dist.}
\caption{Gaussian noise control. \textbf{(a)}~Stored digit ``3'' patterns. \textbf{(b)}~SA at $\beta{=}200$: noisy but spatially structured (curved strokes visible). \textbf{(c)}~Matched Gaussian (same per-pixel $\mu$ and $\sigma^2$ as SA): uniform static, no digit structure. \textbf{(d)}~Isotropic Gaussian (matched norm): pure noise. \textbf{(e)}~Distribution of max cosine similarity to nearest stored pattern: SA (blue) is clearly shifted right of both Gaussian controls, confirming that the Hopfield energy gradient preserves structural similarity that noise cannot replicate.}
\label{fig:noise-control}
\end{figure}

% ─────────────────────────────────────────────────────────────────────────────
\section{Scaling Experiment: SA vs.\ DDPM}\label{app:scaling}
% ─────────────────────────────────────────────────────────────────────────────

The $K{=}100$ MNIST experiment (Table~\ref{tab:mnist-baselines}) showed that DDPM fails entirely when training data are scarce, but this comparison is arguably unfair to DDPM: diffusion models are designed for large-data regimes. We therefore ran a scaling study at $K\in\{100,\,500,\,1{,}000,\,2{,}000,\,3{,}500\}$ digit-3 images (the maximum available in MNIST) to test whether DDPM catches up as $K$ grows and to characterize how SA behaves under a principled, data-dependent $\beta$ selection. At each~$K$, we set $\beta$ using the entropy inflection criterion of Proposition~\ref{prop:entropy-derivative}: we swept $\beta$ over a grid, computed the attention entropy $H(\beta)=-\sum_k p_k\log p_k$ where $\mathbf{p}=\mathrm{softmax}(\beta\mathbf{X}^\top\boldsymbol{\xi})$ averaged over a probe set, and identified $\beta^*$ at the inflection point (maximum $|dH/d\beta|$); this gave a principled, data-dependent operating point for each $K$ without manual tuning, and for each $K$ we also reported SA at $5\beta^*$ (the retrieval regime) to illustrate the temperature knob. We used the same MLP DDPM architecture (Appendix~\ref{app:ddpm}) with $T{=}200$ diffusion steps at every~$K$, scaling training epochs inversely with $K$ to keep total gradient steps comparable: $5{,}000$ epochs at $K{=}100$ down to $1{,}000$ epochs at $K{=}3{,}500$. The key metric was max-cosine similarity to the nearest stored pattern, which distinguishes structured generation from noise (Table~\ref{tab:scaling}, Fig.~\ref{fig:scaling}).

\begin{table}[h]
\centering
\caption{Scaling experiment on MNIST digit ``3''. $\beta^*$ is set via entropy inflection at each $K$. SA operates at $\beta^*$ (generation) and $5\beta^*$ (retrieval). Max-$\cos$ measures structural similarity to stored patterns (isotropic noise baseline: ${\approx}0.06$).}
\label{tab:scaling}
\small
\begin{tabular}{@{}rrccccccccc@{}}
\toprule
$K$ & $K/d$ & $\beta^*$ & \multicolumn{3}{c}{SA (generation, $\beta{=}\beta^*$)} & \multicolumn{3}{c}{SA (retrieval, $\beta{=}5\beta^*$)} & \multicolumn{2}{c}{DDPM ($T{=}200$)} \\
\cmidrule(lr){4-6}\cmidrule(lr){7-9}\cmidrule(lr){10-11}
 & & & $\mathcal{N}$ & $\bar{\mathcal{D}}$ & max-$\cos$ & $\mathcal{N}$ & $\bar{\mathcal{D}}$ & max-$\cos$ & max-$\cos$ & train (s) \\
\midrule
100  & 0.13 & 9   & 0.874 & 0.993 & 0.126 & 0.763 & 0.969 & 0.237 & 0.061 & 34 \\
500  & 0.64 & 13  & 0.851 & 0.991 & 0.149 & 0.722 & 0.955 & 0.278 & 0.073 & 37 \\
1000 & 1.28 & 15  & 0.839 & 0.989 & 0.161 & 0.697 & 0.949 & 0.303 & 0.078 & 46 \\
2000 & 2.55 & 18  & 0.827 & 0.987 & 0.173 & 0.673 & 0.940 & 0.327 & 0.086 & 66 \\
3500 & 4.46 & 18  & 0.825 & 0.987 & 0.175 & 0.673 & 0.944 & 0.327 & 0.090 & 75 \\
\bottomrule
\end{tabular}
\end{table}

\begin{figure}[h]
\centering
\IfFileExists{figs/Fig_scaling_sa_vs_ddpm.pdf}{%
  \includegraphics[width=0.95\textwidth]{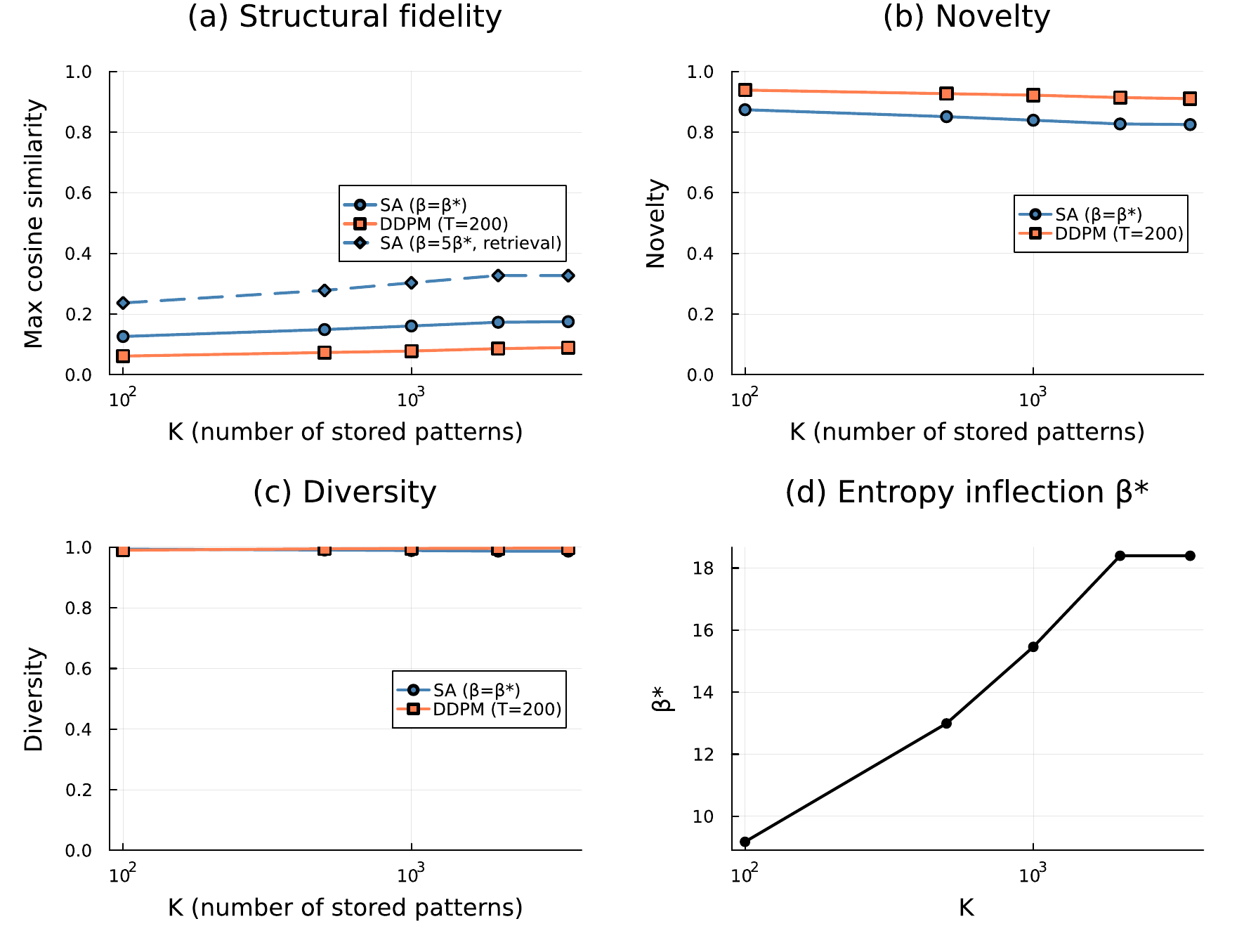}%
}{%
  \framebox[0.95\textwidth]{\rule{0pt}{4cm}\texttt{[Figure will be generated by run\_scaling\_experiment.jl]}}%
}
\caption{Scaling experiment: SA vs.\ DDPM on MNIST digit ``3'' as the number of stored patterns $K$ increases from 100 to 3{,}500. \textbf{(a)}~Max-cosine similarity: SA at both $\beta^*$ (generation) and $5\beta^*$ (retrieval) produces samples with meaningful structural similarity to stored patterns that increases with $K$; DDPM remains near the isotropic noise floor (${\approx}0.06$) at all $K$ tested. \textbf{(b)}~Novelty. \textbf{(c)}~Diversity. \textbf{(d)}~Entropy-inflection $\beta^*$ as a function of $K$.}
\label{fig:scaling}
\end{figure}

Three findings emerged from the scaling sweep. First, DDPM's max-cosine similarity never exceeded $0.09$ across the entire range $K\in[100,\,3{,}500]$, remaining close to the isotropic noise baseline of ${\approx}0.06$; the MLP denoiser could not learn the data distribution even with $3{,}500$ training images in $\mathbb{R}^{784}$. Second, SA's max-cosine similarity grew monotonically with $K$: from $0.126$ at $K{=}100$ to $0.175$ at $K{=}3{,}500$ in the generation regime, and from $0.237$ to $0.327$ in the retrieval regime. More stored patterns provided denser coverage of the data manifold, enabling SA to generate samples structurally closer to the training distribution. Third, the entropy-inflection $\beta^*$ increased with $K$ (from $9$ to $18$), consistent with the theoretical prediction that the transition temperature shifts as memory load grows. SA required no retuning: the entropy inflection provided a principled, automatic $\beta$ at each $K$.

We acknowledge that a convolutional U-Net DDPM operating on 2D images (rather than flattened vectors) would likely perform better, particularly at larger $K$. The comparison here is deliberately controlled: both methods operate on the same flattened, unit-norm representation. The point is not that SA is universally superior to diffusion models, but that SA's training-free score function provides a structural advantage in the low-data regime ($K\lesssim 10^3$) and that this advantage persists as $K$ grows within the range tested.

\section{MALA at Generation Temperature}\label{app:mala-generation}
% ─────────────────────────────────────────────────────────────────────────────

To confirm that the ULA discretization bias was negligible in the generation regime (not just the retrieval regime), we ran the full MALA comparison at $\beta\in\{200,\,500,\,1000\}$ with the same 30-chain protocol on MNIST digit ``3'' ($K{=}100$). The retrieval-regime comparison (Table~\ref{tab:mnist-baselines} at $\beta{=}2000$) had already shown $99.2\%$ acceptance and indistinguishable metrics; the question this section answered was whether the bias remained negligible as $\beta$ moved into the generation regime where the noise scale $\sqrt{2\alpha/\beta}$ is larger and the Langevin proposal is correspondingly more aggressive (Table~\ref{tab:mala-generation}).

\begin{table}[h]
\centering
\caption{ULA (SA) vs.\ MALA across the generation-to-retrieval spectrum. $\Delta$ columns report SA$-$MALA. All differences are negligible ($|\Delta|<0.02$), confirming that ULA bias does not affect any experimental conclusion.}
\label{tab:mala-generation}
\small
\begin{tabular}{@{}rccccccc@{}}
\toprule
$\beta$ & Accept & $\mathcal{N}_{\text{SA}}$ & $\Delta\mathcal{N}$ & $\bar{\mathcal{D}}_{\text{SA}}$ & $\Delta\bar{\mathcal{D}}$ & $\bar{E}_{\text{SA}}$ & $\Delta\bar{E}$ \\
\midrule
200  & 99.2\% & 0.548 & $+0.002$ & 0.885 & $+0.002$ & $+1.467$ & $+0.018$ \\
500  & 99.2\% & 0.375 & $+0.002$ & 0.782 & $+0.002$ & $+0.287$ & $+0.007$ \\
1000 & 99.2\% & 0.251 & $+0.002$ & 0.687 & $+0.002$ & $-0.107$ & $+0.004$ \\
\bottomrule
\end{tabular}
\end{table}

At $\beta{=}200$ (the generation regime), the deltas are: $\Delta\mathcal{N}{=}+0.002$, $\Delta\bar{\mathcal{D}}{=}+0.002$, $\Delta\bar{E}{=}+0.018$, with MALA acceptance rate $99.2\%$. At $\beta{=}1000$ (transition band), all 150 samples from both SA and MALA have negative energy ($E<0$, on the memory manifold), with even smaller deltas ($\Delta\bar{E}{=}+0.004$). The ULA bias monotonically decreases with increasing $\beta$ and is negligible across the entire spectrum at step size $\alpha{=}0.01$.

\section{Full-Data VAE Comparison}\label{app:full-mnist-vae}
% ─────────────────────────────────────────────────────────────────────────────

To test whether SA's advantage over the VAE is an artifact of the limited training data ($K{=}100$), we trained VAEs on substantially larger datasets and compared against SA using the same $K{=}100$ memory.

We trained two classes of VAE: (1)~a \emph{digit-3 VAE} trained on $N{=}1{,}000$ digit-3 images (${10\times}$ the SA memory); and (2)~an \emph{all-MNIST VAE} trained on $N{=}10{,}000$ images across all ten digit classes (${100\times}$ the SA memory). Both use a larger architecture than the $K{=}100$ baseline (encoder: $784\to 512\to 256$; decoder: symmetric), with latent dimensions swept over $\{8,\,16,\,32,\,64\}$ for the digit-3 VAE and $32$ for the all-MNIST VAE. Training followed the same two-phase protocol (AE warmup + KL annealing, 2{,}000 epochs each) with mini-batch training (batch size $256$). For the all-MNIST VAE, we generated $5{,}000$ samples and selected the $150$ with highest max-cosine similarity to the $K{=}100$ digit-3 memory (simulating class-conditional generation without an explicit classifier).

\begin{table}[h]
\centering
\caption{Comparison of SA against VAEs trained on $10{\times}$ to $100{\times}$ more data. All metrics computed against the same $K{=}100$ digit-3 memory matrix. MaxCos: mean max-cosine similarity to nearest stored pattern. On-manif: fraction of samples with Hopfield energy $E<0$.}
\label{tab:full-vae}
\small
\begin{tabular}{@{}llccccc@{}}
\toprule
Method & Train data & MaxCos $\uparrow$ & $\mathcal{N}$ $\uparrow$ & $\bar{\mathcal{D}}$ $\uparrow$ & $\bar{E}$ $\downarrow$ & On-manif \\
\midrule
VAE (lat${=}8$, original)  & $K{=}100$ & $0.775$ & $0.225$ & $0.426$ & $-0.276$ & $100\%$ \\
VAE-D3 (lat${=}8$)         & $N{=}1{,}000$ & $0.668$ & $0.332$ & $0.603$ & $-0.168$ & $94\%$ \\
VAE-D3 (lat${=}32$)        & $N{=}1{,}000$ & $0.743$ & $0.257$ & $0.466$ & $-0.244$ & $100\%$ \\
VAE-D3 (lat${=}64$)        & $N{=}1{,}000$ & $0.762$ & $0.238$ & $0.413$ & $-0.263$ & $100\%$ \\
VAE-all (lat${=}32$, top)  & $N{=}10{,}000$ & $0.830$ & $0.170$ & $0.325$ & $-0.330$ & $100\%$ \\
\midrule
\textbf{SA ($\beta{=}2000$)} & $K{=}100$ & $\mathbf{0.848}$ & $0.152$ & $\mathbf{0.600}$ & $-0.303$ & $100\%$ \\
SA ($\beta{=}200$)          & $K{=}100$ & $0.452$ & $\mathbf{0.548}$ & $\mathbf{0.885}$ & $+1.467$ & $0\%$ \\
\bottomrule
\end{tabular}
\end{table}

Three findings emerged from the full-data VAE comparison (Table~\ref{tab:full-vae}). First, SA at $\beta{=}2000$ (retrieval) achieved higher max-cosine similarity ($0.848$) than the best full-data VAE ($0.830$, all-MNIST with top-$150$ filtering), with $85\%$ higher diversity ($0.600$ vs.\ $0.325$), despite using $100{\times}$ less data and zero training. Second, the original $K{=}100$ VAE (lat${=}8$) achieved higher max-cos ($0.775$) than the $N{=}1{,}000$ VAEs with larger latent dimensions ($0.668$--$0.762$), because training on only $K{=}100$ patterns caused the VAE to memorize them; with more data, the VAE generalized but individual samples were less similar to specific stored patterns. Third, all full-data VAEs produced less diverse outputs ($0.325$--$0.603$) than SA at any temperature ($0.600$--$0.885$). SA's advantage was not an artifact of limited training data: the training-free score function provided a structural advantage in both fidelity and diversity.

% ═══════════════════════════════════════════════════════════════════════════════
\section{Principal Component Analysis}\label{app:pca}
% ═══════════════════════════════════════════════════════════════════════════════

Several experiments in this paper use Principal Component Analysis (PCA) for dimensionality reduction: the protein sequence experiment projects one-hot encodings from $\mathbb{R}^{1420}$ to $\mathbb{R}^{59}$ (Appendix~\ref{app:protein}), the GMM-PCA baseline operates in a 50-dimensional PCA subspace (Appendix~\ref{app:gmm-pca}), and the multi-head SA demo partitions PCA subspaces across heads (Appendix~\ref{app:multihead}). We briefly summarize the construction for completeness.

\paragraph{Setup.}
Given a memory matrix $\mathbf{X} = [\mathbf{m}_1,\dots,\mathbf{m}_K] \in \mathbb{R}^{d \times K}$, define the column mean $\bar{\mathbf{m}} = \frac{1}{K}\sum_{k=1}^{K}\mathbf{m}_k$ and the centered matrix $\tilde{\mathbf{X}} = \mathbf{X} - \bar{\mathbf{m}}\mathbf{1}_K^\top \in \mathbb{R}^{d \times K}$.

\paragraph{Singular value decomposition.}
The economy SVD of the centered matrix is
\begin{equation}\label{eq:pca-svd}
    \tilde{\mathbf{X}} = \mathbf{U}\boldsymbol{\Sigma}\mathbf{V}^\top,
\end{equation}
where $\mathbf{U} \in \mathbb{R}^{d \times r}$ has orthonormal columns (the \emph{principal directions}), $\boldsymbol{\Sigma} = \operatorname{diag}(\sigma_1,\dots,\sigma_r)$ with $\sigma_1 \ge \sigma_2 \ge \cdots \ge \sigma_r > 0$ (the \emph{singular values}), and $\mathbf{V} \in \mathbb{R}^{K \times r}$ has orthonormal columns. Here $r = \operatorname{rank}(\tilde{\mathbf{X}}) \le \min(d, K)$.

\paragraph{Variance interpretation.}
The $j$-th principal component of pattern $\mathbf{m}_k$ is $z_{kj} = \mathbf{u}_j^\top(\mathbf{m}_k - \bar{\mathbf{m}})$. The variance of the $j$-th component across the $K$ patterns is $\sigma_j^2 / K$, and the fraction of total variance captured by the first $p$ components is
\begin{equation}\label{eq:pca-variance}
    \rho(p) = \frac{\sum_{j=1}^{p}\sigma_j^2}{\sum_{j=1}^{r}\sigma_j^2}.
\end{equation}

\paragraph{Projection and reconstruction.}
To reduce dimensionality from $d$ to $p \le r$, define the projection matrix $\mathbf{W}_p = \mathbf{U}_{:,1:p}^\top \in \mathbb{R}^{p \times d}$ using the first $p$ principal directions. The projected representation of pattern $\mathbf{m}_k$ is $\mathbf{z}_k = \mathbf{W}_p(\mathbf{m}_k - \bar{\mathbf{m}}) \in \mathbb{R}^p$, and the approximate reconstruction is $\hat{\mathbf{m}}_k = \bar{\mathbf{m}} + \mathbf{W}_p^\top\mathbf{z}_k$. Among all rank-$p$ linear projections, PCA minimizes the mean squared reconstruction error $\frac{1}{K}\sum_{k=1}^{K}\|\mathbf{m}_k - \hat{\mathbf{m}}_k\|_2^2 = \frac{1}{K}\sum_{j=p+1}^{r}\sigma_j^2$.

\paragraph{Usage in this paper.}
In the protein experiment (Appendix~\ref{app:protein}), we select $p$ such that $\rho(p) \ge 0.95$ (retaining 95\% of variance), yielding $p{=}59$ from $d{=}1420$. After projection, the $p$-dimensional representations are $\ell_2$-normalized to form the memory matrix. In the multi-head SA experiment (Appendix~\ref{app:multihead}), we partition the principal directions across attention heads to assign each head a different variance subspace.

% ═══════════════════════════════════════════════════════════════════════════════
\section{Multi-Head Stochastic Attention}\label{app:multihead}
% ═══════════════════════════════════════════════════════════════════════════════

A natural question is whether SA extends to multi-head attention, where each head operates on a learned low-dimensional projection of the memory. Modern transformers run several attention heads in parallel because no single learned projection can simultaneously capture every relevant aspect of the data; the same intuition suggests that running independent Langevin chains in different subspaces should be more effective than a single chain in the full $d$-dimensional space, especially when the data lives near a low-dimensional manifold. We present a toy demonstration using PCA-partitioned subspaces as a proxy for learned projections, sidestepping the need to actually train projection matrices while preserving the architectural structure. In standard multi-head attention, each head $h \in \{1,\dots,H\}$ has learned projection matrices $\mathbf{W}_K^h, \mathbf{W}_Q^h \in \mathbb{R}^{d_{\mathrm{head}} \times d}$ that map keys and queries into a $d_{\mathrm{head}}$-dimensional subspace. Multi-head SA replaces the deterministic attention output with a Langevin sample from each head's energy:
\begin{equation}\label{eq:multihead-energy}
    E_h(\mathbf{z}) = \tfrac{1}{2}\|\mathbf{z}\|_2^2 - \tfrac{1}{\beta}\log\sum_{k=1}^{K}\exp\bigl(\beta\,(\mathbf{W}_K^h\mathbf{m}_k)^\top\mathbf{z}\bigr),
    \quad \mathbf{z} \in \mathbb{R}^{d_{\mathrm{head}}},
\end{equation}
where the projected memories $\mathbf{W}_K^h\mathbf{m}_k$ serve as the stored patterns in head $h$'s subspace. Independent Langevin chains run on each $E_h$, producing per-head samples $\mathbf{z}^{(h)}$. The outputs are concatenated and projected back to the original space via $\boldsymbol{\xi} = \mathbf{W}_O[\mathbf{z}^{(1)};\dots;\mathbf{z}^{(H)}]$, exactly as in standard multi-head attention.

Without learned projections, we use PCA to define meaningful subspaces. Given $K$ stored patterns with $r = \min(d, K)$ non-zero principal components (Appendix~\ref{app:pca}), we assign PCs $\{(h{-}1) \cdot r/H + 1,\,\dots,\,h \cdot r/H\}$ to head $h$. This gives each head a distinct slice of the data's variance structure: head 1 captures the dominant shape modes, while later heads capture progressively finer variation.

We use $K{=}100$ digit-3 images from MNIST ($d{=}784$), the same memory as all other MNIST experiments. With $r{=}100$ non-zero PCs, each head in an $H{=}4$ configuration receives 25 PCs. The sampling protocol is identical to the main experiments: 30 chains, 5{,}000 steps, 2{,}000 burn-in, thinned every 100 steps, 5 samples per chain (150 total). We also test whether simply increasing the number of stored patterns $K$ at fixed $H{=}1$ achieves the same effect.

\begin{table}[h]
\centering
\caption{Multi-head SA with PCA-partitioned subspaces versus single-head SA at larger memory sizes. All configurations at $\beta{=}200$ (generation regime). Metrics computed against each configuration's own memory. MaxCos: mean maximum cosine similarity to the nearest stored pattern.}
\label{tab:multihead}
\small
\begin{tabular}{@{}llccc@{}}
\toprule
Config & $K$ & $\mathcal{N}$ $\uparrow$ & $\bar{\mathcal{D}}$ $\uparrow$ & MaxCos $\uparrow$ \\
\midrule
SA ($H{=}1$) & 100  & $0.547 \pm 0.002$ & $0.885 \pm 0.002$ & $0.453 \pm 0.002$ \\
SA ($H{=}1$) & 500  & $0.548 \pm 0.003$ & $0.892 \pm 0.003$ & $0.452 \pm 0.003$ \\
SA ($H{=}1$) & 1000 & $0.548 \pm 0.002$ & $0.892 \pm 0.003$ & $0.452 \pm 0.002$ \\
\midrule
SA ($H{=}2$, 50 PCs/head)  & 100 & $0.175 \pm 0.006$ & $0.307 \pm 0.002$ & $0.825 \pm 0.006$ \\
SA ($H{=}4$, 25 PCs/head)  & 100 & $0.187 \pm 0.005$ & $0.291 \pm 0.002$ & $0.815 \pm 0.005$ \\
SA ($H{=}5$, 20 PCs/head)  & 100 & $0.196 \pm 0.005$ & $0.263 \pm 0.002$ & $0.804 \pm 0.005$ \\
\midrule
SA ($H{=}4$, 25 PCs/head) & 500  & $0.391 \pm 0.003$ & $0.475 \pm 0.002$ & $0.609 \pm 0.003$ \\
SA ($H{=}4$, 25 PCs/head) & 1000 & $0.469 \pm 0.004$ & $0.582 \pm 0.003$ & $0.531 \pm 0.004$ \\
\bottomrule
\end{tabular}
\end{table}

The PCA variance explained by each head ($H{=}4$, $K{=}100$) is: head~1 (PCs~1--25): 81.2\%, head~2 (PCs~26--50): 12.6\%, head~3 (PCs~51--75): 4.5\%, head~4 (PCs~76--100): 1.6\%. Head~1 captures the dominant digit shape, while later heads contribute stroke-width variation and fine detail.

Three findings emerged from the multi-head sweep (Table~\ref{tab:multihead}). First, multi-head SA at $K{=}100$ substantially improved fidelity at the same $\beta$: $H{=}4$ achieved max-cos~$= 0.815$, approaching single-head retrieval at $\beta{=}2000$ ($0.848$, Table~\ref{tab:full-vae}), while single-head at $\beta{=}200$ achieved only $0.453$. The mechanism is that each head runs Langevin dynamics in a lower-dimensional subspace ($d_{\mathrm{head}} \ll d$), where the same $\beta$ produces a sharper Boltzmann distribution due to higher signal-to-noise ratio; multi-head attention effectively shifts the retrieval-generation phase boundary. Second, increasing $K$ alone did not improve single-head SA at generation temperature: $K{=}1000$ gave identical metrics to $K{=}100$ (max-cos $= 0.452$ in both cases) because at $\beta{=}200$ the energy landscape is sufficiently flat that additional memories do not sharpen the softmax weights, confirming that the multi-head improvement was a genuine structural effect of dimensionality reduction rather than a consequence of the $K/d$ ratio. Third, the multi-head advantage diminished as $K$ grew per head: at $K{=}1000$ with $H{=}4$ (250 patterns per 25-dimensional head subspace), max-cos dropped to $0.531$, approaching the single-head value, consistent with the paper's capacity analysis (as $K/d_{\mathrm{head}}$ increases, the per-head energy landscape flattens and the dimensionality-reduction benefit erodes). In a learned multi-head architecture the projections would adapt to maintain separation even at large $K$, but the PCA-partitioned proxy cannot.

\end{document}